\title{\cnqrname : Confusion Queried Online Bandit Learning}
\author{
Daniel Barsky \\
Department of Electrical Engineering\\
Technion Institute of Technology\\
Haifa, Israel 3200003 \\
\texttt{danielbr@tx.technion.ac.il} \\
\And
Koby Crammer \\
Department of Electrical Engineering\\
Technion Institute of Technology\\
Haifa, Israel 3200003 \\
\texttt{koby@ee.technion.ac.il} \\
}
\newtheorem{theorem}{Theorem}
\newtheorem{lemma}[theorem]{Lemma}
\newfont{\msym}{msbm10}
\newcommand{\reals}{\mathbb{R}}
\newcommand{\half}{\frac{1}{2}}
\newcommand{\sign}{{\rm sign}}
\newcommand{\paren}[1]{\left({#1}\right)}
\newcommand{\brackets}[1]{\left[{#1}\right]}
\newcommand{\braces}[1]{\left\{{#1}\right\}}
\newcommand{\abs}[1]{\left\vert{#1}\right\vert}
\newcommand{\Exp}[1]{{\rm E}\left[{#1}\right]}
\newcommand{\comdots}{, \ldots ,}
\newcommand{\beq}[1]{\begin{equation}\label{#1}}
\newcommand{\eeq}{\end{equation}}
\newcommand{\beqa}{\begin{eqnarray}}
\newcommand{\eeqa}{\end{eqnarray}}
\newcommand{\figref}[1]{Fig.~\ref{#1}}
\newcommand{\thmref}[1]{Theorem~\ref{#1}}
\newcommand{\appref}[1]{App.~\ref{#1}}
\newcommand{\lemref}[1]{Lemma~\ref{#1}}
\newcommand{\tabref}[1]{Table~\ref{#1}}
\newcommand{\tran}[1]{{#1}^{\top}}
\newcommand{\eqnref}[1]{Eqn.~(\ref{#1})}
\newcommand{\mb}[1]{{\boldsymbol{#1}}}
\newcommand{\vx}{\mathbf{x}}
\newcommand{\vxi}[1]{\vx_{#1}}
\newcommand{\vxin}[2]{\vx_{#1,#2}}
\newcommand{\vxii}{\vxi{t}}
\newcommand{\vphii}[1]{\mathbf{\phi}\left(#1\right)}
\newcommand{\vPhii}[1]{\mathbf{\Phi}\left(#1\right)}
\newcommand{\vm}{\mathbf{m}}
\newcommand{\vmi}[1]{\vm_{#1}}
\newcommand{\vmii}{\vmi{t}}
\newcommand{\vmis}{\vmi{t}^\star}
\newcommand{\vn}{\mathbf{n}}
\newcommand{\vni}[1]{\vn_{#1}}
\newcommand{\vnii}{\vni{t}}
\newcommand{\yi}[1]{y_{#1}}
\newcommand{\yii}{\yi{t}}
\newcommand{\indkt}{k_t}
\newcommand{\indlt}{l_t}
\newcommand{\indst}{m^\star_t}
\newcommand{\indm}[1]{m_{#1}}
\newcommand{\indmt}{m_t}
\newcommand{\indn}[1]{n_{#1}}
\newcommand{\indnt}{n_t}
\newcommand{\indi}[1]{i_{#1}}
\newcommand{\indit}{\indi{t}}
\newcommand{\indj}[1]{j_{#1}}
\newcommand{\indjt}{\indj{t}}
\newcommand{\vu}{\mathbf{u}}
\newcommand{\vut}{\tran{\vu}}
\newcommand{\vuti}[1]{\vut_{#1}}
\newcommand{\vw}{\mb{w}}
\newcommand{\vwi}[1]{\vw_{#1}}
\newcommand{\vwii}{\vwi{t}}
\newcommand{\vwt}{\tran{\vw}}
\newcommand{\va}{\mathbf{a}}
\newcommand{\tvw}{\tilde{\mb{w}}}
\newcommand{\tvwi}[1]{\tvw_{#1}}
\newcommand{\tvwti}[1]{\tran{\tvwi{#1}}}
\newcommand{\tvwii}{\tvwi{t}}
\newcommand{\tvwtii}{\tran{\tvwii}}
\newcommand{\vv}{\mb{v}}
\newcommand{\vvt}{\tran{\vv}}
\newcommand{\vb}{\mb{b}}
\newcommand{\ma}{\mathbf{A}}
\newcommand{\mi}{\mathbf{I}}
\newcommand{\newstufffroma}[1]{}
\newcommand{\newstufffrom}[1]{}
\newcommand{\oldnote}[2]{}
\newcommand{\commentout}[1]{}
\newcommand{\inner}[2]{\left< {#1} , {#2} \right>}
\newcounter {mySubCounter}
\newcommand {\twocoleqn}[4]{
  \setcounter {mySubCounter}{0} %
  \let\OldTheEquation \theequation %
  \renewcommand {\theequation }{\OldTheEquation \alph {mySubCounter}}%
  \noindent \hfill%
  \begin{minipage}{.40\textwidth}
\vspace{-0.6cm}
    \begin{equation}\refstepcounter{mySubCounter}
      #1
    \end {equation}
  \end {minipage}
~~~~~~
  \addtocounter {equation}{ -1}%
  \begin{minipage}{.40\textwidth}
\vspace{-0.6cm}
    \begin{equation}\refstepcounter{mySubCounter}
      #3
    \end{equation}
  \end{minipage}%
  \let\theequation\OldTheEquation
}
\newcommand{\vzero}{\mb{0}}
\newcommand{\algname}{{CNQR}}
\newcommand{\cnqrname}{{CONQUER}}
\newcommand{\vz}{\mb{z}}
\newcommand{\vzi}[1]{\vz_{#1}}
\newcommand{\vzii}{\vzi{t}}
\newcommand{\vzt}{\vz^\top}
\newcommand{\vzti}[1]{\vzt_{#1}}
\newcommand{\vztii}{\vzti{t}}
\newcommand{\deltatk}[1]{\Delta_{t,#1}}
\newcommand{\sigmai}[1]{\sigma_{#1}}
\newcommand{\sigmaii}{\sigmai{t}}
\newcommand{\hsigmai}[1]{\hat{\sigma}_{#1}}
\newcommand{\hsigmaii}{\hsigmai{t}}
\newcommand{\deltas}[1]{\Delta_{#1}^\star}
\newcommand{\deltats}{\deltas{t}}
\newcommand{\hdeltatk}[1]{\hat{\Delta}_{t,#1}}
\newcommand{\hdeltats}{\hat{\Delta}^\star_{t}}
\newcommand{\ai}[1]{A_{#1}}
\newcommand{\aii}{\ai{t}}
\newcommand{\atk}[2]{A_{#2,#1}}
\newcommand{\attk}[1]{\atk{#1}{t}}
\newcommand{\epst}[1]{\epsilon_{#1}}
\newcommand{\epstk}[1]{\epsilon_{t,#1}}
\begin{document}

\maketitle

\begin{abstract}
  We present a new recommendation setting for picking out two items
  from a given set to be highlighted to a user, based on contextual
  input. These two items are presented to a user who chooses one of
  them, possibly stochastically, with a bias that favors the item with the higher
  value. We propose a second-order algorithm framework that members of it use uses {\em relative}
  upper-confidence bounds to trade off exploration and
  exploitation, and some explore via sampling. We analyze one algorithm in this framework in an adversarial setting
  with only mild assumption on the data, and prove a regret bound of
  $O(Q_T + \sqrt{TQ_T\log T} + \sqrt{T}\log T)$, where $T$ is the number of rounds
  and $Q_T$ is the cumulative approximation error of item values using a linear
  model. Experiments with product reviews from $33$ domains show the
  advantage of our methods over algorithms
  designed for related settings, and that UCB based algorithms are inferior to greed or sampling based algorithms.
\end{abstract}

\section{Introduction}
Consider a book recommendation system triggered by users'
actions. Given a book search performed by the user, one can generate
heuristically about a dozen possible items to be presented,
by using author identity, other books in a series, and so on;
Yet, to be effective, only one or two results should be
highlighted to a user. The question is, given a set of possible books,
how to pick which two books should be presented to the user.

In this paper, we introduce confusion queried online bandit learning,
that given a set of items, picks two items to be presented to
the user. We assume the user will choose one of the two items
stochastically based on the items' values (or rewards) for the user.
Thus, if one item has clearly larger value than the other, the user
will likely choose it. However, if values of both items are similar,
the user will choose one at random. This relative
feedback is also used in the dueling bandit
setting~\cite{yue2012k,yue2011beat,yue2009interactively,shivaswamy2011online,shivaswamy2012online}.

We present \cnqrname , a second-order online algorithm framework designed for this
setting, and propose 7 algorithms within the framework. The framework assumes almost
nothing about the data generation process. One algorithm 
uses \emph{relative} upper confidence bounds, for which we show a regret bound
that is $O(Q_T + \sqrt{T(Q_T+\log T)\times \log T})$,
where $T$ is the number of rounds and $Q_T$ is the approximation error of
rewards using a linear model.

We evaluate the algorithms using product reviews from Amazon over $33$
product domains. On each iteration, a few (between 5 and 20)
possible items are chosen and presented to each algorithm, which picks
two of them based on their reviews. The algorithm then receives a
stochastic binary preference feedback, with a bias towards the item with
the higher star rating, between $1$ and $5$ stars. Algorithms were
evaluated on a test set. The \cnqrname\ algorithms show good performance on
these tasks, generating better results than two baselines designed for a related setting.

Online learning with relative feedback is also useful in many applications
where feedback is received from a human annotator. In such cases,
producing a full annotation of each instance can be time-consuming and
prone to mistakes. Producing a comparison between only two
options can be much quicker and easier. One such setting is the
sentence dependency parsing problem, where generating a full parse tree
is much harder than deciding between two possible sources for a single word
in a sentence. We also describe and analyze an algorithm for
learning to parse sentences in natural language using light feedback
regarding dependency parse trees~\cite{MejerCr10}. In this setting,
the algorithm can query for feedback on only a single
edge, while it's evaluated on the entire parse tree.

\section{Problem Setting}
Online learning is performed in rounds or iterations. On iteration
$t$, the algorithm receives a set of $K$
\footnote{The algorithm and
  analysis can be extended to the case where there are a different number of
  items on each round, that is $K=K(t)$. We use a fixed value
  $K$ for simplicity.}
  items. The algorithm then chooses (indices of) two items, $\indmt,
\indnt \in \braces{1 \comdots K} \triangleq \mathcal{K}$, and receives
a {\em stochastic} binary feedback, $\yii\in\braces{+1, -1}$,
indicating which of its two choices is better. A feedback of $\yii=+1$
indicates that item $\indmt$ has higher value or reward than item
$\indnt$, and vice-versa for $\yii=-1$. Additionally, the algorithm
receives reward for the two items it picked.

We assume that the reward function is bounded, $\vert
r\paren{t,m} \vert \leq 1 ~ \forall m\in\mathcal{K}$, and $t=1,2,\dots
T$. The reward value is not known or given to the algorithm, which is
only evaluated by it. The algorithm then proceeds to the next round.
Since we make almost no assumptions about the reward function, and it
is never revealed to the algorithm directly, our algorithm operates
in a fully adversarial setting - our only assumption is that
the reward function can be approximated using a linear function, for if
it's not the case, then there is no hope for any linear model based
algorithm. The reward in our setting is deterministic, and all stochasticity
in our model is assumed to be in the feedback process.

The feedback in our setting is assumed to be Bernoulli stochastic with a parameter
that is proportional to the difference between the rewards of our two
selected items, that is for $y\in\braces{\pm 1}$,
\begin{align}
    \Pr\paren{\yii = y} = %
    \frac{1+y\cdot\half\paren{r\paren{t, \indmt} - r\paren{t, \indnt}}}{2}  %
\end{align}
Note that under this assumption, if $r\paren{t, \indmt}$ and
$r\paren{t, \indnt}$ are close to each other, the feedback $\yii$ will be either $+1$
or $-1$ with equal probabilities, whereas if the rewards are
significantly different, the feedback will be biased towards the
item with the higher reward.

Another interpretation for this setting is the feedback being (mostly)
deterministic, but the {\em feedback provider} observing noisy rewards:
instead of observing $r\paren{t,\indmt}, r\paren{t,\indnt}$,
she is shown $\hat{r}\paren{t,\indmt}, \hat{r}\paren{t,\indnt}$, which are
given as follows:
\[
\hat{r}\paren{t,m} = \begin{cases}
+1  &   w.p. \frac{1+r\paren{t,m}}{2} \\
-1  &   w.p. \frac{1+r\paren{t,m}}{2}
\end{cases}
\]
The feedback provider returns $\yii = +1$ if
$\hat{r}\paren{t,\indmt} > \hat{r}\paren{t,\indnt}$,
$\yii = -1$ if $\hat{r}\paren{t,\indmt} < \hat{r}\paren{t,\indnt}$, and ties are broken
arbitrarily with equal probabilities. Regret is now calculated using
\emph{expected} noisy rewards. Proof of the equivalence appears in \appref{sec:noisy_reward}
in the supp. material.

Let $r_t$ denote the instantaneous regret at round $t$, which we
define as the difference between the reward of the best item,
and the reward the algorithm received:
\begin{align}
    r_t &= \underset{m\in\mathcal{K}}{\max} ~r\paren{s,m}  -
    \max\braces{ r\paren{s,\indm{s}} ,  r\paren{s,\indn{s}} }~.
\label{regret}
\end{align}
The goal of the algorithm is to minimize its cumulative regret,
$R_t = \sum_{s=1}^{t} r_s$. We denote the
optimal item by $\indst = \arg\max_{m \in \mathcal{K}} r\paren{t,m}
$. The regret is with respect to the better of the two items since we
assume a user is looking for a single item, as long it has very high
value - one very high-valued item and one very low-valued
item are a better combination that two mid-value items, even
though the latter might have higher average value.

We focus on linear models. For each item $\vxin{t}{m}$,
let $\vPhii{\vxin{t}{m}}=\vPhii{\vxii,m}\in\reals^{D}$ denote a feature vector
representing the m$th$ item in the set $\vxii$. We assume that the
feature vector is bounded and of unit Euclidean norm, that is,
$\Vert \vPhii{\vxii, m} \Vert = 1$, for all $t = 1, 2, \dots ~,~
m\in\mathcal{K}$. We wish to approximate the reward function
using a linear function, for some vector $\vu\in\reals^{D}$,
\(
  \vut \vPhii{\vxii, m} ~.
\)
We assume the dot product between the vector $\vu$ and the
feature vector of \emph{any} incoming instance is bounded by 1
in absolute value, meaning $\vert\vut\vPhii{\vxii, m} \vert \leq
1$ for all $m\in\mathcal{K},t$.

\section{Related Work}
We compare our framework to similar relevant methods in terms of regret formulation,
feedback, assumptions and more in \tabref{table_regret_summary} in the supp. material.
In addition, \tabref{table_summary} in the supp. material
shows our method side-by-side with similar contextual bandits methods with partial feedback.
In this section, we outline previous works similar or relevant to ours.

The dueling feedback model was initially explored in the bandits
setting, where several explore-then-exploit solutions were proposed ~\cite{yue2012k, yue2011beat}.
The regret in these settings is more general than ours,
since they do not assume any reward for a specific arm - only comparison
results between two arms, their regret being proportional to  the probability
that one arm will beat another in such a comparison. If we assume that each arm
has a reward associated with it, and that the probability of an arm defeating
another arm is proportional to the difference in rewards between the two arms,
then the \emph{weak regret} defined there is equivalent to our
regret formulation. The \emph{strong regret} used there
is stronger than ours.
An algorithm for interactively training information
retrieval was proposed for the online
learning as well \cite{yue2009interactively}.
Another similar setting is the online learning with
preference feedback
\cite{shivaswamy2011online,shivaswamy2012online}.
A comprehensive survey on bandit learning was recently published \cite{DBLP:journals/ftml/BubeckC12}.
All this line
differs from our, as we consider additional side information, or
context.

There has been a considerable amount of work on contextual online prediction
with light feedback. The Banditron algorithm~\cite{kakade2008efficient}
receives binary true-false feedback for a single prediction, and maintains a
$O\paren{T^\frac{2}{3}}$ regret bound.
The Banditron algorithm was extended in few ways~\cite{wang2010potential,valizadegan2011learning}. Confidit~\cite{DBLP:journals/ml/CrammerG13}
operates in a similar setting to Banditron, but assumes a specific stochastic label generating
mechanism, and achieves $O\paren{\sqrt{T} \log T}$ regret.  Our work
builds on this, but has few major differences: First, Confidit
uses binary (true-false) feedback on a single prediction, while
we use binary relative (dueling) feedback.
Second, they assume a specific stochastic label generation process, while we do not.
This assumption is expressed in the regret, where
Confidit assumes stochastic regret, whereas
we use a general and deterministic regret formulation.
Third, their analysis is on the expected zero-one
reward, while ours is a high-probability bound over any bounded
reward. Fourth, their feedback is deterministic, while ours is allowed
to be noisy (or stochastic). Finally, the multiclass categorization
problem they consider is a special case of our setting, the
reward of a single item being $1$, and the reward of the rest being $0$.
We stress that most if not all
previous works on UCB construct {\em absolute} confidence bounds
based on the estimated quantities, while we construct both absolute and {\em relative}
confidence bounds between two elements (see \eqref{eps}).

A similar linear model to
ours was used in a setting with true-false-don't know feedback (the KWIK
setting) on a single prediction~\cite{walsh2009exploring}.%
This work was extended and showed some improved results~\cite{ngo2013upper}.
Newtron~\cite{hazan2011newtron} is a second order descent
method for the online multiclass bandit setting, for which there is a bound on
the \emph{log-loss}, unlike the direct regret formulation we
used. A similar solution was used
for binary classification from single and multiple teachers~\cite{dekel2012selective}, but with
binary (true-false) feedback. A similar algorithm was also used in the
ordered ranking problem, with feedback indicating the intersection
between the predicted and optimal ordered results~\cite{gentile2012multilabel}.
Their regret formulation is also general, yet it takes into
account the number of labels output (which is constant in our case), and the order
in which the labels were output, which we do not care about. In addition,
their algorithm requires several parameters (as opposed to our single parameter),
and makes several assumptions which we do not.

Using confidence bounds to trade off between
exploration and exploitation has been proposed~\cite{auer2003using} and later
extended~\cite{dani2008stochastic}. However, their work uses a
confidence bound in a single prediction to evaluate the accuracy of a
prediction, while we use similar methods to evaluate the possible
confusion between two possible predictions.
Multiclass online learning with bandits feedback can be considered a
bandit problem with side information. An epoch-greedy algorithm for
contextual multi-armed bandits has been introduced~\cite{langford2007epoch},
with a regret bound of $O\paren{\sqrt{T}\log T}$.

\section{Confusion Queried Online Bandit Learning}
We now describe an online, second order algorithm framework for the
recommendation setting described above. This framework will define
a group of algorithms, which differ only in their method of selecting
the two items to be displayed $\indmt, \indnt$, and are identical otherwise.
\begin{wrapfigure}{r}{0.49\textwidth}
	\vspace{-0.3cm}
	\begin{minipage}{0.49\textwidth}
		\begin{algorithm}[H]
			\caption{\cnqrname\ - Framework Outline}
			\label{conquer_general}
			\begin{algorithmic}[1]
				\REQUIRE $\delta \in \paren{0, 1}$ (used in \eqref{eps} and \eqref{eta})
				\STATE Initialize $\vwi{0} = \vzero \in \reals^{D}, \ai{0} = \mi_{D\times D}$
				\FOR {$t = 1$ to $T$}
				\STATE Receive set of items $\vxii \in \mathcal{X}^K$
				\STATE Project $\tvwi{t-1} = \arg\min~d_t\paren{\vw, \vwi{t-1}}$ \\
				Subject to $\vert\vwt\vPhii{\vxii, m} \vert \leq 1 ~ \forall m \in \mathcal{K}$
                \STATE Set $\indmt = \arg\max_{m\in\mathcal{K}}\hdeltatk{m}$
                \STATE Set $\indnt$ according to \tabref{conquer_algs}
				\STATE Output $\indmt, \indnt$
				\STATE Receive feedback $\yii \in \braces{ \pm 1 }$
				\STATE Set $\vzii = \half \yii \cdot \paren{\vPhii{\vxii, \indmt} - \vPhii{\vxii, \indnt}}$
				\STATE Update $\aii = \ai{t-1} + \vzii \vztii$
				\STATE Update $\vwii = \aii^{-1}\paren{\ai{t-1} \tvwi{t-1} + \vzii}$
				\ENDFOR
				\ENSURE $\tvwi{T}$
			\end{algorithmic}
		\end{algorithm}
	\end{minipage}
\end{wrapfigure}
Every algorithm in our framework maintains a
linear model, $\vwii\in\reals^{D}$, which it uses to estimate the
reward of each item.
In addition, our algorithms maintain a
positive definite (PD) matrix, $\aii \in \reals^{D\times D}$, used sometimes to
estimate the confidence in the score of an item or the confusion between two possible items
from a given set of items $\vxii$. We update $\vwii$ and $\aii$ using
the standard second-order update rule, using the vector
$\vzii = \half \yii \cdot \paren{\vPhii{\vxii,\indmt} - \vPhii{\vxii, \indnt}}$
as the update vector, where $\yii\in\braces{\pm 1}$ is the feedback
received by the algorithm. Let us define the following Mahalanobis distance between
two vectors $\va, \vb\in\reals^{D}$, $d_{t}\paren{\va,\vb}=\half\paren{\va-\vb}^\top\aii \paren{\va-\vb}$,
and let $\tvwi{t-1}\in\reals^{D}$ denote an orthogonal projection of
$\vwi{t-1}$ w.r.t. the above-mentioned distance that satisfies
$\vert \tvwi{t-1} \vPhii{\vxii, m} \vert \leq 1~\forall m \in \mathcal{K}$.
This projection stage is required for our regret analysis. Let
\(
    \hdeltatk{m} = \tvwti{t-1} \vPhii{\vxii, m}~,
\)
denote the score of item $m$ under our model. Since the reward at round $t$
is the maximum of the rewards of our two predictions, we are OK if we output
$\indst = \arg\max_{m \in \braces{1 \comdots K}} \deltatk{m}$
as one of our predictions.

Let $\epstk{m}$ denote the amount of confidence we have in the score
of item $m$ according to our model,
\(
    \epstk{m}^2 \!=\! \eta_t \!\times\! \tran{ \vPhii{\vxii, m}} \!\ai{t-1}^{-1}\! \vPhii{\vxii, m}.
\)
A higher value of $\epstk{m}$ implies less confidence in the value of $\hdeltatk{m}$.
Let
$\epstk{m, n}$ denote the amount of our confusion between items $m,
n \in \braces{1 .. K}$ in the set $\vxii$:
\begin{align}
    \epstk{m, n}^2 &= \eta_t \times \tran{ \paren{ \vPhii{\vxii, m} -
        \vPhii{\vxii, n}} } \ai{t-1}^{-1} \paren{ \vPhii{\vxii, m} - \vPhii{\vxii, n}} \label{eps} \\
        \eta_t &= d_{0}\paren{\vu, \vzero} \!+\! 2\sum_{s=1}^{t} q_s \!+\! 2\sum_{s=1}^{t-1} \vzti{s}\ai{s-1}^{-1}\vzi{s} \!+\!36\ln\frac{t+4}{\delta} \label{eta}
\end{align}
Here, $q_s$ is the approximation error of the reward function using a
linear model at time $t$, and $\delta$ is a parameter of the algorithm,
indicating with what probability the cumulative regret is bounded by the
regret bound. This form of $\eta_t$ is required only
for the analysis, and is not practical. In practice, we set
$\eta_t = \eta$ to be a constant value.

On iteration $t$, each of our algorithms receives a set $\vxii$, and starts by projecting
its current model vector, $\vwi{t-1}$, onto the set mentioned above.
The algorithm then pick two items to be displayed. Once the two items $\indmt, \indnt$
are selected by the algorithm,
it receives the relative stochastic binary feedback $\yii \in \braces{ \pm 1 }$
and performs a standard second order update using the difference vector
$\vzii$. We call the algorithm framework {\em \cnqrname} for CONfusion QUERied online bandit
learning. Below we also use \algname\ for a shorter abbreviation. Algorithm~\ref{conquer_general} describes our framework formally. In addition,
\tabref{conquer_algs} shows the method of selecting $\indnt$ in 3 algorithms
we propose within our framework.

We will provide regret analysis for the \algname -GNC  algorithm.
In this algorithm, the first item $\indmt$ is chosen as the best item according to the model at time $t$, $\indmt
= \arg\max_{m \in \mathcal{K}} \hdeltatk{m}$, and the second item is the
"optimistically best" item (that's different from $\indmt$). In other
words, we define relative upper-confidence scores,
$\beta\paren{n} = \hdeltatk{n} - \hdeltatk{\indmt} + \epstk{n, \indmt}$.

Next, we output the best item in terms of
$\beta\paren{n}$ that is different from our first choice, $\indnt
= \arg\max_{n \in \mathcal{K}/\braces{\indmt}} \beta\paren{n}$. We
note that it is a mixture of first-order (as $\indmt$ is just the
best according to the model) and upper confidence bound (as
$\indnt$ is chosen using a confidence bound)
strategies. Additionally, these confidence bounds are {\em relative}
to the best action $\indmt$, and not absolute as done
in general.
Finally, we allow \algname -GNC {\em not} to pick a
second item, if it is certain that the first item picked is the
best one, even in face of uncertainty, that is if
$\beta(\indnt)<0$.  Conceptually, if the difference between the
reward of the two item is higher than the confidence interval,
then we are sure the first item is better, and the second one
can be ignored.  In this case no update will be performed, as
there is no relative feedback.

\paragraph{Computing the Projection: }

The algorithm requires computing an orthogonal projection problem,
which means minimizing a quadratic function subject to (multiple)
linear constraints,
\(
    \tvwi{t-1}  \!=\! {\arg\min}_{\braces{\vw | \vert\vwt\vPhii{\vxii, m} \vert \leq 1 ~ \forall m \in \mathcal{K}}}~d_t\paren{\vw, \vwi{t-1}}~.
\)
This problem is convex, yet we could not find a closed form solution.
We propose to solve it using a sequential algorithm which projects on
a single constraint~\cite{CensorZe97}. A detailed description of this
method can be found in~\appref{sec:sup_dependency_parsing}.

\begin{wrapfigure}{r}{0.55\textwidth}
\vspace{-0.4cm}
\begin{minipage}{0.55\textwidth}
\begin{center}
    \begin{tabular}{| >{\small}l | >{\small}l | } \hline
    \textbf{Name}               & \textbf{$\indnt$ Selection}                                                           \\ \hline\hline
    \textit{Top Two Greedy}     & Second best in terms of score                                                         \\
    \textit{(\algname  -TTG)}   & $\arg\max_{n\in\mathcal{K}/\braces{\indmt}}\hdeltatk{n}$                              \\ \hline
    \textit{Greedy + Random}    & Random item from the remaining set                                                    \\
    \textit{(\algname  -GNR)}   & $\text{Random}\braces{\mathcal{K}/\braces{\indmt}}$                                   \\ \hline
    \textit{Greedy + UCB}       & Best in terms of UCB                                                                  \\
    \textit{(\algname -GNU)}    & $\arg\max_{n\in\mathcal{K}/\braces{\indmt}}\hdeltatk{n}+\epstk{n}$                    \\ \hline
    \textit{Greedy +}           & Best in terms of \textit{relative} UCB                                                \\
    \textit{Confusion}          & $\arg\max_{n\in\mathcal{K}/\braces{\indmt}}\hdeltatk{n}+\epstk{\indmt, n}$            \\
    \textit{(\algname -GNC)}    & (doesn't query or update if                                                           \\
    ~                           & $\hdeltatk{\indmt}-\hdeltatk{\indnt} > \epstk{\indmt, \indnt}$)                       \\ \hline
    \end{tabular}
    \caption{Algorithms within the \cnqrname\  framework}
    \label{conquer_algs}
\end{center}
\end{minipage}
\end{wrapfigure}

\paragraph{Time and Space: } We conclude this section by noting that
each step of the algorithm requires $O(D^2 + DK)$, where $O(D^2)$ is
needed to compute the inverse (and the projection) and the update of
the matrix $\aii$, and $O(DK)$ to compute the output and update the
parameters $\vwii$.  Note also the algorithm can also be run in dual
variables (i.e., in a RKHS). This has a twofold implication: (a) The
resulting reward model (2) can be made highly nonlinear in the
features, and (b) the running time per round can be made quadratic in
the number of rounds so far.

In the experiments reported below, we used a version of the algorithm
that maintains and manipulates a diagonal matrix $\ma$ instead of a
full one. All the steps of the algorithm remain the same, except the
update $\aii = \ai{t-1} + \vzii \vztii$ of line 17, that is replaced
with updating only the diagonal elements, $(\aii)_{r,r} =
(\ai{t-1})_{r,r} + (\vzii)_{r}^2$. The running time is reduced now
to $O(KD)$ (the projection depends also on the number of
iterations). The space needed is now $D$ for both $\vw$ and $\ma$.

\section{Regret Analysis for \algname -GNC}
We have no assumption on the data generation process, that is, how
items $\vxii$ were generated, and how the reward function $r\paren{t,m}$ is
defined. Yet, we are approximating the latter with a
linear function of the former (or its features). If the reward
function is far from being linear, then there is no hope for the
algorithm to work well. We thus quantify the approximation error, and
define
\(
    q_t = \underset{m\in\mathcal{K}}{\max} \left| r\paren{t,m} - \vut \vPhii{\vxii, m} \right|~,
\)
and let $Q_t = \sum_{s=1}^{t} q_s$ denote the cumulative approximation
error. In the analysis below, we compare the performance of our algorithm to
the performance of any linear model $\vu$. We therefore denote by
\(%
    \deltatk{m}=\vut\vPhii{\vxii, m},
\) %
the approximate reward of item $\vxi{t,m}$. Thus, the label maximizing the approximate reward is,
\(%
    \indst=\arg\max_{m\in\mathcal{K}}~\vut\vPhii{\vxii, m},~
\)%
with associated approximate reward
\(%
    \deltats=\vut\vPhii{\vxii, \indst} ~.
\)%

We now compute a bound on the regret $R_t$ defined in
\eqref{regret}. The regret is a sum of two terms: approximation error
(due to approximating the reward function with a linear model) and
estimation regret (due to stochastic feedback and online learning).
The approximation error measures how well our (linear) model is, and
is not algorithm dependent.

First, we bound the instantaneous regret by a sum of the
approximation error $q_t$ and the confusion coefficient for that round, under the assumption that
$\left|\paren{\hdeltatk{\indnt}-\hdeltatk{\indmt}}-\paren{\deltatk{\indnt}-\deltatk{\indmt}}\right|$
is bounded by the confusion coefficient, for any $t, \indmt$ and
$\indnt$ (\lemref{lemma_r_t_q_t}). Then, we   show that this
assumption holds with high probability (\lemref{lemma_(wtx-utx)2},
\lemref{lemma_E(wtx-utx)_q_t}, \lemref{lemma_delta_deltahat}; all
given in the \appref{app:lemmas} in the supplementary
material). Finally, we bound in \thmref{thm_nonlinear} the
cumulative regret by $O\paren{Q_T + \sqrt{T+Q_T}
  \log T}$. The last term is small if the approximation error $Q_T$ is
small. Our proof builds on results developed of \cite{DBLP:journals/ml/CrammerG13}.
The main result of this section is \thmref{thm_nonlinear}, which its proof appears in \appref{app:lemmas} and \appref{app:proof_thm_nonlinear} in the
supp. material.
\begin{theorem}\label{thm_nonlinear}
In the setting described so far, the cumulative regret $R_t$ of \algname -GNC satisfy
\(
    R_t = 2Q_T +  \sqrt{2T}\paren{\sqrt{\paren{(2Q_T + A)
    B}} + B},
\)
with probability at least $1-\delta$ uniformly over the time horizon $T$,
where $A = d_0\paren{\vu,\vzero}+36\ln\frac{T+4}{\delta}$ and $B = 2DK\ln\paren{1+\frac{T}{dK}}$.

\end{theorem}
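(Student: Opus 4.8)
The plan is to combine a per-round regret decomposition with a self-normalized concentration bound and an elliptical-potential (log-determinant) estimate. I work throughout on the \emph{good event} $\mathcal{G}$ that for every $t\le T$ and every pair $m,n\in\mathcal{K}$ the estimated relative gap tracks the linear one within the relative confidence radius, $\abs{\paren{\hdeltatk{n}-\hdeltatk{m}}-\paren{\deltatk{n}-\deltatk{m}}}\le\epstk{m,n}$; that $\mathcal{G}$ holds with probability at least $1-\delta$ simultaneously over the horizon is the content of \lemref{lemma_(wtx-utx)2}, \lemref{lemma_E(wtx-utx)_q_t} and \lemref{lemma_delta_deltahat}, so for the theorem I may assume $\mathcal{G}$ and argue deterministically.

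First I would reduce the instantaneous regret to approximation error plus a confusion coefficient (\lemref{lemma_r_t_q_t}). Passing from true rewards to the linear proxy costs one $q_t$ on the upper side and one on the lower side, giving $r_t\le 2q_t+\paren{\deltats-\deltatk{\indmt}}$ with $\deltats$ the proxy-optimal reward. On $\mathcal{G}$, taking $p=\indst$ gives $\deltats-\deltatk{\indmt}\le\paren{\hdeltatk{\indst}-\hdeltatk{\indmt}}+\epstk{\indmt,\indst}=\beta\paren{\indst}$; since $\indnt$ maximizes $\beta$ over $\mathcal{K}\setminus\braces{\indmt}$ and $\indmt$ maximizes $\hat\Delta$, I get $\beta\paren{\indst}\le\beta\paren{\indnt}=\hdeltatk{\indnt}-\hdeltatk{\indmt}+\epstk{\indmt,\indnt}\le\epstk{\indmt,\indnt}$ (the last step because $\hdeltatk{\indnt}\le\hdeltatk{\indmt}$). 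Thus $r_t\le 2q_t+\epstk{\indmt,\indnt}$, and the abstention case $\beta\paren{\indnt}<0$ only forces $\deltats-\deltatk{\indmt}\le0$, so the bound is unaffected. Summation gives $R_T\le 2Q_T+\sum_t\epstk{\indmt,\indnt}$.

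Next I sum the confusion coefficients and close the bound. Since $\vzii=\half\yii\paren{\vPhii{\vxii,\indmt}-\vPhii{\vxii,\indnt}}$ and $\yii^2=1$, the quadratic form in \eqref{eps} equals $4\,\vztii\ai{t-1}^{-1}\vzii$, so $\epstk{\indmt,\indnt}=2\sqrt{\eta_t}\sqrt{\vztii\ai{t-1}^{-1}\vzii}$. Writing $S_T=\sum_t\vztii\ai{t-1}^{-1}\vzii$, monotonicity of $\eta_t$ in $t$ and Cauchy--Schwarz yield $\sum_t\epstk{\indmt,\indnt}\le 2\sqrt{\eta_T}\sqrt{T}\sqrt{S_T}$. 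The potential inequality applied to $\aii=\ai{t-1}+\vzii\vztii$ with $\Vert\vzii\Vert\le1$ bounds $S_T\le\ln\det\ai{T}\le B/2$, which is where $B=2DK\ln\paren{1+\tfrac{T}{dK}}$ appears; moreover $\eta_T$ in \eqref{eta} contains precisely the term $2\sum_{s<T}\vzti{s}\ai{s-1}^{-1}\vzi{s}\le B$ besides $A=\dta{\vu}{\vzero}{0}+36\ln\frac{T+4}{\delta}$ and $2Q_T$, so $\eta_T\le A+2Q_T+B$. Substituting, $\sum_t\epstk{\indmt,\indnt}\le 2\sqrt{\paren{A+2Q_T+B}\,T\,(B/2)}=\sqrt{2TB\paren{A+2Q_T+B}}$, and the split $\sqrt{u+v}\le\sqrt{u}+\sqrt{v}$ turns this into $\sqrt{2T}\paren{\sqrt{\paren{2Q_T+A}B}+B}$; adding the $2Q_T$ from the first step reproduces the claim.

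The main obstacle is establishing $\mathcal{G}$ with the exact width $\epstk{m,n}$ uniformly in $t$, i.e. the self-normalized martingale argument adapted from \cite{DBLP:journals/ml/CrammerG13}. The difficulty is that $\yii$ is Bernoulli with mean $\half\paren{r\paren{t,\indmt}-r\paren{t,\indnt}}$, which the linear model matches only up to $q_t$, so the approximation error enters the martingale drift as a bias --- this is exactly why the cumulative term $2\sum_s q_s$ sits inside $\eta_t$ in \eqref{eta}. Turning a fixed-$t$ concentration inequality into a time-uniform statement requires the $36\ln\frac{t+4}{\delta}$ term together with a peeling/union bound over $t$, and the projection $\tvwi{t-1}$ (which enforces $\abs{\tvwti{t-1}\vPhii{\vxii,m}}\le1$ and hence bounded per-round quantities) must be shown compatible with that concentration. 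Pinning down these constants and the uniformity is the delicate part; the remaining steps are routine algebra.
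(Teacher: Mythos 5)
Your proposal is correct and follows essentially the same route as the paper's own proof: the relative-UCB chain $\deltats-\deltatk{\indmt}\le\beta(\indst)\le\beta(\indnt)\le\epstk{\indmt,\indnt}$ on the good event (the paper's \lemref{lemma_r_t_q_t}), the good event itself deferred to \lemref{lemma_(wtx-utx)2}--\lemref{lemma_delta_deltahat}, and then Cauchy--Schwarz plus the log-determinant potential to sum the confusion terms. The only quibble is your step $S_T=\sum_t\vztii\ai{t-1}^{-1}\vzii\le\ln\det\ai{T}$: the potential lemma directly controls $\sum_t\vztii\ai{t}^{-1}\vzii$, and the paper inserts the extra inequality $\vztii\ai{t-1}^{-1}\vzii\le 2\vztii\aii^{-1}\vzii$ to bridge the two, which costs a factor of $2$ you have silently absorbed --- this affects only the absolute constant (at the same level of looseness as the paper's own bookkeeping), not the structure or validity of the argument.
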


\begin{wrapfigure}{r}{0.5\textwidth}
	\vspace{-1.5cm}
	\begin{minipage}{0.5\textwidth}
		\begin{center}
			\includegraphics[width=1\columnwidth]{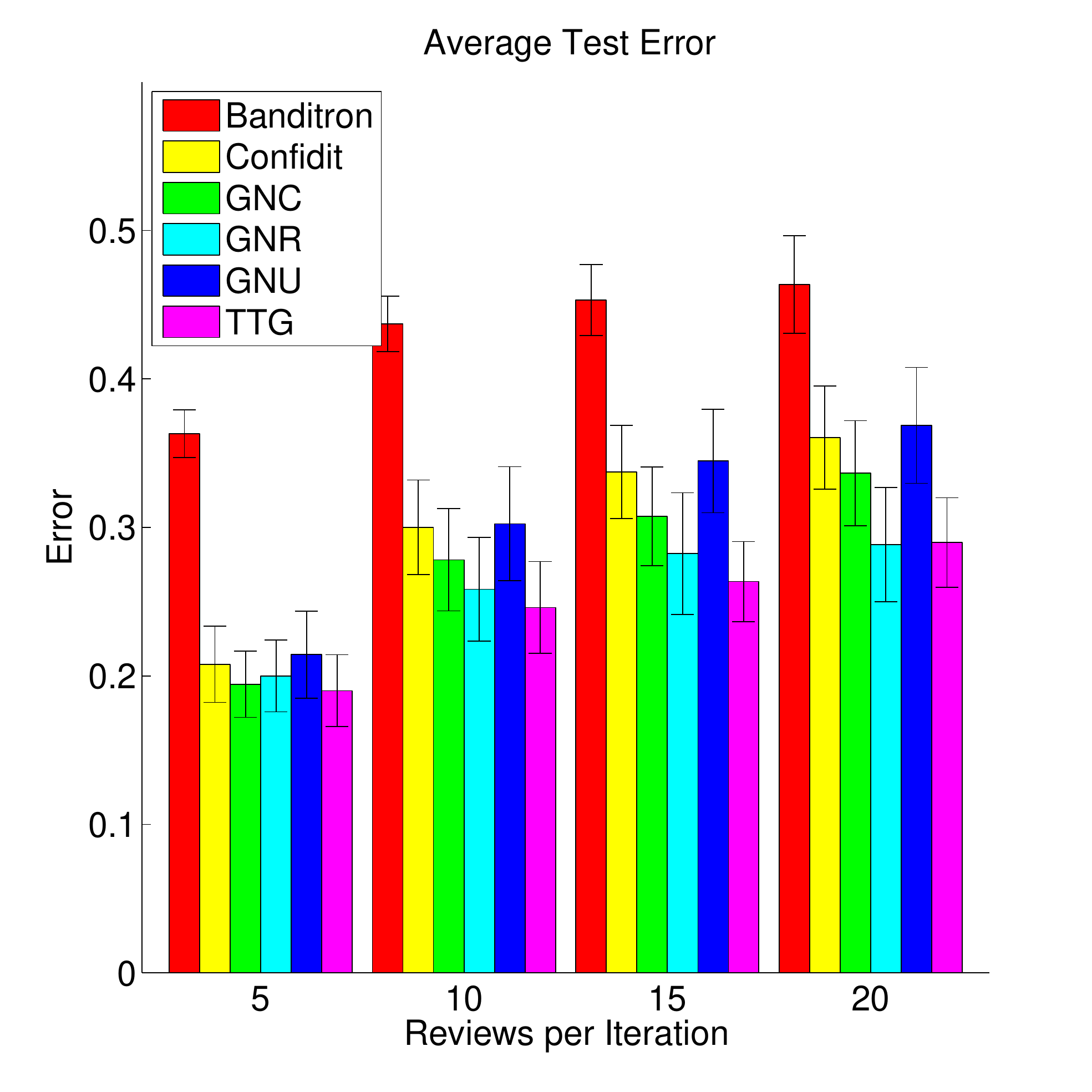}
		\end{center}
		\vspace{-0.8cm}
		\caption{Average error over all Amazon domains}
		\label{fig_amazon_avg}
	\end{minipage}
	\begin{minipage}{0.5\textwidth}
		\begin{center}
			\includegraphics[width=1.2\columnwidth]{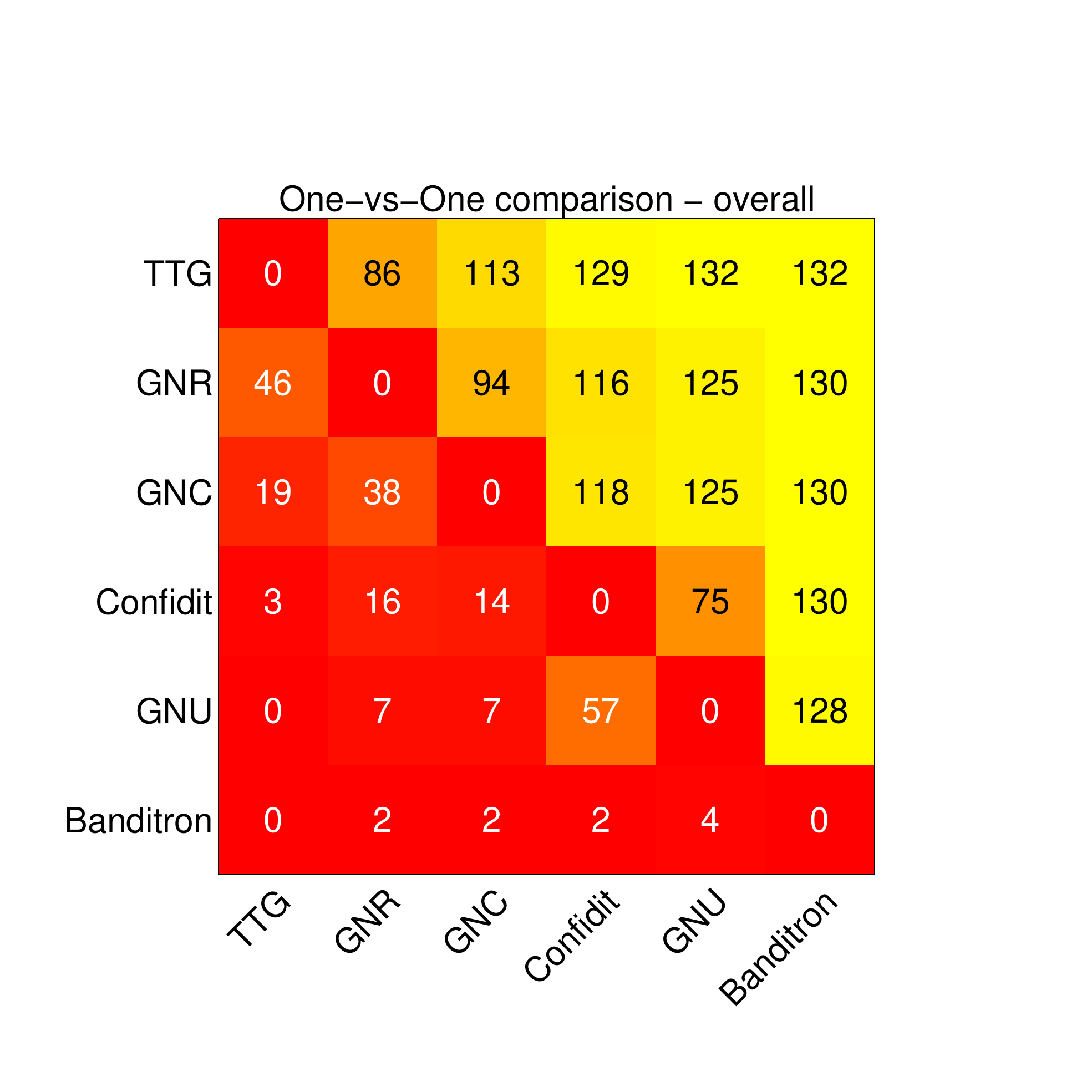}
		\end{center}
				\vspace{-0.8cm}
		\caption{One-vs-one competition of all algorithms over all experiments}
		\label{fig_amazon_comp}
	\end{minipage}
\end{wrapfigure}
\section{Experimental Study}

We evaluated our algorithms using the following recommendation setting
on reviews from Amazon. We used $341,400$ reviews on products from $33$
domains\footnote{android, arts, automotive, baby products,
  beauty, books, camera, cell-phones, clothing, computers,
  electronics, food, gardening \& pets, health, home, industrial,
  jewellery, kindle, kitchen, magazines, movies \& TV, MP3, music,
  musical instruments, office, patio, shoes, software, sports, toys,
  video games, videos, watches}. The reviews were preprocessed by
converting upper-case text to lower-case, replacing common non-word
patterns (such as common emoticons, 3 dots, links) with a unique mark,
removing HTML tags, and expanding abbreviations. We extracted bi-gram
features, with $6,255,811$ features per item. Each review
also comes with a rating, between $1$ and $5$ stars, which was
used as its (normalized) reward.

On each iteration, each algorithm is given a set of $K$ items from a
specific domain, each represented by a review written on that item.
The algorithm then picks two items $\indmt,\indnt$ based on these reviews, and
receives a stochastic bit $y\in\{\pm 1\}$ with probability
proportional to the difference in the number of stars each of the items
received. If the number of stars both reviews got are close, the bit will be $y\!=\!+1$ with
probability $\approx 0.5$.
Where the difference
is maximal ($4$), the bias is $(1 \!+\! 0.25 \!\times\! (5-1))/2 \!=\! 1$. This
process simulates the case where the two items are shown to a user,
who clicks on one item based on its relative value or reward.
Even though full feedback is available here,
relative feedback
simulates better the process of choosing one out of several suggested
products %
occurring in practice.

The error at time $t$ was defined as the difference between the
maximal rating in the reviews received at time $t$ and the rating of
the first review selected by the algorithm ($\indmt$), divided by $4$.
The highest value is $1$, and choosing a random review as the best
yields an average error of $0.5$. We experimented with sets of size
$K=5, 10, 15, 20$. In total, we have $33$ (domains) $\times$ $4$
(values of $K$) $=132$ experiments. Each trial was performed 10
times, and the results were averaged. The error bars appearing in
the figures indicate the $95$ percentile.

The reviews in each domain were divided into three sets: $15\%$ was
used as a development set to calibrate the parameter of each algorithm,
$10\%$ was used as a test set, and the remaining $75\%$ were
used for training. All algorithms had their parameters tuned on the
development set, over a grid of fixed size, separately per domain.

Once parameters were tuned, we executed all algorithms on the training set
(single iteration) and evaluated the resulting model on the test set.
We evaluated a total of six algorithms. In addition to the 
\cnqrname\ algorithms we outlined in~\tabref{conquer_algs}, we also evaluated two multiclass contextual bandits
algorithms - Confidit~\cite{DBLP:journals/ml/CrammerG13} and Banditron~\cite{kakade2008efficient} -
each picks a single item, and receives a binary feedback stating whether the picked item's
rating is maximal.

All algorithms that maintain second order
information - the matrix $\aii$ - were executed with a diagonal
matrix. Confidit and the \cnqrname\ algorithms require a scalar quantity
$\eta_t$ to be computed based of unknown quantities, such as the vector
$\vu$. Instead, we follow previous practice~\cite{DBLP:journals/ml/CrammerG13},
and set $\eta_t=\eta$ to a fixed value tuned on the development set.
Finally, since even now the projection is very time-consuming,
and since it is required only for the sake of analysis,
and was not improving performance in practice on small scale problems,
it was not performed in the experiments below, where the data dimension is high.

\figref{fig_amazon_avg} shows average test error over all
domains. The four blocks correspond to the number of possible items
per iteration $K$, and the eight bars per block correspond to the eight
algorithms run. \figref{fig_amazon_comp} shows a one-vs-one competition of all
algorithms one against the other in all trials - a value of $k$ in row $i$, column
$j$ indicates that algorithm $i$ did better than algorithm $j$ in $k$ trials.
These results are consistent with each other, meaning that algorithms that do
well on average also beat less successful algorithms in a majority of the trials.
Finally, \figref{fig_amazon} shows the test error for Automotive,
Books, and Kindle domains. These results show a similar trend to the average
results. Additional results are in \appref{app:results} in the supp. material.

Clearly, Banditron performs worst with error greater than
$0.3$, \algname  -TTG is the clear winner in both average error and trial wins, with
\algname  -GNR in close second place. \algname  -GNC is third, and Confidit and \algname -GNU follow.
We also experimented with algorithms employing a non-relative UCB policy to select $\indmt$,
but these yielded poor results. Few comments are in order. First, \emph{relative} UCB indeed outperforms
absolute UCB in the setting described above, providing justification for this choice, and our choice of analysis.
Second, it seems that greed or random exploration are better-suited to this setting than a UCB policy, both in the absolute
and relative variants. Third, the performance of all
algorithms relying on UCB policies deteriorates much faster as the number of
items per round increases compared to non-UCB algorithms - we hypothesize that
this is because as the number of items per round increases, the chance of choosing
two items with identical reward grows, and for these pairs of items the UCB factor
is more significant, whereas algorithms that choose based on score alone don't
suffer as much from the added confusion. 
Fourth, we evaluated additional algorithmic variants where the first choice of the algorithm is based on UCB, and the second is either second best UCB, or a random choice. Both these variants performed poorly compared to the variants we tried, and thus are omitted.

Finally, it is evident that avoiding making a query in some cases doesn't seem to hurt performance
significantly. However, we must point out that \algname  -GNC
avoided making queries on a maximum of $8.27\%$ of the rounds per domain, with common
values ranging around $0.5\% - 1\%$ (the full statistics of optional querying are in
\tabref{tab:optional_query} in the supp. material). This is due to the relatively small number of rounds
per domain, and we expect this number to increase if multiple iterations over the training data
are performed or the trial is run on a larger dataset.

\section{Light feedback in Dependency Parsing}
\label{sec:dep_parse}

\begin{figure*}[t]
	\centering
	\includegraphics[width=0.34\textwidth]{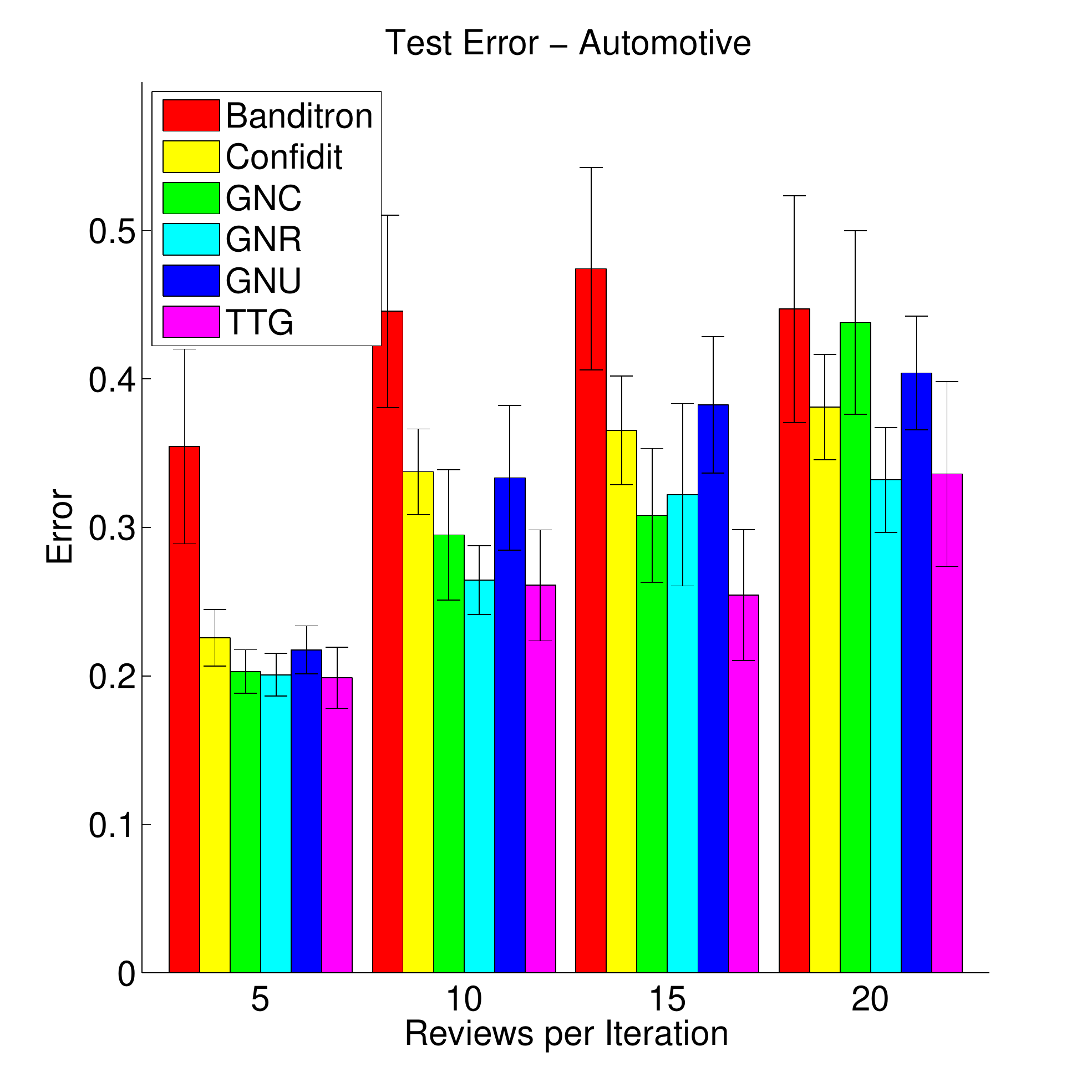}
		\hspace{-0.5cm}
	\includegraphics[width=0.34\textwidth]{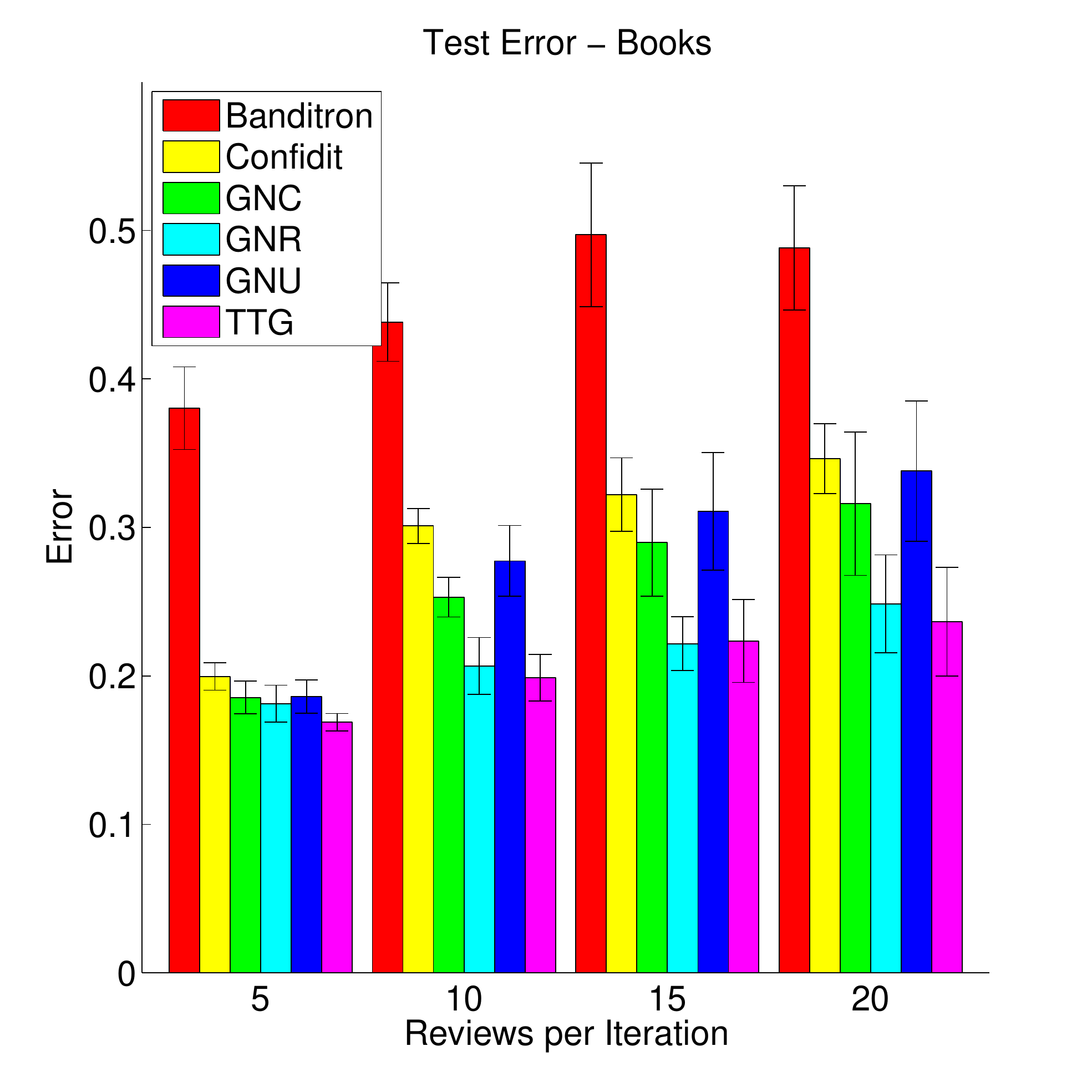}
	\hspace{-0.5cm}
	\includegraphics[width=0.34\textwidth]{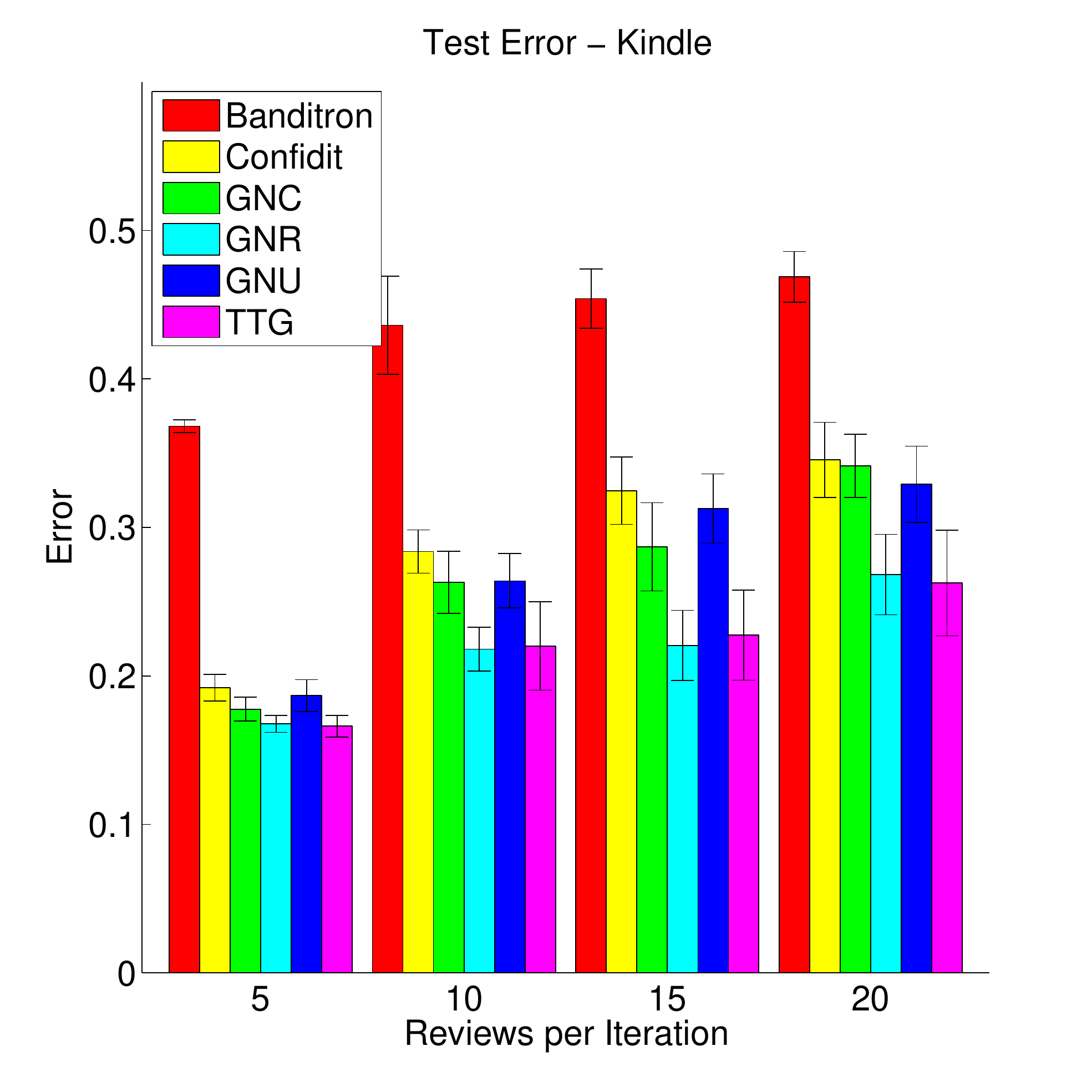}
	\caption{Average test error over Automotive, Books and Kindle domains.}
	\label{fig_amazon}
\end{figure*}

A setting related to ours is learning dependency parse trees with
light feedback~\cite{MejerCr10}, where items are sentences and are associated with full parse
trees. These trees are often generated by human annotators, in a process that is complex, slow and prone to mistakes, as for each
sentence a full correct feedback is required. We focus on partial feedback, where given a sentence, the algorithm outputs a {\em complete} parse tree,
but can request feedback on only a single word in the sentence, and feedback is only
required about the relative correctness of two alternative edges related to that word.

More formally, an item received at time $t$ is a sentence $\vxii$
of length $s_t \leq S$ in some natural language. The goal of the
algorithm is to generate a dependency parse tree $\vmii$ between the
words in the sentence. We assume that features about a tree decompose
to features about edges, and specifically the syntactic relation
between the i$th$ word and the j$th$ word can be captured in a feature
vector $\vphii{\vxii, i, j}\in\reals^D$, and the score of an edge
between words $i$ and $j$ is given by
$\tvwti{t-1}\vphii{\vxii, i, j}$. Given a possible dependency parsing
$\vm$ over $\vxii$, we construct an aggregated feature vector,
\(
    \vPhii{\vxii, \vm} = \frac{1}{s_t-1}\sum_{\paren{i, j} \in \vm}
    \vphii{\vxii, i, j} ~.
\)
The $\frac{1}{s_t-1}$ factor is required to ensure that $\vPhii{\vxii, \vm}$
is of unit norm. Given a model weight vector $\tvwi{t-1}$, the predicted parse
tree is computed by
\(
    \vmii = \arg\max_{\vm\in\mathcal{K}_t} \tvwti{t-1} \vPhii{\vxii, \vm}~.
 \)

 The confusion (or dueling) feedback in this setting is
 realized in the feedback given to the algorithm. Given some parse
 tree, e.g. the parse tree $\vmii$ maximizing the score, the algorithm
 selects some word index $i \in \{ 1 ... s_t \}$ and chooses an
 alternative source for it. The feedback it asks for and receives can
 be summarized in the question: ``Which of the $j$th and $k$th words of the
 sentence are a better source for the $i$th word?''.  For example in
 the sentence ``{\em I saw the dog with the telescope}'' the algorithm may
 ask ``{\em Was the dog seen by a telescope, or was the dog seen carrying a
   telescope?}''. While in this case, both answers may be valid both
 syntactically and semantically, it is not the case when we replace
 the word {\em telescope} with the word {\em bone}. The algorithm then
 uses this feedback to update its model. This type of
 focused relative feedback makes the work of a human annotator much
 easier - instead of parsing an entire sentence, she is only required
 to answer to a simple yes/no question.

 Our solution to this setting, \algname  -DP (\cnqrname\  Dependency Parsing), is outlined in
 Algorithm~\ref{conquer_dependency}, and \thmref{thm_parse} shows for it
 a regret bound of $O\paren{S \cdot\sqrt{T} \log T}$ with respect to the edge-accuracy evaluation
 measure, often used in dependency parsing. Due to lack of space, both
 algorithm and analysis appear in \appref{sec:sup_dependency_parsing}
 in the supp. material. A similar previous solution~\cite{MejerCr10}
 picks the best two possible alternatives  greedily, while we pick one
 greedily and the other  optimistically. Finally, we prove a regret bound,
 while they do not.

\section{Conclusions and Future Work}
We have introduced confusion queried online bandit learning, which
shares properties with both contextual bandits and dueling
bandits. We described a new algorithm framework for this task, suggested
few algorithms in it, and analyzed the regret for one of them
under very mild assumptions. We showed that the proposed online
second order algorithms are efficient. Extensive experiments we performed
with a recommendation system based on reviews showed the usefulness of
the setting and our framework. Two main insights are: (1) in this settings it seems that greedy or random exploration outperform UCB based exploration (2) Relative UCB (as we analyzed) outperform absolute UCB. Thus, exploration should be tuned appropriately when a relative preference choice is performed. 

Future work might include bounding the number of queries the algorithm
makes. We also plan to explore similar frameworks where more than two labels can
be presented for feedback, which would pick the most relevant subset from the set selected
by the algorithm. Last, but not least, we plan to experiment with our
dependency parsing solution, and apply our methods to other domains.

\small {
    \bibliographystyle{unsrt}
    \bibliography{bib_nips}
}

\newpage
\appendix
\section{Supplementary Material}

\subsection{Lemmas} \label{sec:lemmas}
\begin{lemma} \label{lemma_r_t_q_t}
If at time $t$, the algorithm maintains $\left|\paren{\hdeltatk{n}-\hdeltatk{m}}-\paren{\deltatk{n}-\deltatk{m}}\right|\leq \epstk{n,m}$ for all $m,n \in\mathcal{K}$, then
\(
    r_{t}=\underset{m\in\mathcal{K}}{\max}~ r(t,m) - \max \braces{r(t,\indmt), r(t,\indnt)} \leq 2q_t + 2\epst{t}~,
\)
where
\(
    \epst{t}^2 = 2\vztii\ai{t-1}^{-1}\vzii\times\eta_t
\)
for all $t = 1 , 2 \comdots T $.
\end{lemma}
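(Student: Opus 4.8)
The plan is to decompose the instantaneous regret into an \emph{approximation} part, controlled by $q_t$, and an \emph{estimation} part, controlled by the confusion between the two displayed items, and then to show that the optimistic choice of $\indnt$ forces the estimation part to be paid only in terms of the confusion between $\indmt$ and $\indnt$.

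First I would pass from the unknown reward $r\paren{t,\cdot}$ to the linear surrogate $\deltatk{m}=\vut\vPhii{\vxii,m}$. Using $\abs{r\paren{t,m}-\deltatk{m}}\leq q_t$ for every $m\in\mathcal{K}$ (the definition of $q_t$), the optimum obeys $\max_{m}r\paren{t,m}\leq\deltats+q_t$ because $\indst$ maximizes $\deltatk{\cdot}$, while the collected reward obeys $\max\braces{r\paren{t,\indmt},r\paren{t,\indnt}}\geq\max\braces{\deltatk{\indmt},\deltatk{\indnt}}-q_t$. Subtracting yields
\[
    r_t \leq \deltats - \max\braces{\deltatk{\indmt},\deltatk{\indnt}} + 2q_t,
\]
so it remains to bound the estimation regret $\deltats-\max\braces{\deltatk{\indmt},\deltatk{\indnt}}$ by $2\epst{t}$.

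Next I would split on whether $\indst=\indmt$. If $\indst=\indmt$ then $\deltats=\deltatk{\indmt}\leq\max\braces{\deltatk{\indmt},\deltatk{\indnt}}$ and the estimation regret is nonpositive, so there is nothing to prove. Otherwise $\indst\neq\indmt$, hence $\indst$ is a legal candidate for the second slot and the rule $\indnt=\arg\max_{n\neq\indmt}\beta\paren{n}$ gives $\beta\paren{\indnt}\geq\beta\paren{\indst}$, i.e. $\hdeltatk{\indnt}+\epstk{\indnt,\indmt}\geq\hdeltatk{\indst}+\epstk{\indst,\indmt}$. I would then invoke the maintained hypothesis on the two pairs $(\indst,\indmt)$ and $(\indnt,\indmt)$ to replace the model scores $\hdeltatk{\cdot}$ by the surrogate rewards $\deltatk{\cdot}$, at the cost of the coefficients $\epstk{\indst,\indmt}$ and $\epstk{\indnt,\indmt}$, and eliminate the residual model-score gap $\hdeltatk{\indst}-\hdeltatk{\indnt}$ through the selection inequality above. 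In the cancellation the coefficient $\epstk{\indst,\indmt}$ drops out, leaving a bound on $\deltats-\deltatk{\indnt}$ by a constant multiple of the confusion coefficient $\epstk{\indmt,\indnt}$ between the two \emph{actually displayed} items; since $\max\braces{\deltatk{\indmt},\deltatk{\indnt}}\geq\deltatk{\indnt}$, this controls the estimation regret.

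Finally I would translate the confusion coefficient into $\epst{t}$. Because $\vzii=\half\yii\paren{\vPhii{\vxii,\indmt}-\vPhii{\vxii,\indnt}}$ and $\yii^2=1$, we have $\vztii\ai{t-1}^{-1}\vzii=\tfrac14\paren{\vPhii{\vxii,\indmt}-\vPhii{\vxii,\indnt}}^\top\ai{t-1}^{-1}\paren{\vPhii{\vxii,\indmt}-\vPhii{\vxii,\indnt}}$, so $\epst{t}^2=2\eta_t\,\vztii\ai{t-1}^{-1}\vzii=\tfrac12\,\epstk{\indmt,\indnt}^2$, that is $\epstk{\indmt,\indnt}=\sqrt2\,\epst{t}$; substituting this back converts the estimation-regret bound into the claimed $2\epst{t}$. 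I expect the main obstacle to be precisely this middle step: making the case analysis so that the optimistic (relative-UCB) second choice charges the gap $\deltats-\deltatk{\indnt}$ only against the confusion between $\indmt$ and $\indnt$, rather than against the uncontrolled confusion with the unknown optimum $\indst$, and tracking the multiplicative constant carefully so that it collapses to exactly $2\epst{t}$ in the final statement.
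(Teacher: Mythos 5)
Your skeleton is the paper's: pay $2q_t$ to pass from $r\paren{t,\cdot}$ to the linear surrogate $\deltatk{\cdot}$, then control the estimation term by combining the maintained confidence hypothesis with the optimism of the selection rule $\beta\paren{\indnt}\geq\beta\paren{\indst}$, and finally convert via $\epstk{\indmt,\indnt}=\sqrt{2}\,\epst{t}$. However, the specific routing you propose does not deliver the stated constant, and you flagged exactly this as the delicate point without resolving it. By lower-bounding $\max\braces{\deltatk{\indmt},\deltatk{\indnt}}$ by $\deltatk{\indnt}$ you must invoke the confidence hypothesis on two pairs, and after the cancellation you describe you are left with $\deltats-\deltatk{\indnt}\leq 2\epstk{\indmt,\indnt}=2\sqrt{2}\,\epst{t}$, which exceeds $2\epst{t}$. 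The paper instead lower-bounds the max by $\deltatk{\indmt}$ and exploits the fact that $\indmt$ maximizes the score, so that $\hdeltatk{\indnt}-\hdeltatk{\indmt}\leq 0$: for $\indst\neq\indmt$,
\[
\deltats-\deltatk{\indmt}\;\leq\;\hdeltatk{\indst}-\hdeltatk{\indmt}+\epstk{\indst,\indmt}\;=\;\beta\paren{\indst}\;\leq\;\beta\paren{\indnt}\;=\;\hdeltatk{\indnt}-\hdeltatk{\indmt}+\epstk{\indnt,\indmt}\;\leq\;\epstk{\indnt,\indmt}\;=\;\sqrt{2}\,\epst{t}\;\leq\;2\epst{t},
\]
so the confusion coefficient is paid only once (for $\indst=\indmt$ the left-hand side is zero, as your case split notes). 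This one-line switch is what makes the constant collapse.

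A second, smaller omission: you do not treat the rounds on which \algname -GNC declines to query, i.e.\ $\beta\paren{\indnt}<0$. The same chain then gives $\deltats-\deltatk{\indmt}\leq\beta\paren{\indnt}<0$, so the estimation term is nonpositive and $r_t\leq 2q_t$; the paper states this explicitly and takes $\epst{t}=0$ on such rounds, which matters because no update vector $\vzii$ --- and hence no $\epst{t}$ --- is otherwise defined there.
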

\begin{proof}
Notice that from the description of the algorithm,
$\hdeltatk{n} - \hdeltatk{\indmt} \leq 0$ for any $n \in \mathcal{K}$.
In addition, notice that $\epstk{m,n} = \epstk{n,m}$ for any $m,n \in \mathcal{K}$.
From the algorithm description, we have,
\begin{align*}
    r_{t} &=& \underset{m\in\mathcal{K}}{\max}~ r(t,m) - \max \braces{r(t,\indmt), r(t,\indnt)} &\leq& \deltats- \max\braces{\deltatk{\indmt}, \deltatk{\indnt}} + 2q_t\\
    &\leq&\deltats- \deltatk{\indmt} + 2q_t &\leq& \hdeltatk{\indst}-\hdeltatk{\indmt} + 2\epstk{\indst, \indmt} + 2q_t \\
    &\leq& \hdeltatk{\indnt}-\hdeltatk{\indmt} + 2\epstk{\indnt, \indmt} + 2q_t &\leq&  2\epst{t} + 2q_t ~.
\end{align*}
Note that if $\beta\paren{\indnt} < 0$, the linear approximation term
is bounded by a negative number, and since it is non-negative by definition,
this means that the linear term is zero, and the regret is composed of the
approximation error term alone. For this case, $\epst{t} = 0$, and the
inequality holds.
\end{proof}
\label{app:lemmas}
\begin{lemma}\label{lemma_(wtx-utx)2}
With the notation introduced so far, the following inequality holds for any $t$ and any $\vv\in\reals^{D}$, where we have $h_{s}=\vzti{s} \ai{s}^{-1} \vzi{s}$
\begin{align*}
    \left(\tvwti{t-1}\right.&\left.\vv-\vut\vv\right)^{2}=  2\vvt\ai{t-1}^{-1}\vv\cdot \\
    &\Big( d_{0}\paren{\vu,\vzero} + 2\sum_{s=1}^{t-1}h_{s} - \half\sum_{s=1}^{t-1}\paren{1-\tvwti{t-1}\vzi{s}}^2 + \half\sum_{s=1}^{t-1}\paren{1-\vut\vzi{s}}^2 \Big)
\end{align*}
\end{lemma}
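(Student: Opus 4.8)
The plan is to strip off the arbitrary direction $\vv$ with a matrix Cauchy--Schwarz inequality and then bound the resulting $\ai{t-1}$-weighted distance between $\tvwi{t-1}$ and $\vu$ by reading the update rule as regularized least squares. (The statement should read ``$\le$'' rather than ``$=$'': the surrounding text already calls it an inequality, so the right-hand side is an \emph{upper} bound.) Concretely, I would apply the generalized Cauchy--Schwarz inequality $\paren{\tran{\va}\vb}^2\le\paren{\tran{\va}M\va}\paren{\tran{\vb}M^{-1}\vb}$ for positive definite $M$, with $M=\ai{t-1}$, $\va=\tvwi{t-1}-\vu$ and $\vb=\vv$, giving
\begin{align*}
\paren{\tvwti{t-1}\vv-\vut\vv}^2\le\tran{\paren{\tvwi{t-1}-\vu}}\ai{t-1}\paren{\tvwi{t-1}-\vu}\cdot\vvt\ai{t-1}^{-1}\vv .
\end{align*}
Matching against the claimed bound, it then remains to show that
\begin{align*}
\half\tran{\paren{\tvwi{t-1}-\vu}}\ai{t-1}\paren{\tvwi{t-1}-\vu}\le d_{0}\paren{\vu,\vzero}+2\sum_{s=1}^{t-1}h_{s}-\half\sum_{s=1}^{t-1}\paren{1-\tvwti{t-1}\vzi{s}}^2+\half\sum_{s=1}^{t-1}\paren{1-\vut\vzi{s}}^2 .
\end{align*}

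\emph{A ridge-regression identity.} Unrolling the recursions for $\ai{t}$ and $\vwi{t}$ from $\ai{0}=\mi$, $\vwi{0}=\vzero$ shows that, \emph{absent the projection}, $\vwi{t-1}=\ai{t-1}^{-1}\sum_{s=1}^{t-1}\vzi{s}$ is exactly the minimizer of the quadratic
\[
L_{t}\paren{\vw}=d_{0}\paren{\vw,\vzero}+\half\sum_{s=1}^{t-1}\paren{1-\vwt\vzi{s}}^2 ,
\]
whose Hessian is $\ai{t-1}=\mi+\sum_{s=1}^{t-1}\vzi{s}\vzti{s}$. Since $L_t$ is quadratic, expanding it exactly about this minimizer and evaluating at $\vu$ gives the clean identity $\half\tran{\paren{\vu-\vwi{t-1}}}\ai{t-1}\paren{\vu-\vwi{t-1}}=L_t\paren{\vu}-L_t\paren{\vwi{t-1}}$, which produces precisely the terms $d_0\paren{\vu,\vzero}$, $+\half\sum\paren{1-\vut\vzi{s}}^2$ and $-\half\sum\paren{1-\tvwti{t-1}\vzi{s}}^2$ of the target (with $\tvwi{t-1}=\vwi{t-1}$ in this idealized case), up to the nonnegative leftover $d_0\paren{\vwi{t-1},\vzero}$, which can be dropped.

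\emph{Projection and the second-order slack.} Two effects separate this idealized identity from the true iterate. First, the algorithm uses the \emph{projected} $\tvwi{t-1}$; since $\vu$ is feasible by assumption ($\abs{\vut\vPhii{\vxii,m}}\le1$ for all $m$), the generalized Pythagorean property of the Mahalanobis projection guarantees that projecting onto the constraint set never increases the distance to $\vu$, so the projection only helps. Second, the per-round update $\ai{s}=\ai{s-1}+\vzi{s}\vzti{s}$ must be unwound with Sherman--Morrison to relate $\ai{s}^{-1}$ to $\ai{s-1}^{-1}$; this is what generates the increments $h_{s}=\vzti{s}\ai{s}^{-1}\vzi{s}$, whose accumulation furnishes the $2\sum_{s=1}^{t-1}h_{s}$ slack that leaves room to absorb the projection drift (and matches the second-order sum appearing in $\eta_t$ in \eqref{eta}).

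\emph{Main obstacle.} The Cauchy--Schwarz and quadratic-expansion steps are routine; the real difficulty is reconciling the \emph{online, projected} construction of the weights with the \emph{batch} form of the loss terms in the statement, which are written in terms of the single final vector $\tvwi{t-1}$ applied to all past $\vzi{s}$. Carrying the projection drift across rounds together with the Sherman--Morrison corrections, and showing the residual terms are dominated by $2\sum_s h_s$, is where the bookkeeping concentrates, and it is this part that leans most directly on the second-order analysis of~\cite{DBLP:journals/ml/CrammerG13}.
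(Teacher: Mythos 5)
Your opening move coincides with the paper's: apply the generalized Cauchy--Schwarz inequality in the $\ai{t-1}$-norm to reduce the claim to bounding $d_{t-1}\paren{\vu,\tvwi{t-1}}$ by the bracketed quantity (and you are right that the ``$=$'' in the statement should be ``$\leq$''). From that point on, however, your route diverges and has a genuine gap. The paper does \emph{not} use a batch ridge-regression identity. It proceeds per round: it writes the standard second-order ``progress'' identity
\begin{align*}
d_{s-1}\paren{\vu,\tvwi{s-1}} - d_{s}\paren{\vu,\vwi{s}} + d_{s}\paren{\tvwi{s-1},\vwi{s}} = \half\paren{1-\tvwti{s-1}\vzi{s}}^2 - \half\paren{1-\vut\vzi{s}}^2,
\end{align*}
bounds $d_{s}\paren{\tvwi{s-1},\vwi{s}} = \half\paren{1-\tvwti{s-1}\vzi{s}}^2\vzti{s}\ai{s}^{-1}\vzi{s} \leq 2h_s$, applies the Bregman projection inequality $d_{s}\paren{\vu,\tvwi{s}}\leq d_{s}\paren{\vu,\vwi{s}}$ (valid because $\vu$ is feasible), and telescopes over $s$. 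In this argument the $2\sum_{s}h_s$ slack is exactly the accumulated per-round divergence $d_{s}\paren{\tvwi{s-1},\vwi{s}}$ --- it has nothing to do with Sherman--Morrison corrections or with ``absorbing projection drift,'' and the projection never creates drift to absorb because it is handled by the Pythagorean inequality \emph{before} the next update is taken.

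The gap in your version is precisely the step you defer to ``bookkeeping'': your variational characterization $\vwi{t-1}=\ai{t-1}^{-1}\sum_{s}\vzi{s}=\arg\min L_t$ holds only for the \emph{unprojected} recursion. Once the projection is interleaved each round, $\aii\vwii=\ai{t-1}\tvwi{t-1}+\vzii$ no longer unrolls to $\sum_{s}\vzi{s}$, the iterate is no longer the minimizer of $L_t$, and the exact quadratic expansion $\half\tran{\paren{\vu-\vwi{t-1}}}\ai{t-1}\paren{\vu-\vwi{t-1}}=L_t\paren{\vu}-L_t\paren{\vwi{t-1}}$ fails; you give no concrete mechanism for controlling the discrepancy, and asserting that $2\sum_s h_s$ dominates it is not justified. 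Note also that the ``batch form'' difficulty you single out --- the final vector $\tvwi{t-1}$ hitting all past $\vzi{s}$ --- is an artifact of what appears to be an index typo in the lemma statement: the quantity the proof actually produces (and that is used downstream) is $\sum_{s}\paren{1-\tvwti{s-1}\vzi{s}}^2$ with the \emph{online} iterates, which is exactly what the telescoping argument delivers for free.
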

\begin{proof}
Let $s$ be any round between round $1$ and round $t-1$. Using the Cauchy-Schwarz theorem for dual norms, we have,
\begin{align}
    \paren{\tvwti{s-1}\vv-\vut\vv}^{2}&=\abs{\inner{\tvwi{s-1}-\vu}{\vv}}^2  \leq 2\cdot\vvt\ai{s-1}^{-1}\vv\cdot d_{s-1}\paren{\tvwi{s-1}, \vu}. \label{eqn_(wtx-utx)2}
\end{align}
Next, similarly to \cite{DBLP:journals/ml/CrammerG13}, we have,
\begin{align*}
    d_{s}\paren{\tvwi{s-1},\vwi{s}} &= \half \paren{1 - \tvwti{s-1}\vzi{s}}^2 \cdot \vzti{s}\ai{s}^{-1}\vzi{s} \leq 2 \cdot \vzti{s}\ai{s}^{-1}\vzi{s}.
\end{align*}

Again, as shown in \cite{DBLP:journals/ml/CrammerG13},
observe that
\begin{align*}
    d_{s-1}\paren{\vu,\tvwi{s-1}} - d_{s}\paren{\vu,\vwi{s}} +
    d_{s}\paren{\tvwi{s-1},\vwi{s}}  =
 \half\paren{1-\tvwti{s-1}\vzi{s}}^2 - \half\paren{1-\vuti{s-1}\vzi{s}}^2
\end{align*}

Now, observe that since $\abs{\vut\vzii}\leq1$ holds for any $t=1, 2 \comdots T$, we can use the standard Bregman projection inequality stating that $d_{s}\paren{\vu,\tvwi{s}} \leq d_{s}\paren{\vu,\vwi{s}}$. Plugging this into the above, and summing over $s$, we get:
\begin{align*}
   \half\sum_{s=1}^{t-1} \paren{1-\tvwti{s-1}\vzi{s}}^2 -
    \half\sum_{s=1}^{t-1}\paren{1-\vut\vzi{s}}^2
\leq ~ d_{0}\paren{\vu,\vzero} - d_{t-1}\paren{\vu,\tvwi{s-1}} + 2\sum_{s=1}^{t-1} h_{s}.
\end{align*}
Isolating $d_{t-1}\paren{\tvwi{t-1},\vu}$, we get:
\begin{align*}
    d_{t-1} \paren{\vu,\tvwi{t-1}} \leq d_{0}\paren{\vu,\vzero} + 2
    \sum_{s=1}^{t-1} h_{s} -\half\sum_{s=1}^{t-1}\paren{1-\tvwti{s-1}\vzi{s}}^2  +\half\sum_{s=1}^{t-1}\paren{1-\vut\vzi{s}}^2\
\end{align*}
Plugging this back into \eqnref{eqn_(wtx-utx)2} concludes the proof.
\end{proof}

\begin{lemma}\label{lemma_E(wtx-utx)_q_t}
With the notation introduced so far, the following inequality holds for any (constant) $\vw\in\reals^{D}$, where $\vu\in\reals^{D}$ is the linear model approximating the reward function defined above:
\begin{equation*}
    \Exp{\half\paren{1-\vwt\vzii}^2-\half\paren{1-\vut\vzii}^2} \geq \half\paren{\hsigmaii-\sigmaii}^2-2q_t.
\end{equation*}
\end{lemma}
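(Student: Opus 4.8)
The plan is to evaluate the expectation on the left-hand side explicitly, using that the only randomness is in the feedback bit $\yii$, and then to charge the reward-approximation error to the $-2q_t$ slack. First I would abbreviate $a = \vwt\vzii$ and $b = \vut\vzii$. Since $\vzii = \half\yii\paren{\vPhii{\vxii, \indmt} - \vPhii{\vxii, \indnt}}$, both are odd in $\yii$: writing the model margin $\hsigmaii = \half\paren{\vwt\vPhii{\vxii, \indmt} - \vwt\vPhii{\vxii, \indnt}}$ and the approximation margin $\sigmaii = \half\paren{\deltatk{\indmt} - \deltatk{\indnt}}$, and using $\yii^2 = 1$, we have $a = \yii\hsigmaii$ and $b = \yii\sigmaii$. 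The integrand then simplifies to a first-order term plus a quadratic:
\begin{align*}
\half\paren{1-a}^2 - \half\paren{1-b}^2 = \paren{b-a} + \half\paren{a^2 - b^2} = \yii\paren{\sigmaii - \hsigmaii} + \half\paren{\hsigmaii^2 - \sigmaii^2}~.
\end{align*}

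The key step is taking the expectation over $\yii$. The Bernoulli feedback model gives $\Exp{\yii} = \half\paren{r\paren{t, \indmt} - r\paren{t, \indnt}}$, so I would relate this true expected feedback to the approximation margin $\sigmaii$: since $\abs{r\paren{t, m} - \deltatk{m}} \leq q_t$ for each of $m \in \braces{\indmt, \indnt}$, the triangle inequality yields $\Exp{\yii} = \sigmaii + \xi$ with $\abs{\xi} \leq q_t$. Substituting and completing the square in $\hsigmaii, \sigmaii$ (the $\half\sigmaii^2 - \sigmaii\hsigmaii + \half\hsigmaii^2$ terms assemble perfectly), I obtain
\begin{align*}
\Exp{\half\paren{1-a}^2 - \half\paren{1-b}^2} = \half\paren{\hsigmaii - \sigmaii}^2 + \xi\paren{\sigmaii - \hsigmaii}~.
\end{align*}

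It then remains only to control the residual cross term $\xi\paren{\sigmaii - \hsigmaii}$. Here I would use $\abs{\sigmaii}\leq 1$, which is immediate from the standing assumption $\abs{\vut\vPhii{\vxii, m}}\leq 1$, together with $\abs{\hsigmaii}\leq 1$, which holds because the projection step guarantees feasibility $\abs{\vwt\vPhii{\vxii, m}}\leq 1$ for $\vw = \tvwi{t-1}$; hence $\abs{\sigmaii - \hsigmaii}\leq 2$ and $\abs{\xi\paren{\sigmaii - \hsigmaii}}\leq 2q_t$, giving the claim. I expect the main obstacle to be conceptual rather than computational: recognizing that once the expectation over $\yii$ is taken, the stochastic first-order term $\yii\paren{\sigmaii - \hsigmaii}$ combines with the deterministic quadratic into exactly $\half\paren{\hsigmaii - \sigmaii}^2$, so that the feedback noise contributes no net bias and only the deterministic linear-approximation gap $\xi$ survives as the $2q_t$ penalty. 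The boundedness step crucially relies on the projection, which is precisely why that (otherwise impractical) step is present in the algorithm.
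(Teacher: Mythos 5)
Your proof is correct and follows essentially the same route as the paper's: both compute the expectation over the Bernoulli feedback $\yii$ explicitly, identify the exact term $\half\paren{\hsigmaii-\sigmaii}^2$ plus a cross term of the form $\xi\paren{\sigmaii-\hsigmaii}$ with $\abs{\xi}\leq q_t$, and bound that residual by $2q_t$ using $\abs{\sigmaii-\hsigmaii}\leq 2$ (which, as you note, is where the projection constraint is used). Your odd-in-$\yii$ reorganization is marginally cleaner than the paper's case split on $\yii=\pm 1$, but it is the same argument.
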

\begin{proof}
Let us define $\sigmaii=\half\paren{\deltatk{\indkt}-\deltatk{\indlt}}$, and $\hsigmaii=\half\paren{\hdeltatk{\indkt}-\hdeltatk{\indlt}}$. In addition, note that the probability of $\yii$ being $+1$, marked $p$, is $\half\paren{1+\half\paren{r\paren{t, \indmt} - r\paren{t, \indnt}}}$. Based on the setting described above, observe the following:
\begin{align*}
    \rm E \left\{ \paren{1-\vwt\vzii}^2 \right.&\left.- \paren{1-\vut\vzii}^2\right\} \\
    &= p \cdot \brackets{\paren{1-\hsigmaii}^2-\paren{1-\sigmaii}^2}
     + \paren{1-p} \cdot \brackets{\paren{1+\hsigmaii}^2-\paren{1+\sigmaii}^2} \\
    &= p \cdot  \brackets{\paren{2-\hsigmaii-\sigmaii}\cdot\paren{\sigmaii-\hsigmaii}} + \paren{1-p} \cdot \brackets{\paren{2+\hsigmaii+\sigmaii}\cdot\paren{\hsigmaii-\sigmaii}}\\
    &= 4p \cdot \paren{\sigmaii-\hsigmaii} + \brackets{\paren{2+\hsigmaii+\sigmaii}\cdot\paren{\hsigmaii-\sigmaii}}.
\end{align*}

Now, since
\(
    \left|p - \frac{1+\sigmaii}{2}\right|\leq\half q_t~,
\)
we have
\begin{align*}
    \rm E \left\{\right.& \left. \paren{1-\vwt\vzii}^2 - \paren{1-\vut\vzii}^2\right\}=4p \cdot \paren{\sigmaii-\hsigmaii} + \brackets{\paren{2+\hsigmaii+\sigmaii}\cdot\paren{\hsigmaii-\sigmaii}}\\
    &\geq 4\paren{\frac{1+\sigmaii}{2} - \half q_t} \cdot \paren{\sigmaii-\hsigmaii} + \brackets{\paren{2+\hsigmaii+\sigmaii}\cdot\paren{\hsigmaii-\sigmaii}}\\
    &=\paren{\hsigmaii-\sigmaii}^2 - 2q_t \cdot \paren{\sigmaii-\hsigmaii} \\
    &\geq \paren{\hsigmaii-\sigmaii}^2 - 2q_t \cdot \left|\sigmaii-\hsigmaii\right|\\
    &\geq \paren{\hsigmaii-\sigmaii}^2 - 4q_t ~.
\end{align*}
Adding the $\half$ coefficient concludes the proof.
\end{proof}

\begin{lemma}\label{lemma_delta_deltahat}
With the notation introduced so far, we have
\begin{align*}
    \left(\deltatk{\indmt, \indnt}\right.&\left.-\hdeltatk{\indmt,
        \indnt}\right)^2 \leq 2\vztii\attk{k}^{-1}\vzii\cdot \Bigg(d_0\paren{\vu,\vzero}+2\sum_{s=1}^{t} q_s + 2\sum_{s=1}^{t-1}h_s+36\cdot\ln\frac{t+4}{\delta}\Bigg),
\end{align*}
with probability at least $1-\delta$ uniformly over $t=1,2\dots$.
\end{lemma}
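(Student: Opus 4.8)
The plan is to turn the deterministic estimate of \lemref{lemma_(wtx-utx)2} into the claimed data-dependent bound by controlling its fluctuating terms with \lemref{lemma_E(wtx-utx)_q_t} and a martingale tail inequality. First I would set $\vv=\vPhii{\vxii,\indmt}-\vPhii{\vxii,\indnt}$, so that the update vector is $\vzii=\half\yii\vv$, and read $\deltatk{\indmt,\indnt}=\half\vut\vv$ and $\hdeltatk{\indmt,\indnt}=\half\tvwti{t-1}\vv$ as the halved score gap between the two chosen items. The left-hand side is then $\paren{\tvwti{t-1}\paren{\half\vv}-\vut\paren{\half\vv}}^2$. Applying \lemref{lemma_(wtx-utx)2} with the test vector $\half\vv=\yii\vzii$ and using $\paren{\yii\vzii}^\top\ai{t-1}^{-1}\paren{\yii\vzii}=\vztii\ai{t-1}^{-1}\vzii$ bounds it by $2\vztii\ai{t-1}^{-1}\vzii$ times the bracket $d_{0}\paren{\vu,\vzero}+2\sum_{s=1}^{t-1}h_s+\half\sum_{s=1}^{t-1}\paren{1-\vut\vzi{s}}^2-\half\sum_{s=1}^{t-1}\paren{1-\tvwti{s-1}\vzi{s}}^2$ (the last sum being in the form that the proof of \lemref{lemma_(wtx-utx)2} actually produces).

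Comparing with the target, the whole problem reduces to showing that, with probability $1-\delta$ uniformly in $t$,
\[
    \half\sum_{s=1}^{t-1}\paren{1-\vut\vzi{s}}^2-\half\sum_{s=1}^{t-1}\paren{1-\tvwti{s-1}\vzi{s}}^2\leq 2\sum_{s=1}^{t}q_s+36\ln\frac{t+4}{\delta}~.
\]
I would define $X_s=\half\paren{1-\vut\vzi{s}}^2-\half\paren{1-\tvwti{s-1}\vzi{s}}^2$ and condition on the history through round $s-1$, under which $\tvwi{s-1}$ is fixed. Applying \lemref{lemma_E(wtx-utx)_q_t} with the constant vector $\vw=\tvwi{s-1}$ gives $\Expp{s}{X_s}\leq 2q_s-\half\paren{\hsigmai{s}-\sigmai{s}}^2$, where $\sigmai{s}$ and $\hsigmai{s}$ are the halved true and estimated gaps of round $s$. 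Summing, $\sum_{s=1}^{t-1}\Expp{s}{X_s}\leq 2\sum_{s=1}^{t}q_s-\half\sum_{s=1}^{t-1}\paren{\hsigmai{s}-\sigmai{s}}^2$.

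It then remains to bound the martingale part $\sum_{s=1}^{t-1}M_s$ with $M_s=X_s-\Expp{s}{X_s}$. The algebraic identity I would exploit is $M_s=\paren{\hsigmai{s}-\sigmai{s}}\paren{\yi{s}-\Expp{s}{\yi{s}}}$, obtained by expanding the two squares using $\vut\vzi{s}=\yi{s}\sigmai{s}$ and $\tvwti{s-1}\vzi{s}=\yi{s}\hsigmai{s}$. This shows, via the projection constraint $\abs{\tvwti{s-1}\vPhii{\vxi{s},m}}\leq 1$ and $\abs{\vut\vPhii{\vxi{s},m}}\leq1$, that $\abs{M_s}\leq 4$, and, more importantly, that its conditional second moment obeys $\Expp{s}{M_s^2}\leq\paren{\hsigmai{s}-\sigmai{s}}^2$ -- precisely the quantity that appears, halved and with a favorable sign, in the conditional-mean bound. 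The main obstacle is to convert this self-bounding structure into the \emph{logarithmic} deviation demanded by the lemma, since a plain Azuma bound only yields $O\paren{\sqrt{t\ln(1/\delta)}}$. I would instead invoke a Bernstein/Freedman-type martingale inequality (in the style of the analysis of \cite{DBLP:journals/ml/CrammerG13} on which this proof builds), bounding $\sum_s M_s\leq\sqrt{2V_t\ln(1/\delta)}+O\paren{\ln(1/\delta)}$ with $V_t=\sum_{s=1}^{t-1}\Expp{s}{M_s^2}$, then splitting $\sqrt{2V_t\ln(1/\delta)}\leq\quarter V_t+2\ln(1/\delta)$ and absorbing $\quarter V_t\leq\quarter\sum_{s}\paren{\hsigmai{s}-\sigmai{s}}^2$ into the negative $\half\sum_s\paren{\hsigmai{s}-\sigmai{s}}^2$ coming from the conditional mean. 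The surviving negative variance term is discarded, leaving $\sum_{s=1}^{t-1}X_s\leq 2\sum_{s=1}^{t}q_s+O\paren{\ln(1/\delta)}$.

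Finally, to upgrade the fixed-$t$ statement to one holding uniformly over the horizon, I would either union bound with a time-dependent confidence $\delta_t\propto\delta/(t+4)^2$ (so $\sum_t\delta_t\leq\delta$) or use a stitched time-uniform form of the Bernstein inequality; this substitution is what produces the $\ln\frac{t+4}{\delta}$ dependence, and a careful bookkeeping of the increment bound $\abs{M_s}\leq4$ together with the variance constant pins down the numerical factor $36$. Reassembling, the bracket is at most $d_{0}\paren{\vu,\vzero}+2\sum_{s=1}^{t}q_s+2\sum_{s=1}^{t-1}h_s+36\ln\frac{t+4}{\delta}$, and multiplying by $2\vztii\ai{t-1}^{-1}\vzii$ gives exactly the asserted inequality.
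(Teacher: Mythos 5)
Your proposal is correct and follows essentially the route the paper intends: the paper's own ``proof'' of \lemref{lemma_delta_deltahat} is a one-line deferral to the corresponding lemma of \cite{DBLP:journals/ml/CrammerG13}, and your reconstruction --- plugging $\vv=\vPhii{\vxii,\indmt}-\vPhii{\vxii,\indnt}$ into \lemref{lemma_(wtx-utx)2}, controlling the conditional mean of $X_s$ via \lemref{lemma_E(wtx-utx)_q_t}, writing the fluctuation as $M_s=\paren{\hsigmai{s}-\sigmai{s}}\paren{\yi{s}-\Expp{s}{\yi{s}}}$, applying a Freedman-type bound whose variance term is absorbed by the negative $\half\sum_s\paren{\hsigmai{s}-\sigmai{s}}^2$, and union-bounding over $t$ to get the $\ln\frac{t+4}{\delta}$ factor --- is exactly that argument, including your correct observation that the usable form of \lemref{lemma_(wtx-utx)2} carries $\tvwti{s-1}\vzi{s}$ rather than $\tvwti{t-1}\vzi{s}$.
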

\begin{proof}
The proof is identical to the proof of a similar lemma in \cite{DBLP:journals/ml/CrammerG13}, substituting $\vxii$ for $\vzii$.
\end{proof}

\subsection{Proof of \thmref{thm_nonlinear}}
\label{app:proof_thm_nonlinear}

\begin{proof}
We have with probability at least $1-\delta$
\[
    R_T=\sum_{t=1}^{T}r_t \leq 2Q_t + 2 \sum_{t=1}^{T}\epst{t},
\]
where $\epst{t}^2 = 2\vztii\ai{t-1}^{-1}\vzii\cdot \eta_t$. Using previous techniques, such as the one shown in \cite{CesaBianchiCoGe05}, we have
\begin{align*}
\sum_{s=1}^{t-1}\vzti{s}\ai{s}^{-1}\vzi{s}
&\leq \ln \frac{\abs{\ai{t-1}}}{\abs{\ai{0}}}\\
&= \ln \abs{\ai{t-1}}\\
&\leq dK\cdot\ln\paren{1+\frac{\paren{t-1}}{dK}} \\
&\leq dK\cdot\ln\paren{1+\frac{T}{dK}}.
\end{align*}

Observe that at round $t$, the matrix $\ai{t-1}$ is augmented by the PSD rank-one matrix $\vzii\vztii$ regardless of the feedback. Hence,
\(
    \vztii\ai{t-1}^{-1}\vzii\leq1~.
\)
This means that $1\leq\frac{2}{1+\vztii\ai{t-1}^{-1}\vzii}$, and thus,
\[
\vztii\ai{t-1}^{-1}\vzii \leq \frac{2\cdot\vztii\ai{t-1}^{-1}\vzii} {1+\vztii\ai{t-1}^{-1}\vzii} = 2\vztii\aii^{-1}\vzii.
\]
We can therefore combine the results above:
\begin{align*}
\sum_{t=1}^{T}\epst{t}^2 &= \sum_{t=1}^{T}2\vztii\ai{t-1}^{-1}\vzii\times\paren{d_0\paren{\vu, \vzero}+2\sum_{s=1}^{t}q_t+2\sum_{s=1}^{t-1}h_s
+36\ln\frac{t+4}{\delta}}\\
&\leq \Bigg(d_0\paren{\vu, \vzero}+2\sum_{s=1}^{t}q_t+2dK\ln\paren{1+\frac{T}{dK}}+36\ln\frac{T+4}{\delta}\Bigg) \times 2\sum_{t=1}^{T}\vztii\ai{t-1}^{-1}\vzii \\
&\leq \Bigg(d_0\paren{\vu, \vzero}+2\sum_{s=1}^{t}q_t+2dK\ln\paren{1+\frac{T}{dK}} +36\ln\frac{T+4}{\delta}\Bigg) \times 4\sum_{t=1}^{T}\vztii\ai{t}^{-1}\vzii \\
&\leq \Bigg(d_0\paren{\vu, \vzero}+2\sum_{s=1}^{t}q_t+2dK\ln\paren{1+\frac{T}{dK}} +36\ln\frac{T+4}{\delta}\Bigg) \times 4dK\ln\paren{1+\frac{T}{dK}}.
\end{align*}
Observe that since $\sum_{t=1}^{T}\epst{t}^2\leq M$, it holds that $\sum_{t=1}^{T}\epst{t}\leq \sqrt{T\cdot M}$. Combining this with all of the above, we get
\begin{align*}
R_T &\leq~ 2Q_T + 2\sum_{t=1}^{T}\epst{t}\\
&\leq~2Q_T + \sqrt{d_0\paren{\vu, \vzero}+2Q_T
  +2dK\ln\paren{1+\frac{T}{dK}}+36\ln\frac{T+4}{\delta}} \times \sqrt{4TdK\ln\paren{1+\frac{T}{dK}}}\\
&= 2Q_T + \sqrt{2T}\paren{\sqrt{\paren{(2Q_T + A)B}} + B},
\end{align*}
thus proving the regret bound.
\end{proof}
\subsection{Equivalence to the Noisy Reward Setting}
\label{sec:noisy_reward}
As mentioned before, we assume that the rewards $r\paren{t,m}$ are as before,
but feedback is given based on a noisy version of the reward, $\hat{r}\paren{t,m}$,
such that:
\[
    \hat{r}\paren{t,m} = \begin{cases}
        + 1     &   w.p. \frac{1+r\paren{t,m}}{2} \\
        - 1     &   w.p. \frac{1-r\paren{t,m}}{2}
    \end{cases}
\]

meaning, the reward of an instance is shifted to either $+1$ or $-1$ with bias
according to the unshifted reward. As for the feedback, we assume now that it is
deterministic rather than stochastic, depending on the values of the noisy rewards
of the two predictions, with one exception - if the rewards are identical (both $+1$
or $-1$), the feedback is $\pm 1$ with equal probabilities. Let's now calculate
the probability of receiving a $+1$ reward for predictions $\indmt, \indnt$:

\begin{align*}
    \Pr\brackets{\yii = +1} &= \Pr\brackets{\hat{\rho}\paren{t,\indmt} = +1 \land \hat{\rho}\paren{t,\indmt} = -1} \\
                            &+ \half\cdot\Pr\brackets{\hat{\rho}\paren{t,\indmt} = +1 \land \hat{\rho}\paren{t,\indmt} = +1} \\
                            &+ \half\cdot\Pr\brackets{\hat{\rho}\paren{t,\indmt} = -1 \land \hat{\rho}\paren{t,\indmt} = -1} \\
                            &= \paren{\frac{1+\rho\paren{t,\indmt}}{2}}\paren{\frac{1-\rho\paren{t,\indnt}}{2}} \\
                            &+ \half\cdot\paren{\frac{1+\rho\paren{t,\indmt}}{2}}\paren{\frac{1+\rho\paren{t,\indnt}}{2}} \\
                            &+ \half\cdot\paren{\frac{1-\rho\paren{t,\indmt}}{2}}\paren{\frac{1-\rho\paren{t,\indnt}}{2}} \\
                            &= \frac{1+\half\paren{\rho\paren{t,\indmt}-\rho\paren{t,\indnt}}}{2}
\end{align*}

This is of course equivalent to the case where the rewards are clean and the
feedback is stochastic we've shown before. Our regret now can either take into
account the clean rewards as before, or the \emph{expected} rewards, as follows:

\[
    r_t = \max_m \Exp{\hat{\rho}\paren{t,m}} - \max\braces{\Exp{\hat{\rho}\paren{t,\indmt}},\Exp{\hat{\rho}\paren{t,\indnt}}}
\]

Notice the following:
\begin{align*}
    \Exp{\hat{\rho}\paren{t,m}} &= (+1)\cdot\Pr\brackets{\hat{\rho}\paren{t,m} = +1} + (-1)\cdot\Pr\brackets{\hat{\rho}\paren{t,m} = -1} \\
                                &= \frac{1+\rho\paren{t,m}}{2}-\frac{1-\rho\paren{t,m}}{2} \\
                                &= \rho\paren{t,m}
\end{align*}

Meaning the regret is also identical to before. This implies
that the regret bound we've shown for the stochastic feedback
setting applies here as well.

\subsection{Dependency Parsing}
\label{sec:sup_dependency_parsing}

In this setting, given a sentence $\vxii = \paren{\vxi{t,1} \comdots
  \vxi{t,n_t}}$ in some natural language, we aim to find the best
dependency parsing tree of that sentence. We assume a linear
regret model, where the reward for a parsing $\vmii$ for sentence
$\vxii$ is given by $\vut\vPhii{\vxii, \vmii}$ for some $\vu \in \reals^D$ in the unit ball
($\| \vu \| \leq 1$). This is done both for
simplicity and since parse trees are often evaluated by the number of
mistakes. As before, our algorithm maintains a weight vector $\vwii\in\reals^D$,
which is used to estimate the optimal dependency parsing. In addition,
it maintain a PD matrix $\aii$, which is used to estimate our
confidence in the relation between two possible parses. Finally, let $\tvwii$ denote
an orthogonal projection of $\vwii$ that maintains $\vert \tvwtii \vPhii{\vxii, \vm} \vert \leq 1$
for any $\vm \in \mathcal{K}_t$.

We find the optimal parsing by maximizing the following term:
\[
    \vmii = \arg\max_{\vm\in\mathcal{K}_t} \tvwti{t-1} \vPhii{\vxii, \vm}.
\]
In order to find the maximal $\vm$, we construct from the sentence a full (directed) graph, where each word is a node, and the weight of an edge from word $\vxin{t}{i}$ to word $\vxin{t}{j}$ is given by $\tvwti{t-1}\vphii{\vxii, i, j}$. We can then use the Chu-Liu-Edmonds algorithm \cite{edmonds1968optimum} to find the (non-projective) parsing $\vm$ which maximizes the term above, in $O\paren{S^2}$ time.

Let $\pi_{\vm}\paren{j}$ denote the source of word $\vxin{t}{j}$
according to $\vm$. After producing the believed-best parse tree, we
select a word in the sentence (indexed by $\indit$) and a possible
alternative source for it (indexed by $\indjt$), such that our
confidence in the source of $\vxin{t}{\indit}$ being
$\pi_{\vmii}\paren{\indit}$ or $\vxin{t}{\indjt}$ is minimal, meaning
that the confusion term is maximal. Algorithm~\ref{conquer_dependency}
describes our solution for this setting.

For convenience, we define the following notations:
\begin{align*}
    &\epstk{i, j, k}^2 &=& \eta_t \times \tran{ \paren{ \vphii{t, i,
          j} - \vphii{t, k, j} } } \ai{t-1}^{-1}\\
&&&\times \paren{ \vphii{t, i, j} - \vphii{t, k, j} } \\
    &\deltatk{i, j} &=& \vut\vphii{\vxii, i, j} \\
    &\deltatk{i, j, k} &=& \half \paren{\deltatk{i, j} - \deltatk{k, j}} \\
    &\hdeltatk{i, j} &=& \tvwti{t-1}\vphii{\vxii, i, j} \\
    &\hdeltatk{i, j, k} &=& \half \paren{\hdeltatk{i, j} - \hdeltatk{k, j}}
\end{align*}

\begin{algorithm}[!ht]
\caption{\cnqrname\ Dependency Parsing - Algorithm Outline}
\label{conquer_dependency}
\begin{algorithmic}[1]
\REQUIRE $\delta \in \paren{0, 1}$
\STATE Initialize $\vwi{0} = \vzero \in \reals^{D}, \ai{0} = \mi_{D\times D}$
\FOR {$t = 1$ to $T$}
    \STATE Receive sentence $\vxii = \paren{\vxi{t,1} \comdots \vxi{t,s_t}}$
    \STATE Project $\tvwi{t-1} = \arg\min ~ d_t\paren{\vw, \vwi{t-1}}$ \\
    Subject To $\vert\vwt\vphii{\vxii, i,j} \vert \leq 1 ~ \forall i,j \in \braces{1\comdots s_t},~ i\neq j$
    \STATE Set $\vmii = \arg\max_{\vm \in \mathcal{K}_t} \hdeltatk{\vm}$
    \FORALL {$\paren{k, j} \in \vmii$}
        \FORALL {$i \in {1 \comdots s_t},~i\neq k,j$}
            \STATE Set $\beta\paren{i, j} = \epstk{k, j, i}$
        \ENDFOR
    \ENDFOR
    \STATE Set $\paren{\indit, \indjt} = \underset{i,j}{\arg\max} \beta\paren{i, j}$
    \STATE Set $\vnii$ such that $\pi_{\vnii}\paren{j} =
                    \begin{cases}
                        \pi_{\vmii}\paren{j}  &\mbox{if } j\neq \indjt \\
                        \indit          &\mbox{if } j= \indjt
                    \end{cases}
                    $
    \STATE Output $\vmii, \vnii$
    \STATE Receive feedback $\yii \in \braces{ \pm 1 }$
    \STATE Set $\vzii = \half \yii \cdot \paren{\vPhii{\vxii, \vmii} - \vPhii{\vxii, \vnii}}$
    \STATE Update $\vwii = \aii^{-1}\paren{\ai{t-1} \tvwi{t-1} + \vzii}$
    \STATE Update $\aii = \ai{t-1} + \vzii \vztii$
\ENDFOR
\ENSURE $\tvwi{T}$
\end{algorithmic}
\end{algorithm}

\subsubsection*{Regret Bound for the Dependency Parsing Setting}
This setting is slightly different than the settings we have seen
previously, but we can build on most of the results or proof techniques.

\begin{lemma}\label{lemma_r_t_parse}
In the dependency parsing setting, if at time $t$ the algorithm is such that $\abs{\deltatk{i,j,k}-\hdeltatk{i,j,k}} \leq \epstk{i,j,k}$ for all $i,j,k \in \braces{1 \comdots s_t}$, then
\[
    r_{t}= \underset{\vm\in\mathcal{K}_t}{\max}~\deltatk{\vm} - \max\braces{\deltatk{\vmii}, \deltatk{\vnii}} \leq  2S\epst{t}
\]
where
\[
    \epst{t}^2 = 2\vztii\ai{t-1}^{-1}\vzii\times\eta_t
\]
for all $t = 1 , 2 \dots$.
\end{lemma}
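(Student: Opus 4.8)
The plan is to mirror the proof of \lemref{lemma_r_t_q_t}, adapting it to the fact that here the scores decompose additively over the edges of a parse tree. Let $\vmis = \arg\max_{\vm\in\mathcal{K}_t}\deltatk{\vm}$ be the reward-optimal tree, so that $r_t = \deltatk{\vmis} - \max\braces{\deltatk{\vmii}, \deltatk{\vnii}} \leq \deltatk{\vmis} - \deltatk{\vmii}$, simply dropping the favorable $\max$. Unlike the multiclass setting there is no approximation-error term, since the regret is defined directly against the linear scores $\deltatk{\vm} = \vut\vPhii{\vxii, \vm}$.

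The first key step is to decompose the tree-level gap into a sum over target words. Since $\vmis$ and $\vmii$ are both spanning arborescences over the same words (sharing a root), each non-root word $j$ has a unique incoming edge $\paren{\pi_{\vmis}\paren{j}, j}$ in $\vmis$ and $\paren{\pi_{\vmii}\paren{j}, j}$ in $\vmii$. Using $\vPhii{\vxii, \vm} = \frac{1}{s_t-1}\sum_{\paren{i,j}\in\vm}\vphii{\vxii, i, j}$ together with the definition $\deltatk{i,j,k} = \half\paren{\deltatk{i,j} - \deltatk{k,j}}$, I would write $\deltatk{\vmis} - \deltatk{\vmii} = \frac{2}{s_t-1}\sum_j \deltatk{\pi_{\vmis}\paren{j}, j, \pi_{\vmii}\paren{j}}$, where only the words on which the two trees disagree contribute (agreeing words give a zero score difference and a vanishing confusion term $\epstk{i,j,i}=0$). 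I then apply the per-edge confusion assumption $\deltatk{i,j,k} \leq \hdeltatk{i,j,k} + \epstk{i,j,k}$ to each summand.

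Next I would discard the estimated-score terms using the global optimality of $\vmii$. The point is that $\sum_j \hdeltatk{\pi_{\vmis}\paren{j}, j, \pi_{\vmii}\paren{j}} = \half\paren{s_t-1}\paren{\hdeltatk{\vmis} - \hdeltatk{\vmii}}$ telescopes back to the tree-level score difference, which is $\leq 0$ because $\vmii$ maximizes $\hdeltatk{\cdot}$ over all trees (via Chu--Liu--Edmonds). This leaves $r_t \leq \frac{2}{s_t-1}\sum_j \epstk{\pi_{\vmis}\paren{j}, j, \pi_{\vmii}\paren{j}}$. Using the symmetry $\epstk{i,j,k} = \epstk{k,j,i}$, each summand equals a quantity $\beta\paren{\pi_{\vmis}\paren{j}, j}$ that the algorithm actually evaluates (the source in $\vmii$ plays the role of $k$, and $\pi_{\vmis}\paren{j}\neq\pi_{\vmii}\paren{j}$ on the disagreeing words, so $\beta$ is well defined), hence is bounded by $\max_{i,j}\beta\paren{i,j} = \epstk{\pi_{\vmii}\paren{\indjt}, \indjt, \indit}$, the confusion of the queried pair. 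Since there are at most $s_t-1$ summands, the $\frac{1}{s_t-1}$ factor cancels the count and gives $r_t \leq 2\,\epstk{\pi_{\vmii}\paren{\indjt}, \indjt, \indit}$.

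Finally I would relate this single edge-confusion term to $\epst{t}$ through the update vector $\vzii = \half\yii\paren{\vPhii{\vxii, \vmii} - \vPhii{\vxii, \vnii}}$. Because $\vmii$ and $\vnii$ differ only in the edge into $\indjt$, we have $\vPhii{\vxii, \vmii} - \vPhii{\vxii, \vnii} = \frac{1}{s_t-1}\paren{\vphii{\vxii, \pi_{\vmii}\paren{\indjt}, \indjt} - \vphii{\vxii, \indit, \indjt}}$, so the difference vector defining $\epstk{\pi_{\vmii}\paren{\indjt}, \indjt, \indit}$ is exactly $2\paren{s_t-1}\yii\vzii$; substituting into the definitions of $\epstk{\cdot}$ and $\epst{t}^2 = 2\vztii\ai{t-1}^{-1}\vzii\times\eta_t$ shows $\epstk{\pi_{\vmii}\paren{\indjt}, \indjt, \indit}$ is a fixed multiple of $\paren{s_t-1}\epst{t}$, which with $s_t\leq S$ and the same constant convention as in \lemref{lemma_r_t_q_t} yields $r_t \leq 2S\epst{t}$. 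I expect the main obstacle to be the third step: the estimated-score terms cannot be killed edge-by-edge, since swapping a single edge of $\vmii$ toward $\vmis$ may create a cycle and leave the arborescence polytope, so the decomposition must be summed back up to invoke the \emph{global} maximum-spanning-tree optimality of $\vmii$. A secondary subtlety is the bookkeeping that the two trees share the same root, so that the per-word pairing $\pi_{\vmis}\paren{j}\leftrightarrow\pi_{\vmii}\paren{j}$ is well defined and the counting that produces the factor $S$ is exact.
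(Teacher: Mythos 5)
Your proof follows essentially the same route as the paper's: drop the $\max$, decompose the tree-level gap over each word's incoming edge, apply the per-edge confusion assumption, discard the telescoped score terms via the global (Chu--Liu--Edmonds) optimality of $\vmii$, and bound each remaining confusion term by the queried maximum $\epstk{\pi_{\vmii}\paren{\indjt},\indjt,\indit}$. You are in fact more careful than the paper about the $\frac{1}{s_t-1}$ normalization of $\vPhii{\vxii,\vm}$ and the resulting $\sqrt{2}\paren{s_t-1}$ relation between the queried edge-level confusion and $\epst{t}$ (the paper's displayed chain silently drops both), which affects only the constant in the $O\paren{S\epst{t}}$ bound.
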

\begin{proof}
Similarly to \lemref{lemma_r_t_q_t}, we have:
\begin{align*}
    r_t &=~ \deltats -\max\braces{\deltatk{\vmii},\deltatk{\vnii}} \\
    & \leq~   \deltats -\deltatk{\vmii} \\
    &=~ 2\sum_{i=1}^{s_t} \deltatk{\pi_{\vmis}\paren{i}, i, \pi_{\vmii}\paren{i}}\\
    &\leq~  2\sum_{i=1}^{s_t} \hdeltatk{\pi_{\vmis}\paren{i}, i, \pi_{\vmii}\paren{i}} + \epstk{\pi_{\vmis}\paren{i}, i, \pi_{\vmii}\paren{i}} \\
    &=~ \hdeltats - \hdeltatk{\vmii} + 2\sum_{i=1}^{s_t} \epstk{\pi_{\vmis}\paren{i}, i, \pi_{\vmii}\paren{i}}\\
    &\leq~  2S \cdot \epstk{\indit, \indjt, \pi_{\vmii}\paren{\indjt}} \\
    &=~ 2S \times \epst{t}~.
\end{align*}
\end{proof}

\begin{theorem}\label{thm_parse}
In the dependency parsing setting described so far, the cumulative regret $R_t$ of the algorithm satisfies
\[
    R_T = O \paren{S\times \sqrt{2T}\paren{\sqrt{AB}+B}},
\]
where
\begin{eqnarray*}
    A &= &d_0\paren{\vu,\vzero}+36\cdot\ln\frac{T+4}{\delta}\\
    B &= &2dK\ln\paren{1+\frac{T}{dK}},
\end{eqnarray*}
with probability at least $1-\delta$ uniformly over the time horizon $T$.
\end{theorem}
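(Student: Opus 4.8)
The plan is to follow the proof of \thmref{thm_nonlinear} almost verbatim, exploiting two features of the parsing setting. First, the reward of a parse is assumed \emph{exactly} linear, $r\paren{t,\vm}=\vut\vPhii{\vxii,\vm}$, so the per-round approximation error vanishes, $q_t=0$ and $Q_T=0$; consequently the $Q_T$ contributions drop out of both $\eta_t$ and the final bound, which is why the statement contains no $2Q_T$ term and why $A$ omits $Q_T$. Second, the per-round regret is now governed by a sum of edge-level confusion coefficients rather than a single one, which is the source of the extra factor $S$.

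The first step is to guarantee the high-probability event on which \lemref{lemma_r_t_parse} is conditioned, namely $\abs{\deltatk{i,j,k}-\hdeltatk{i,j,k}}\leq\epstk{i,j,k}$ simultaneously for all $i,j,k$ and all $t$. This is the direct analog of \lemref{lemma_delta_deltahat}: the three-index gap $\deltatk{i,j,k}-\hdeltatk{i,j,k}$ is a Mahalanobis inner product of $\vu-\tvwi{t-1}$ against the edge-difference vector $\vphii{t,i,j}-\vphii{t,k,j}$, and the martingale concentration argument of \cite{DBLP:journals/ml/CrammerG13} (with the update vector $\vzii$ playing the role of their instance vector and with $q_t=0$) delivers the bound uniformly over $t$ with probability at least $1-\delta$. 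On this event \lemref{lemma_r_t_parse} gives $r_t\leq 2S\,\epst{t}$.

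The second step is the summation, which is mechanical. Summing and applying Cauchy--Schwarz gives $R_T\leq 2S\sum_{t=1}^{T}\epst{t}\leq 2S\sqrt{T\sum_{t=1}^{T}\epst{t}^2}$, where $\epst{t}^2=2\vztii\ai{t-1}^{-1}\vzii\cdot\eta_t$ and, with $q_t=0$, $\eta_t=d_0\paren{\vu,\vzero}+2\sum_{s=1}^{t-1}\vzti{s}\ai{s-1}^{-1}\vzi{s}+36\ln\frac{t+4}{\delta}$. The potential bound $\sum_s\vzti{s}\ai{s}^{-1}\vzi{s}\leq\ln\abs{\ai{t-1}}\leq dK\ln\paren{1+\frac{T}{dK}}$ (valid since $\Vert\vzii\Vert\leq 1$, because each $\vPhii{\vxii,\vm}$ is unit-norm) and the rank-one augmentation inequality $\vztii\ai{t-1}^{-1}\vzii\leq 2\vztii\aii^{-1}\vzii$ carry over unchanged from \thmref{thm_nonlinear}, and together they yield $\sum_t\epst{t}^2\leq 2B\paren{A+B}$ with $A,B$ as in the statement. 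Substituting and using $\sqrt{AB+B^2}\leq\sqrt{AB}+B$ produces $R_T\leq 2S\sqrt{2T}\paren{\sqrt{AB}+B}$, which is the claimed $O\paren{S\sqrt{2T}\paren{\sqrt{AB}+B}}$.

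The main obstacle I anticipate is the edge-decomposition underlying \lemref{lemma_r_t_parse} rather than the summation. One must verify that the tree-level reward gap $\deltats-\deltatk{\vmii}$ telescopes into a sum of $s_t$ edge-level three-index terms $\deltatk{\pi_{\vmis}\paren{i},i,\pi_{\vmii}\paren{i}}$, that each is bounded by its confusion coefficient on the good event, that the surviving $\hat\Delta$ terms recombine into $\hdeltats-\hdeltatk{\vmii}\leq 0$ by optimality of $\vmii$, and that the remaining sum of $s_t\leq S$ confusion terms is dominated by $S$ times the single maximal term $\epst{t}$ selected by the algorithm. The associated bookkeeping—connecting the tree-level update vector $\vzii=\tfrac12\yii\paren{\vPhii{\vxii,\vmii}-\vPhii{\vxii,\vnii}}$, which since $\vmii$ and $\vnii$ differ only at the source of word $\indjt$ is a scaled edge-difference, to the edge-level coefficient $\epstk{\indit,\indjt,\pi_{\vmii}\paren{\indjt}}$ and tracking the $\tfrac{1}{s_t-1}$ normalization—is the delicate part; once it is in place, the rest is a routine repetition of \thmref{thm_nonlinear} with the $Q_T$ terms set to zero.
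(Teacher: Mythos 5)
Your proposal is correct and follows essentially the same route as the paper: the paper's proof simply invokes \lemref{lemma_r_t_parse} to get $R_T\leq 2S\sum_{t=1}^{T}\epst{t}$ and then repeats the summation argument of \thmref{thm_nonlinear} with the approximation-error terms absent (since the reward is exactly linear here, $Q_T=0$), exactly as you describe. The extra detail you supply—the high-probability event via the analog of \lemref{lemma_delta_deltahat} and the edge-decomposition bookkeeping inside \lemref{lemma_r_t_parse}—is consistent with, and more explicit than, what the paper writes.
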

\begin{proof}
From \lemref{lemma_r_t_parse}, we have the following result:
\[
    R_T=\sum_{t=1}^{T}r_t \leq 2S \sum_{t=1}^{T}\epst{t}.
\]
We can now use the exact same path as in \thmref{thm_nonlinear} to show the regret bound.
\end{proof}

\subsection{Computing the Projection}
\label{sec:projection}
Denote by $\vv^i$ the vector
computed after the i$th$ iteration. Then, the new vector at this point
is given by,
\(
\vv^{i+1} = \arg\min_\vw~d_t\paren{\vw, \vv^i} ~ s.t. ~
\vert\vwt\vPhii{\vxii, m(i)} \vert \leq 1~,
\)
for some $m(i) \in\mathcal{K}$, which can be easily solved using
Lagrange's method. The solution is given by,
\begin{align}
\vv^{i+1}   &= \vv^{i} - \aii^{-1} \vPhii{\vxii, m(i)} \cdot \sign\paren{\vv^i\cdot\vPhii{\vxii,  m(i)}}\label{projection}
            \cdot\frac{\max\{ \vert \vv^i\cdot\vPhii{\vxii,   m(i)}\vert-1,0\}}{\vPhii{\vxii, m(i)}^\top \aii^{-1}\vPhii{\vxii, m(i)}}  ~.
\end{align}

To conclude, the projection algorithm sets $\vv^1=\vwi{t-1}$. It then
iterates. On the i$th$ iteration, it picks a (random) index $m=m(i)$
and sets $\vv^{i+1}$ according to the three cases of $\vv^i$ sketched
above. This process is repeated until some convergence criteria are
met, and then the last value vector $\vv^i$ is returned. The time
complexity is $O(D^2 + n D^2)$, where $D^2$ is the time needed to
compute the inverse $\aii^{-1}$ if performed incrementally, $D^2$
the time needed to compute each iteration, and $n$ the number of iterations performed.

\subsection{Additional Related Work}
\tabref{table_regret_summary} and \tabref{table_summary} provide some context
for our setting compared with other similar settings, enabling a side-by-side
comparison and a better understanding of our solution compared with existing works.

\begin{table*}
    \begin{tabular}{|>{\tiny}l|>{\tiny}l|>{\tiny}l|>{\tiny}l|} \hline
    ~                       & \textbf{\algname -GNC} (ours)                               & \textbf{Confidit} \cite{DBLP:journals/ml/CrammerG13}  & \textbf{Partial Feedback} \cite{gentile2012multilabel}\\\hline \hline
    Parameters              & None                                                      & $\alpha \in (-1, 1]$                                  & Constant $a\in\brackets{0, 1}$                                                \\
    ~                       & ~                                                         & ~                                                     & Cost values $c\paren{i,s}$                                                    \\
    ~                       & ~                                                         & ~                                                     & Interval $D\in\brackets{-R, R}$                                               \\
    ~                       & ~                                                         & ~                                                     & Function $g:D\rightarrow\mathcal{R}$                                          \\ \hline
    Model                   & \multicolumn{2}{>{\footnotesize}c|}{$\vwii\in\reals^{D}$}                                                                 & $\vwi{i,t}\in\reals^{d}~,~i = 1 \comdots K$                                                      \\
    ~                       & \multicolumn{2}{>{\footnotesize}c|}{$\aii\in\reals^{D\times D}$}                                                          & $\ai{i,t}\in\reals^{d\times d}~,~i = 1 \comdots K$                                               \\ \hline
    Predictions             & $\indmt = \arg\max_m \hdeltatk{m}$                        & $\indmt = \arg\max_m \hdeltatk{m} + \epstk{m}$        & $\hat{Y}_t = \arg\max_Y $                                                     \\
    ~                       & $\indnt = \arg\max_n \hdeltatk{n} + \epstk{\indmt, n}$    & ~                                                     & $~~~\paren{1-a} \sum_{i\in Y}\left(c\paren{j_i, \abs{Y}} \right.$             \\
    ~                       & ~                                                         & ~                                                     & $~~~\left.- \paren{\frac{a}{1-a}+c\paren{j_i, \abs{Y}}} \hat{p}_{i,t}\right)$ \\ \hline
    Number of               & $2$                                                       & $1$                                                   & $1 \leq \abs{\hat{Y}_t} \leq K$                                               \\
    Predictions             & ~                                                         & ~                                                     & ~                                                                             \\ \hline
    Feedback                & Stochastic (Bernoulli)                                    & \multicolumn{2}{>{\footnotesize}c|} {Deterministic}                                                                                           \\
    Type                    & ~                                                         & \multicolumn{2}{>{\footnotesize}c|} {~}                                                                                                       \\ \hline
    Feedback                & $\Pr\brackets{\yii=\pm 1}$ & $\yii = \begin{cases} 1 & \indst \neq \indmt\\ 0 & \indst = \indmt \end{cases}$ & $Y_t\cap\hat{Y}_t$                                 \\
    ~                       & $~~~= \half\paren{1\pm\half\paren{r\paren{t, \indmt} - r\paren{t, \indnt}}}$  & ~      & ~                                                                             \\ \hline
    Update                  & ~                                                         & $s_t = \begin{cases} +1 & \yii = 0 \\ +1 & \yii = 1,\textrm{w.p.} \frac{1-\alpha}{2} \\ -1 & \yii = 1,\textrm{w.p.} \frac{1+\alpha}{2} \end{cases}$              & $s_{i,t} = \begin{cases} +1 & i\in Y_t \cap \hat{Y}_t \\ -1 & i\in \hat{Y}_t / Y_t \\ 0 & \textrm{otherwise}\end{cases}$ \\
    ~                       & $\vzii = \half\paren{\vphii{\vxii, \indmt}-\vphii{\vxii, \indnt}}$  & $\vzii = \vphii{\vxii, \indmt}$             & $\vzii = \vxii$                                                   \\
    ~                       & $\aii = \ai{t-1} + \vzii \vztii$                          & $\aii = \ai{t-1} + \vzii \vztii$                      & $\ai{i,t} = \ai{i,t-1} + \vzii\vztii$                                 \\
    ~                       & $\vwii = \tvwi{t-1}$                                      & $\vwii = \tvwi{t-1}$                                  & $\vwi{i,t} = \tvwi{i,t}$                                                      \\
    ~                       & $~~~+\paren{\yii-\hdeltatk{\indmt, \indnt}}\aii^{-1}\vzii$               & $~~~+ \paren{s_t-\hdeltatk{\indmt}}\aii^{-1}\vzii$           & $~~~+ \frac{g\paren{s_{i,t}\hdeltatk{i,t}}s_{i,t}}{c_L^{''}} \ai{i,t}^{-1}\vzii$ \\ \hline
    Output                  & \multicolumn{3}{>{\footnotesize}c|}{$\vwi{T}$} \\ \hline
    \end{tabular}
    \caption{A summary of contextual bandits methods with partial feedback}
    \label{table_summary}
\end{table*}

\begin{table*}
    \begin{tabular}{| >{\tiny}l | >{\tiny}l | >{\tiny}l | >{\tiny}l | >{\tiny}l | >{\tiny}l |} \hline
    \textbf{Algorithm}              & \textbf{Regret}                                                       & \textbf{Feedback} & \textbf{Assumptions}  & \textbf{Algorithm}   & \textbf{Context}          \\
    \textbf{Name}                   & \textbf{Formulation}                                                  & \textbf{Type}     & ~                     & \textbf{Type}        & ~                         \\ \hline\hline
    \textit{Interleaved}            & \textit{Strong regret:}                                               & Relative          & - Stochastic Triangle & Survival             & No                        \\
    \textit{Filter} \cite{yue2012k} & $\Pr\braces{b_{\indst} > b_{\indmt}} + \Pr\braces{b_{\indst} > b_{\indnt}} $ & Stochastic & ~~Inequality          & ~                    & ~                         \\
    ~                               & \textit{Weak regret:}                                                 & ~                 & - Strong Stochastic   & ~                    & ~                         \\
    ~                               & $\min\left[\Pr\braces{b_{\indst} > b_{\indmt}}, \Pr\braces{b_{\indst} > b_{\indnt}}\right]$ & ~ & ~~Transitivity  & ~                    & ~                         \\ \hline
    \textit{Beat the}               & $\half\Pr\braces{b_{\indst} > b_{\indmt}}+\half\Pr\braces{b_{\indst} > b_{\indnt}}$ & Relative  & - Stochastic Triangle & Survival       & No                        \\
    \textit{Mean} \cite{yue2011beat}& ~                                                                     & Stochastic        & ~~Inequality          & ~                    & ~                         \\
    ~                               & ~                                                                     & ~                 & - Relaxed Stochastic  & ~                    & ~                         \\
    ~                               & ~                                                                     & ~                 & ~~Transitivity        & ~                    & ~                         \\ \hline
    \textit{Confidit} \cite{DBLP:journals/ml/CrammerG13} & $\max_m\brackets{\Pr\paren{\indst = m} }$        & Absolute          & Generative model      & UCB                  & Yes                       \\
    ~                               & $~~~~~~~~~- \Pr\paren{\indst = \indmt}$                               & Deterministic     & for labels            & Second Order         & ~                         \\ \hline
    \textit{Partial}                & $\paren{1-a}\sum_{i\in\hat{Y}_t}\left(c\paren{j_i, \abs{\hat{Y}_t}}\right.$ & Absolute    & Specific form for     & Linear               & Yes                       \\
    \textit{Feedback} \cite{gentile2012multilabel} & $~~~~~~~~~\left.-\paren{\frac{a}{1-a}+c\paren{j_i, \abs{\hat{Y}_t}}} \mathbbm{1}_{i\in Y_t}\right)$& Deterministic        & marginals  & Second Order  & ~ \\ \hline
    \textit{\algname -GNC}            & $\min \left[r\paren{\vxii, \indst}- r\paren{\vxii, \indmt}, \right. $  & Relative         & Reward can be         & Mixed (UCB           & Yes                       \\
    (ours)                          & $~~~~~~~~~\left.r\paren{\vxii, \indst}-r\paren{\vxii, \indnt}\right] $ & Stochastic       & approximated using    & \& Linear)           & ~                         \\
    ~                               & ~                                                                      & linear model     & Second Order          & ~                    & ~                          \\ \hline
    \end{tabular}
    \caption{A comparison of bandits methods with partial feedback}
    \label{table_regret_summary}
\end{table*}

\subsection{Additional Results} \label{app:results}
We include additional results from our study, which could not be fitted into the paper itself.
\figref{fig_amazon_test2} shows the test errors of the four tested algorithms, over an additional
9 domains. These results correspond to the average and aggregated results we included in the paper.
Finally, \figref{fig_amazon_train} demonstrates the performance of the tested algorithms in
terms of online regret over 3 domains. It is evident that good performance in terms of test error
corresponds with good performance in terms of online loss, which is an indication that online loss
is a valid estimate of algorithm performance in this setting.

Finally, \tabref{tab:optional_query} shows in what percent of rounds \algname -GNC avoided making
a query on average, per domain and number of reviews per round.

\begin{table*}
\begin{center}
    \begin{tabular}{|l|c|c|c|c|} \hline
\textbf{Number of Reviews} & \textbf{5} & \textbf{10} & \textbf{15} & \textbf{20} \\ \hline \hline
\textit{             Android} & 0.53\% & 0.79\% & 0.53\% & 0.26\% \\ \hline
\textit{                Arts} & 0.26\% & 0.00\% & 0.00\% & 0.00\% \\ \hline
\textit{          Automotive} & 0.64\% & 0.21\% & 0.00\% & 0.00\% \\ \hline
\textit{       Baby Products} & 1.73\% & 0.11\% & 0.05\% & 0.05\% \\ \hline
\textit{              Beauty} & 2.00\% & 0.16\% & 0.11\% & 0.05\% \\ \hline
\textit{               Books} & 0.07\% & 0.01\% & 0.01\% & 0.00\% \\ \hline
\textit{              Camera} & 0.59\% & 0.29\% & 0.03\% & 0.08\% \\ \hline
\textit{         Cell Phones} & 1.17\% & 0.11\% & 0.05\% & 0.03\% \\ \hline
\textit{            Clothing} & 4.56\% & 0.83\% & 0.32\% & 0.16\% \\ \hline
\textit{           Computers} & 0.27\% & 0.00\% & 0.05\% & 0.00\% \\ \hline
\textit{         Electronics} & 0.76\% & 0.19\% & 0.13\% & 0.07\% \\ \hline
\textit{                Food} & 1.41\% & 0.37\% & 0.11\% & 0.13\% \\ \hline
\textit{      Garden \& Pets} & 1.23\% & 0.16\% & 0.05\% & 0.00\% \\ \hline
\textit{              Health} & 2.01\% & 0.41\% & 0.12\% & 0.08\% \\ \hline
\textit{                Home} & 0.59\% & 0.21\% & 0.03\% & 0.00\% \\ \hline
\textit{          Industrial} & 0.00\% & 0.00\% & 0.00\% & 0.00\% \\ \hline
\textit{             Jewelry} & 0.53\% & 0.26\% & 0.26\% & 0.00\% \\ \hline
\textit{              Kindle} & 0.24\% & 0.05\% & 0.00\% & 0.01\% \\ \hline
\textit{             Kitchen} & 1.96\% & 0.43\% & 0.08\% & 0.01\% \\ \hline
\textit{           Magazines} & 0.00\% & 0.00\% & 0.00\% & 0.00\% \\ \hline
\textit{        Movies \& TV} & 0.48\% & 0.21\% & 0.12\% & 0.01\% \\ \hline
\textit{                 MP3} & 0.40\% & 0.12\% & 0.04\% & 0.11\% \\ \hline
\textit{               Music} & 1.21\% & 0.59\% & 0.41\% & 0.12\% \\ \hline
\textit{ Musical Instruments} & 0.00\% & 0.00\% & 0.00\% & 0.00\% \\ \hline
\textit{              Office} & 1.01\% & 0.16\% & 0.16\% & 0.00\% \\ \hline
\textit{               Patio} & 0.80\% & 0.19\% & 0.08\% & 0.00\% \\ \hline
\textit{               Shoes} & 8.27\% & 1.44\% & 0.75\% & 0.51\% \\ \hline
\textit{            Software} & 0.48\% & 0.05\% & 0.03\% & 0.00\% \\ \hline
\textit{              Sports} & 0.51\% & 0.19\% & 0.11\% & 0.05\% \\ \hline
\textit{                Toys} & 1.65\% & 0.43\% & 0.69\% & 0.43\% \\ \hline
\textit{         Video Games} & 0.48\% & 0.00\% & 0.16\% & 0.05\% \\ \hline
\textit{              Videos} & 0.12\% & 0.01\% & 0.00\% & 0.01\% \\ \hline
\textit{             Watches} & 0.00\% & 0.00\% & 0.00\% & 0.00\% \\ \hline
\end{tabular}
\end{center}
    \caption{Percent of rounds where query was not made in \algname -GNC for each domain}
    \label{tab:optional_query}
\end{table*}

\begin{figure*}[ht]
 \centering
  \includegraphics[width=0.3\textwidth]{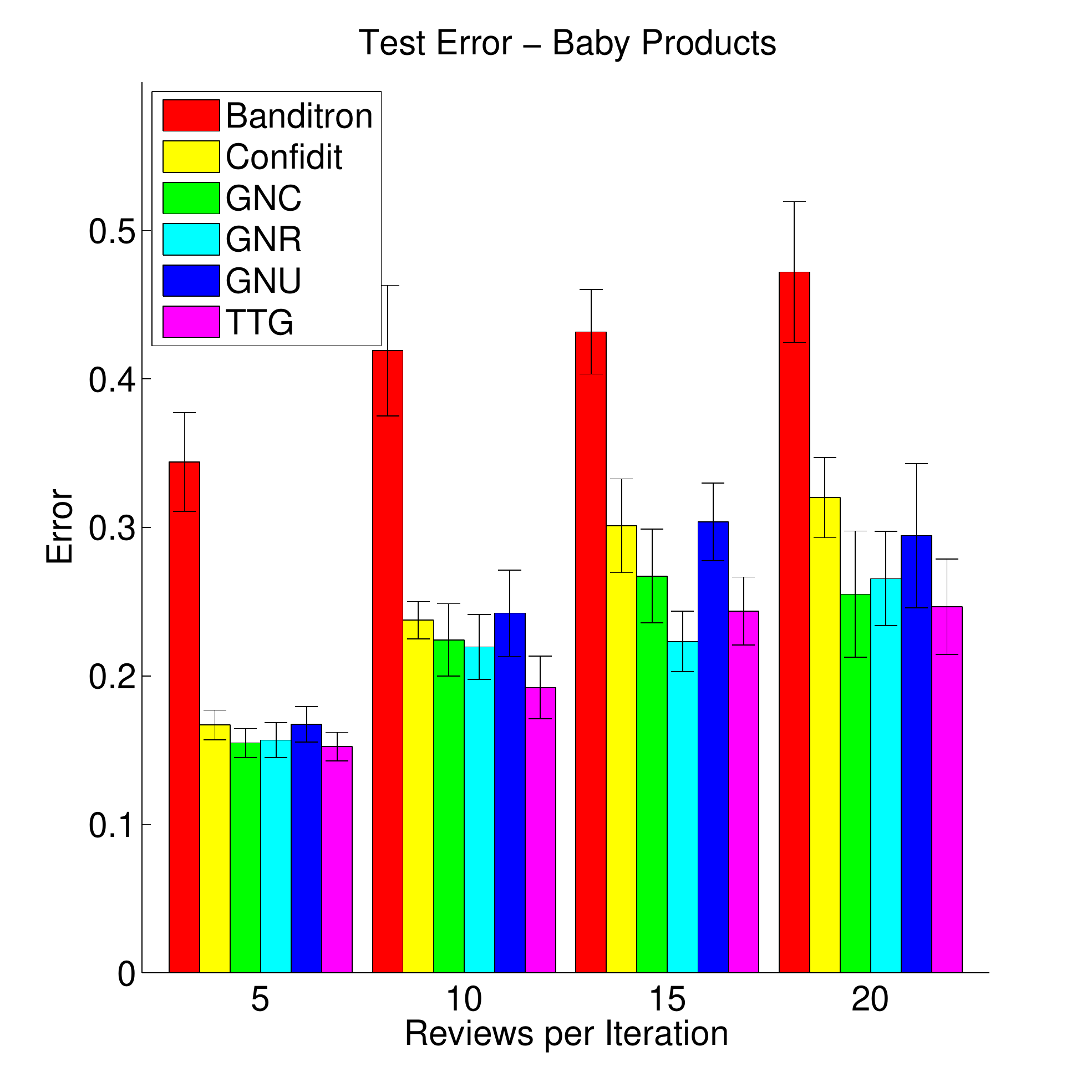}
  \includegraphics[width=0.3\textwidth]{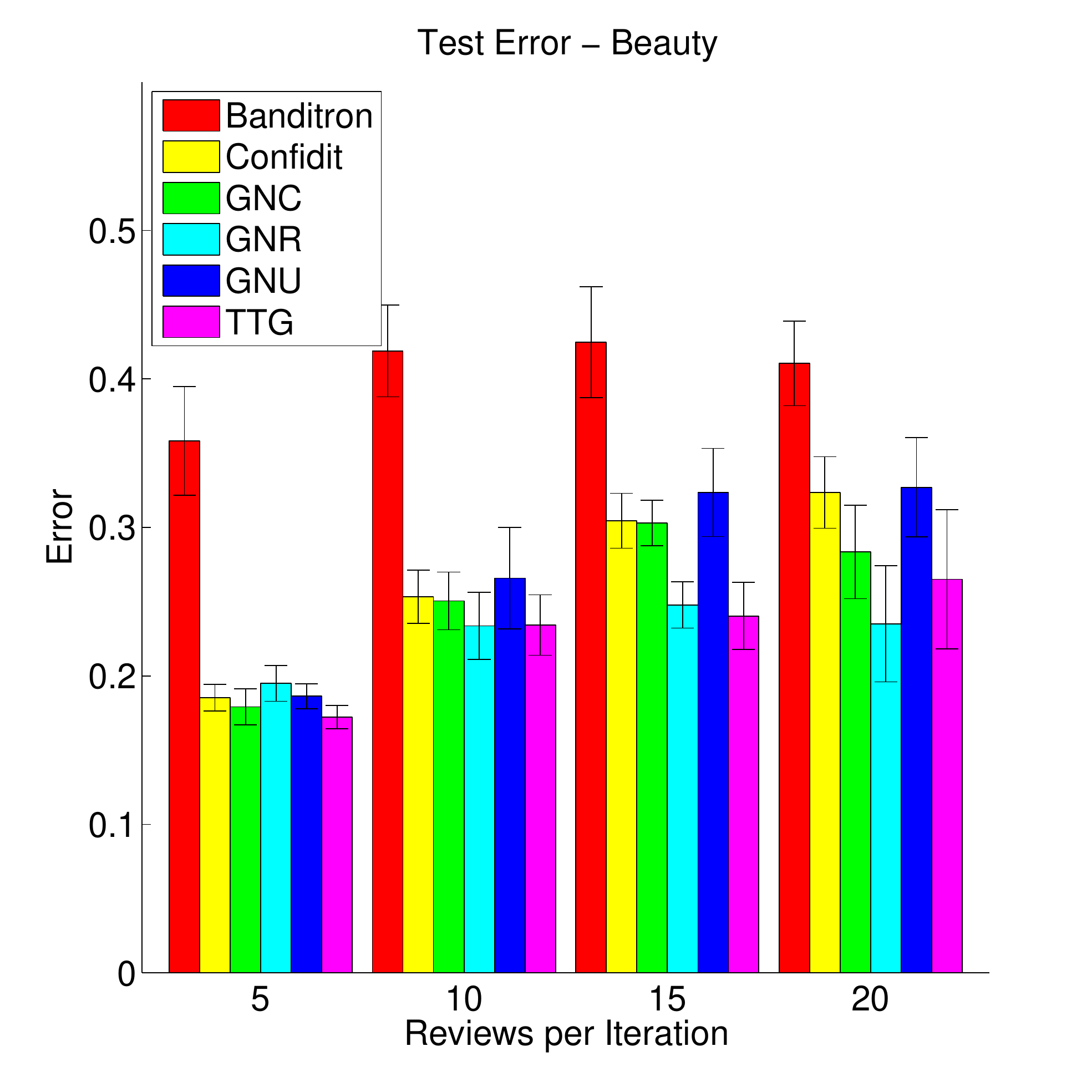}
  \includegraphics[width=0.3\textwidth]{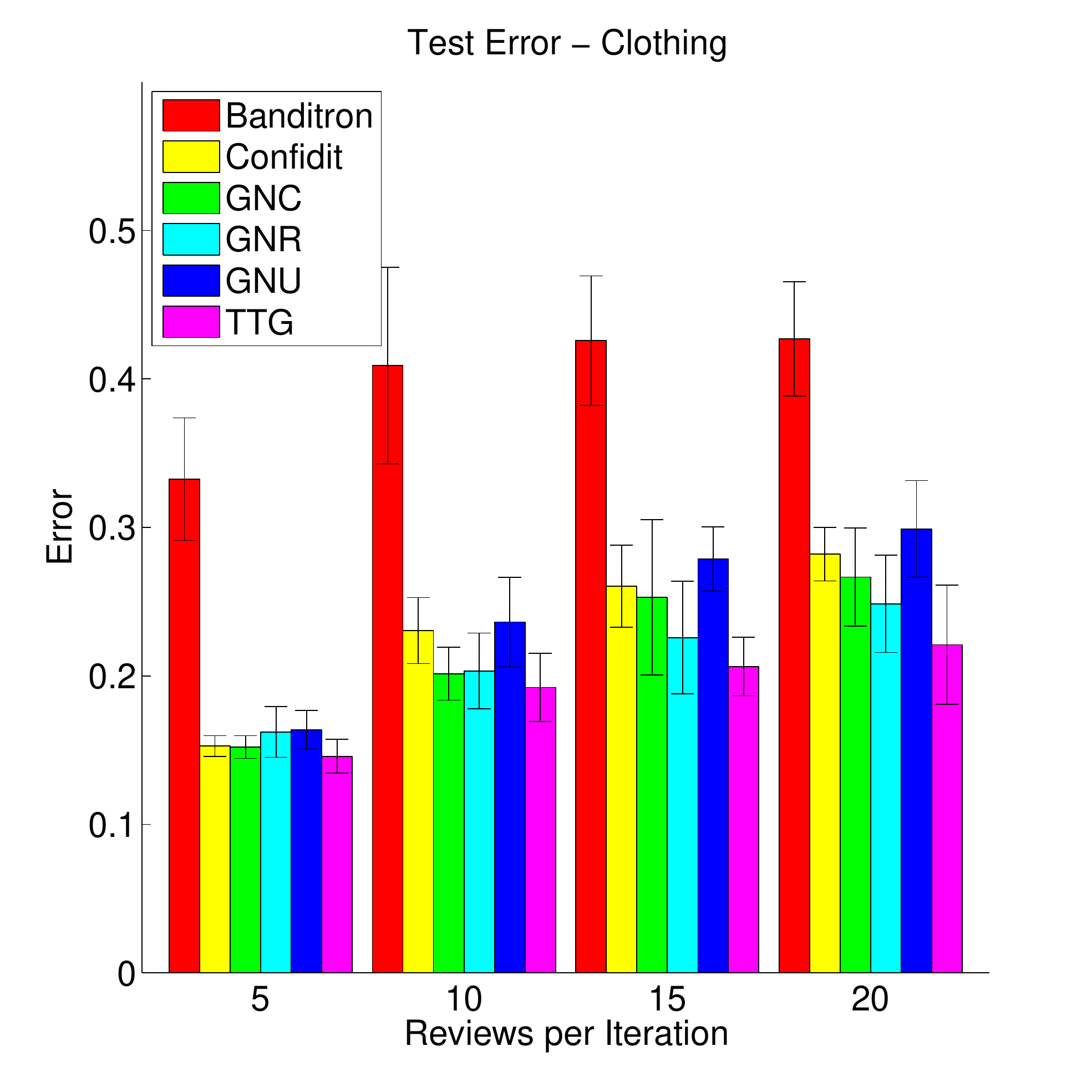}
  \includegraphics[width=0.3\textwidth]{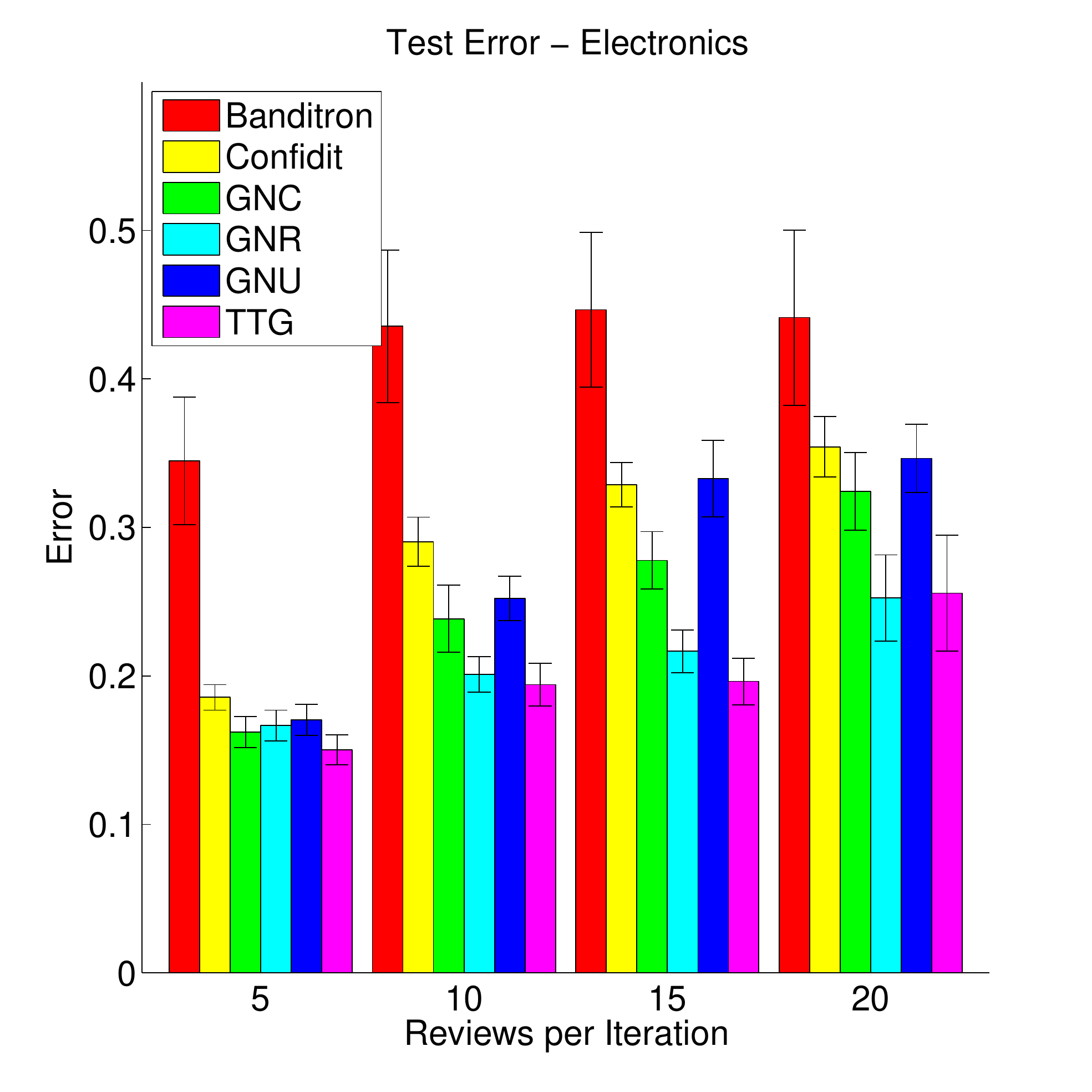}
  \includegraphics[width=0.3\textwidth]{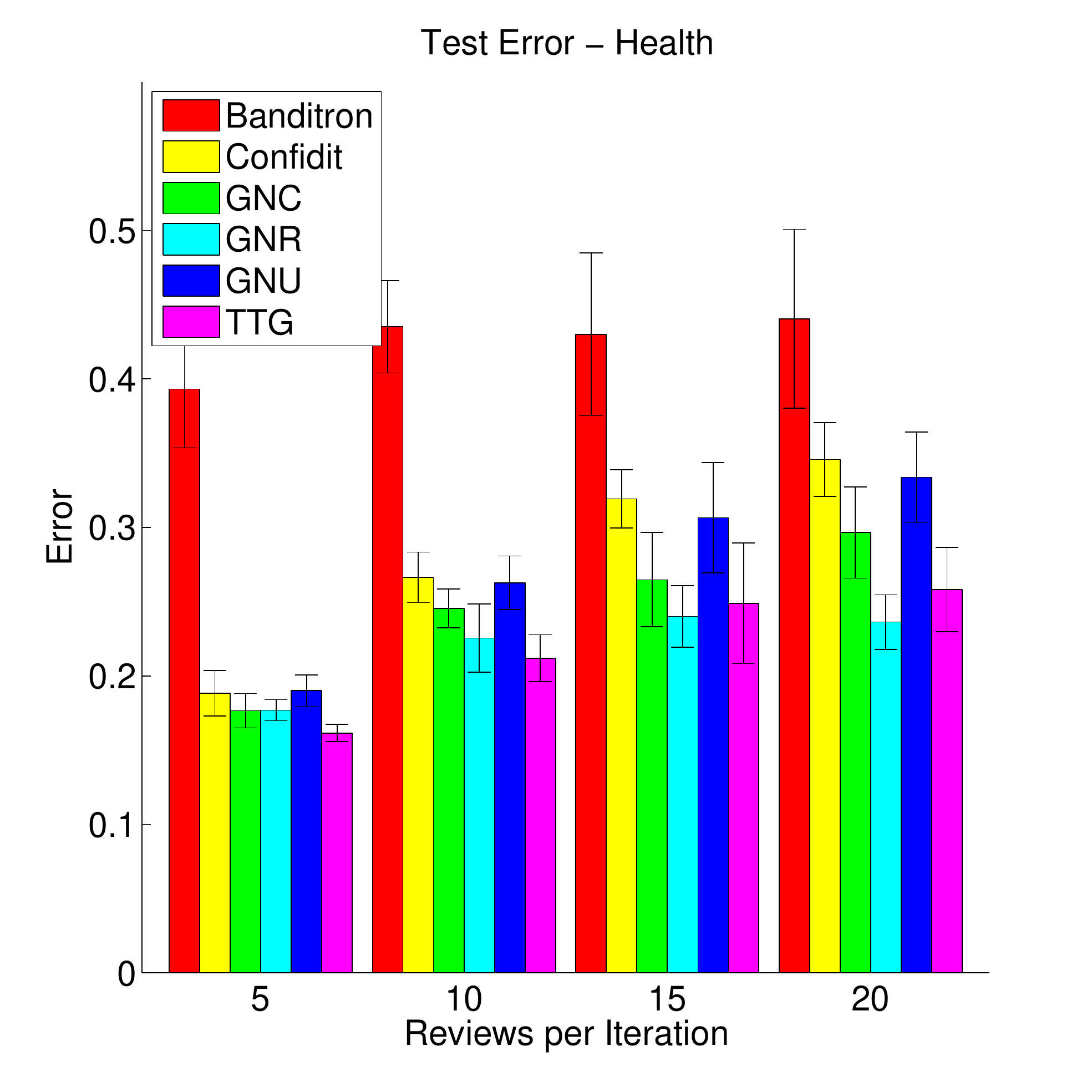}
  \includegraphics[width=0.3\textwidth]{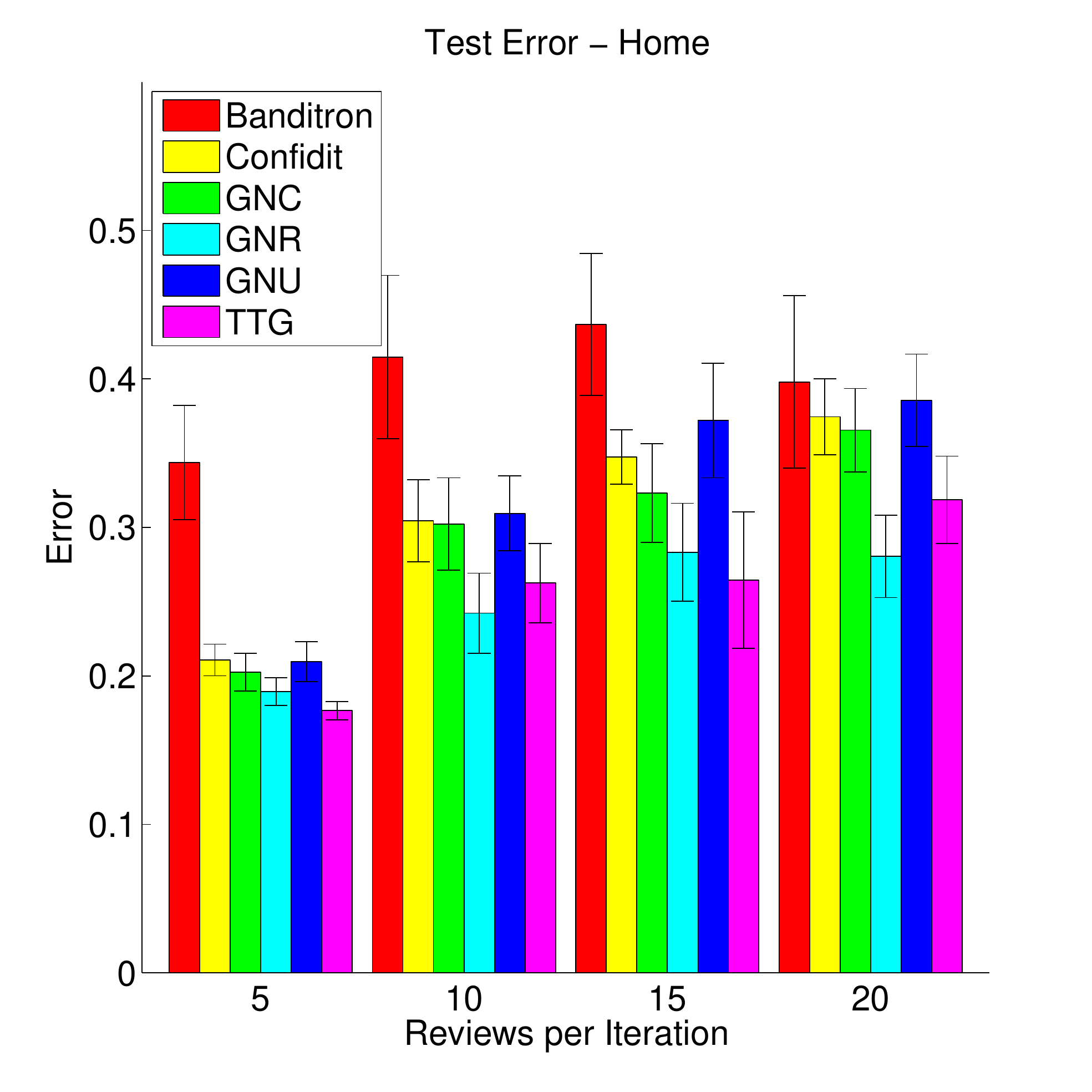}
  \includegraphics[width=0.3\textwidth]{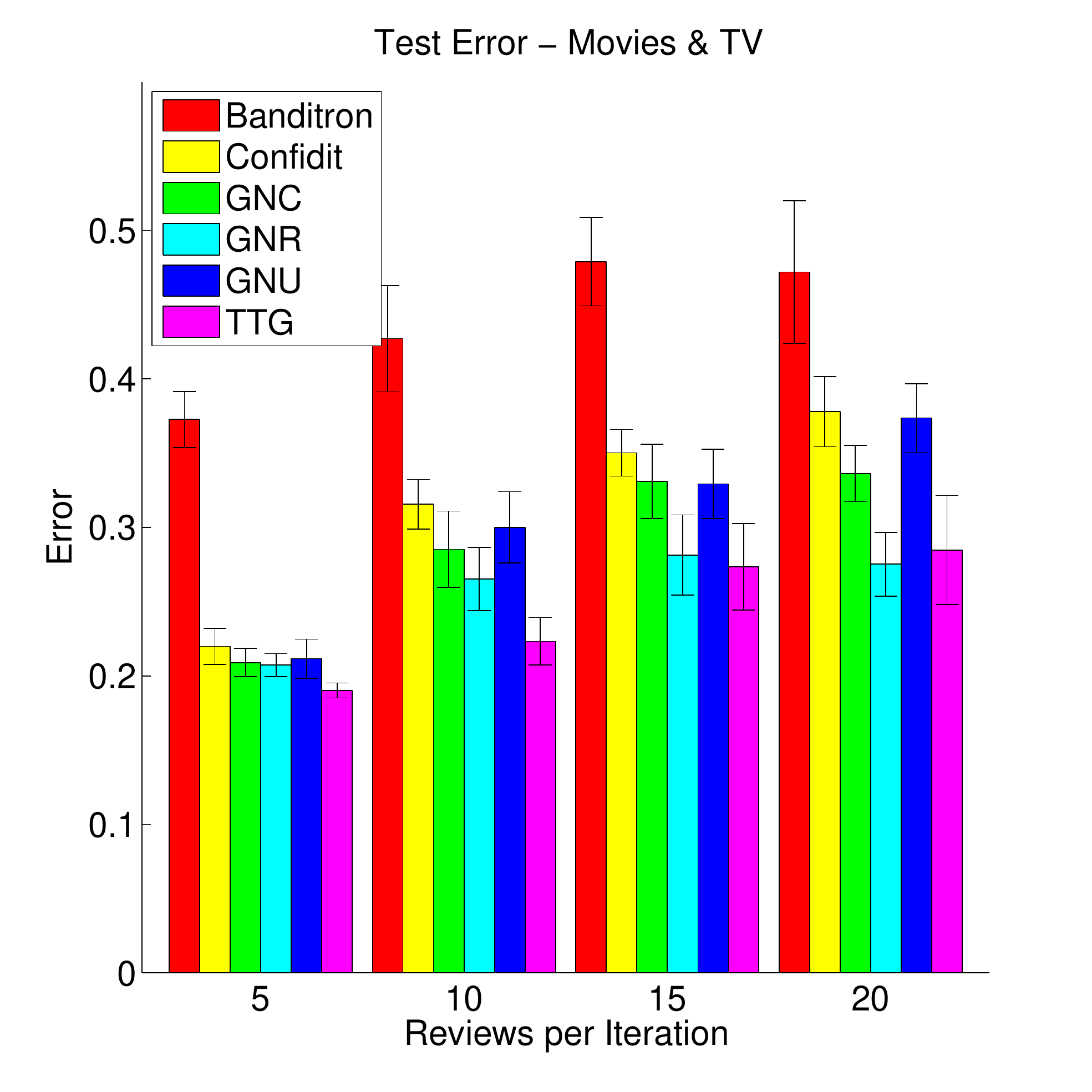}
  \includegraphics[width=0.3\textwidth]{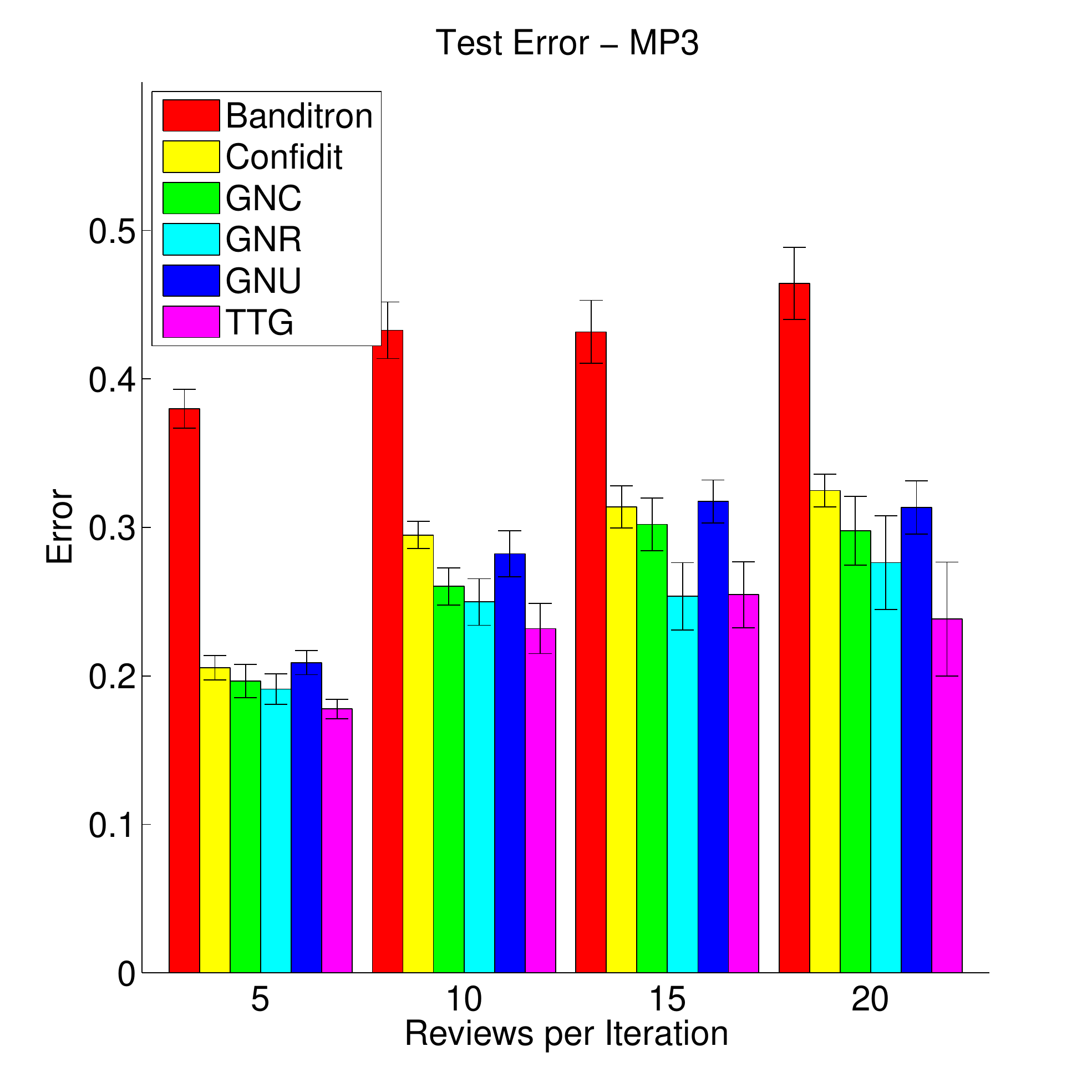}
  \includegraphics[width=0.3\textwidth]{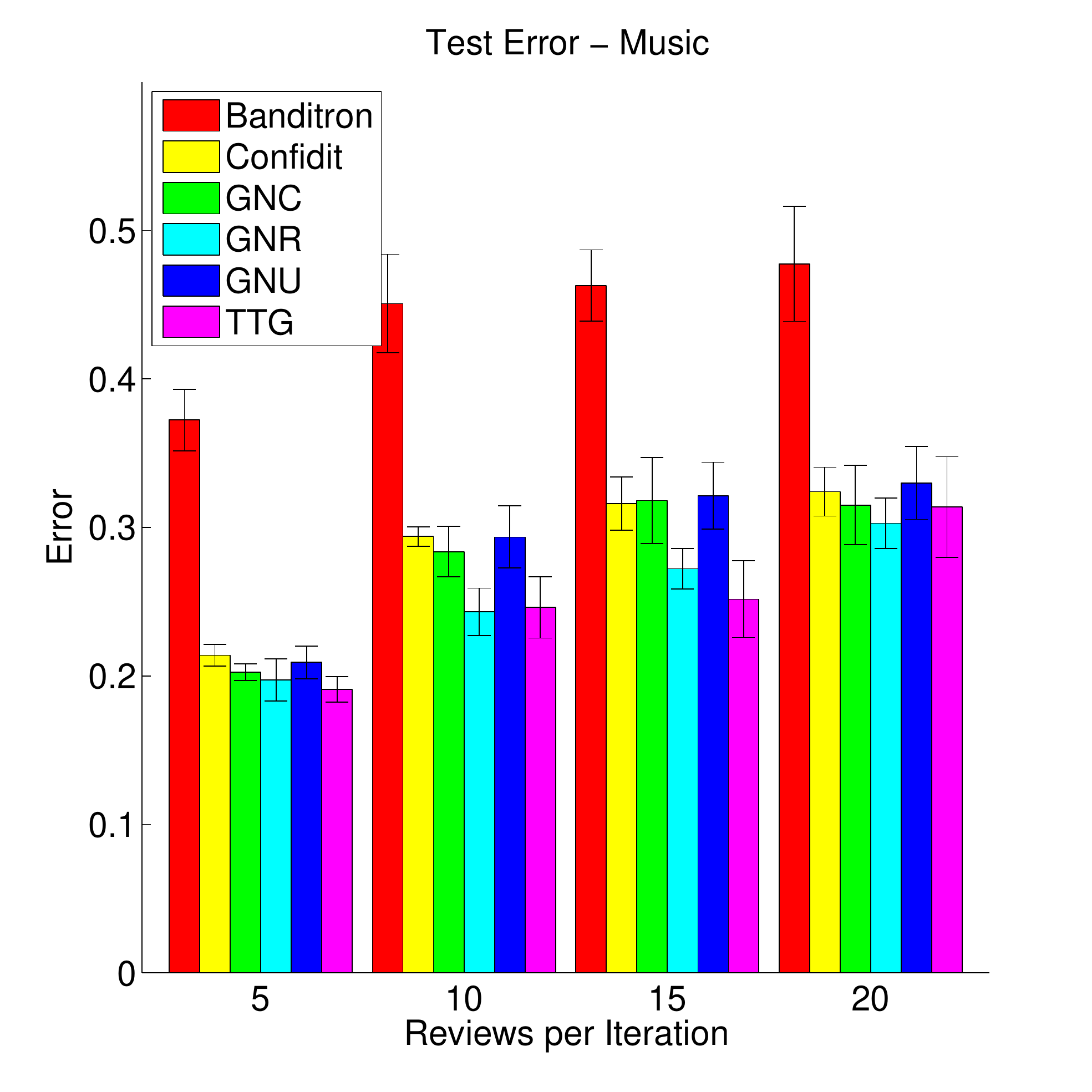}
  \includegraphics[width=0.3\textwidth]{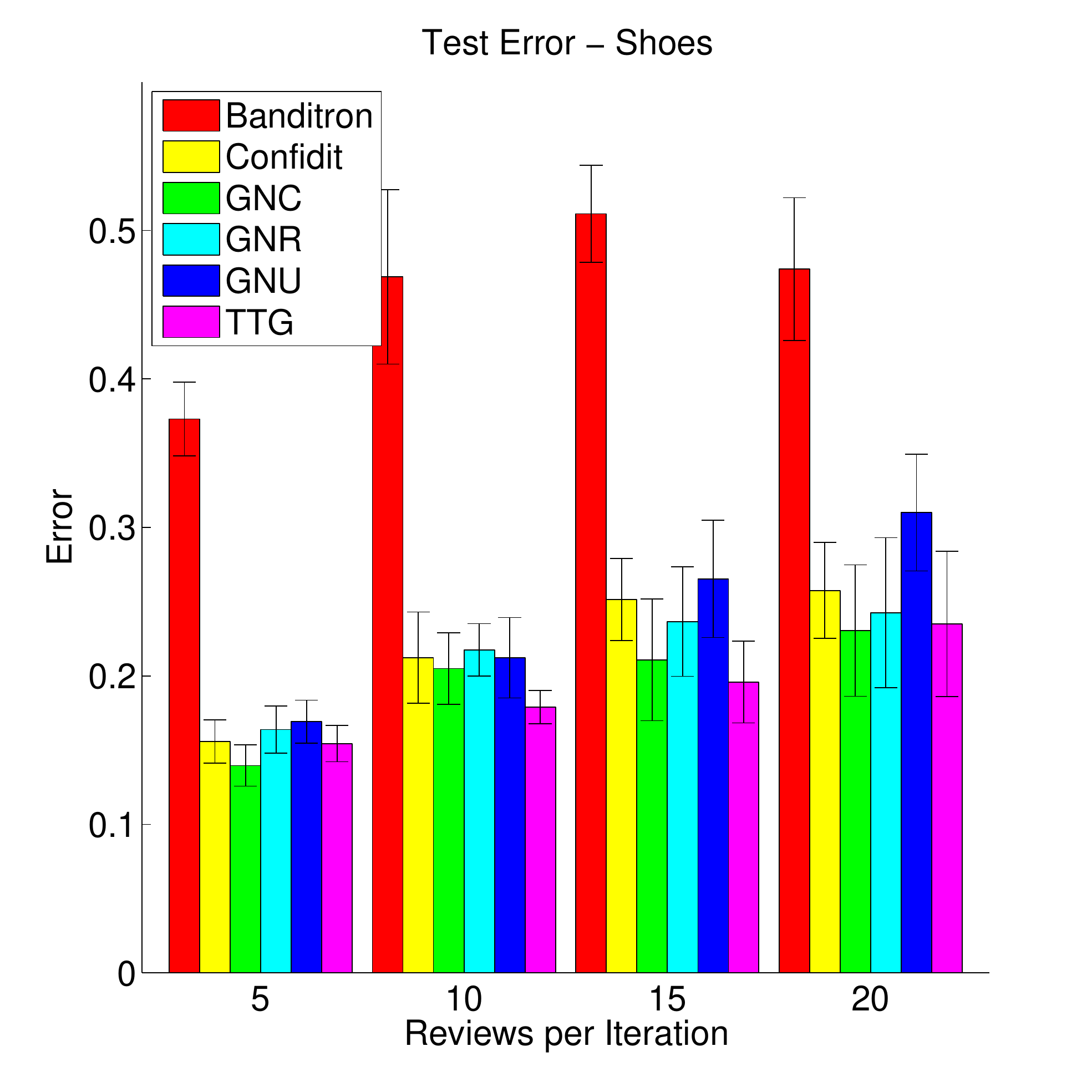}
  \includegraphics[width=0.3\textwidth]{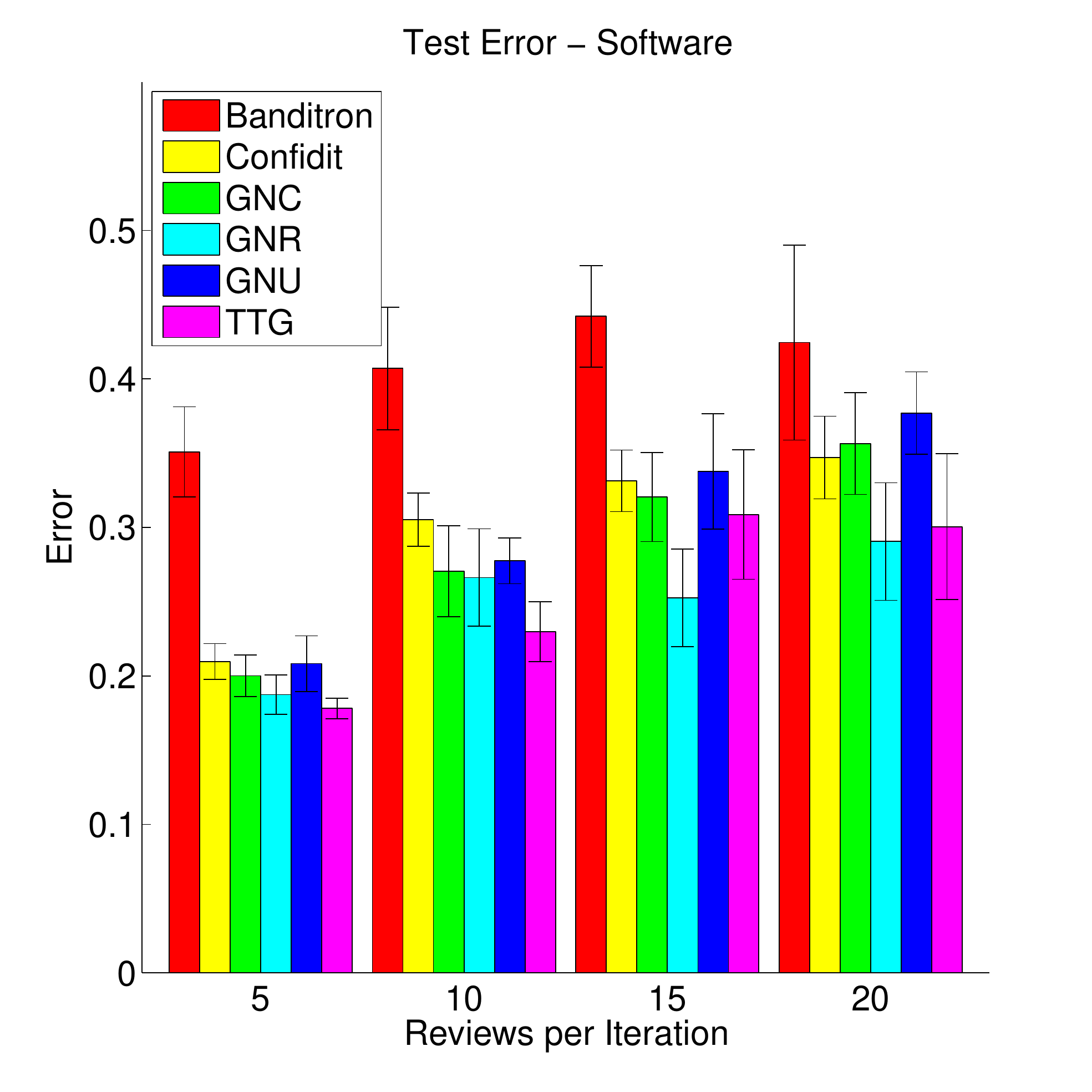}
  \includegraphics[width=0.3\textwidth]{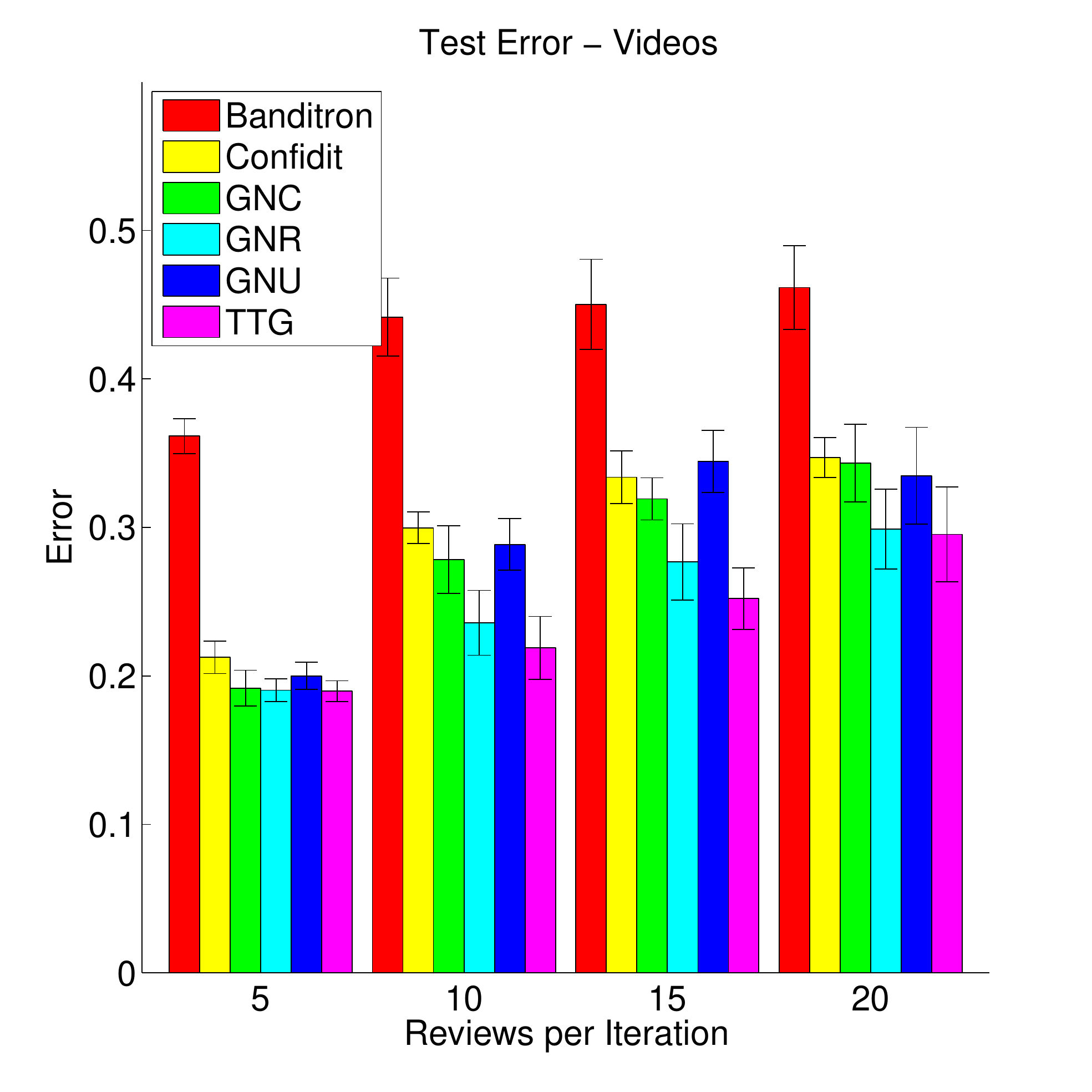}
 \caption{Average test error over Baby Products, Beauty, Clothing, Electronics, Health, Home, Movies \& TV, MP3, Music, Shoes, Software and Videos
   domains.}
 \label{fig_amazon_test2}
 \end{figure*}

\begin{figure*}[ht]
 \centering
  \includegraphics[width=0.32\textwidth]{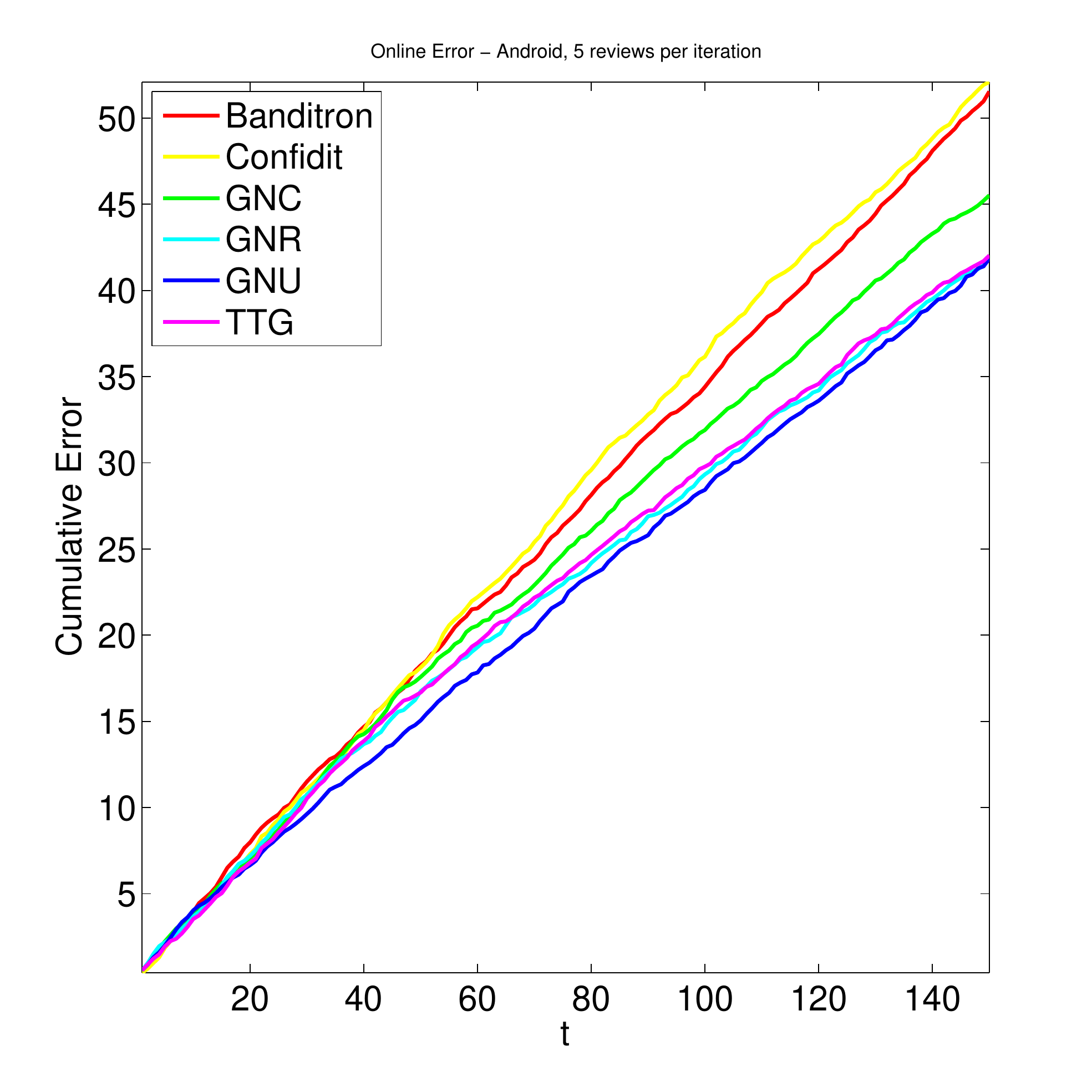}
  \includegraphics[width=0.32\textwidth]{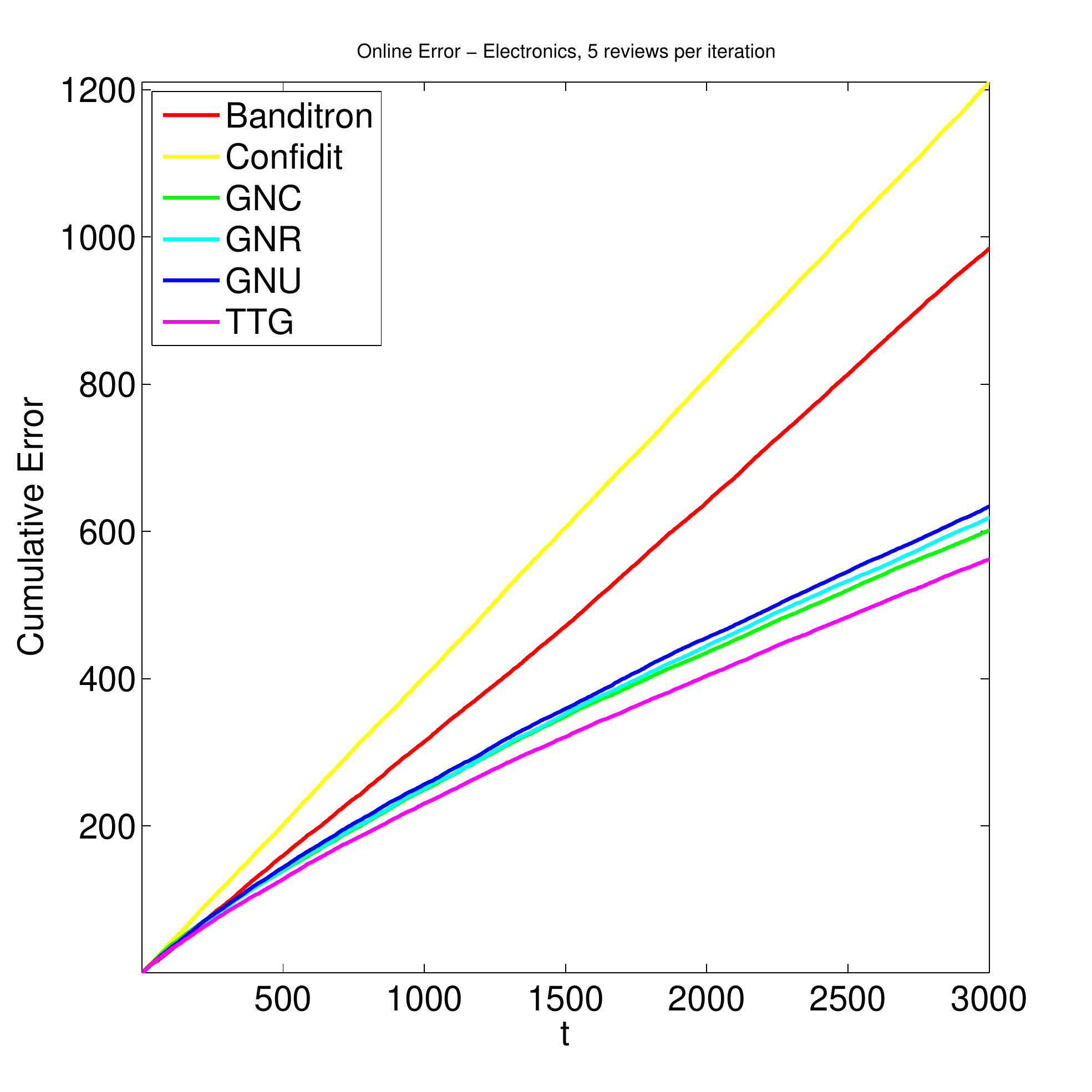}
  \includegraphics[width=0.32\textwidth]{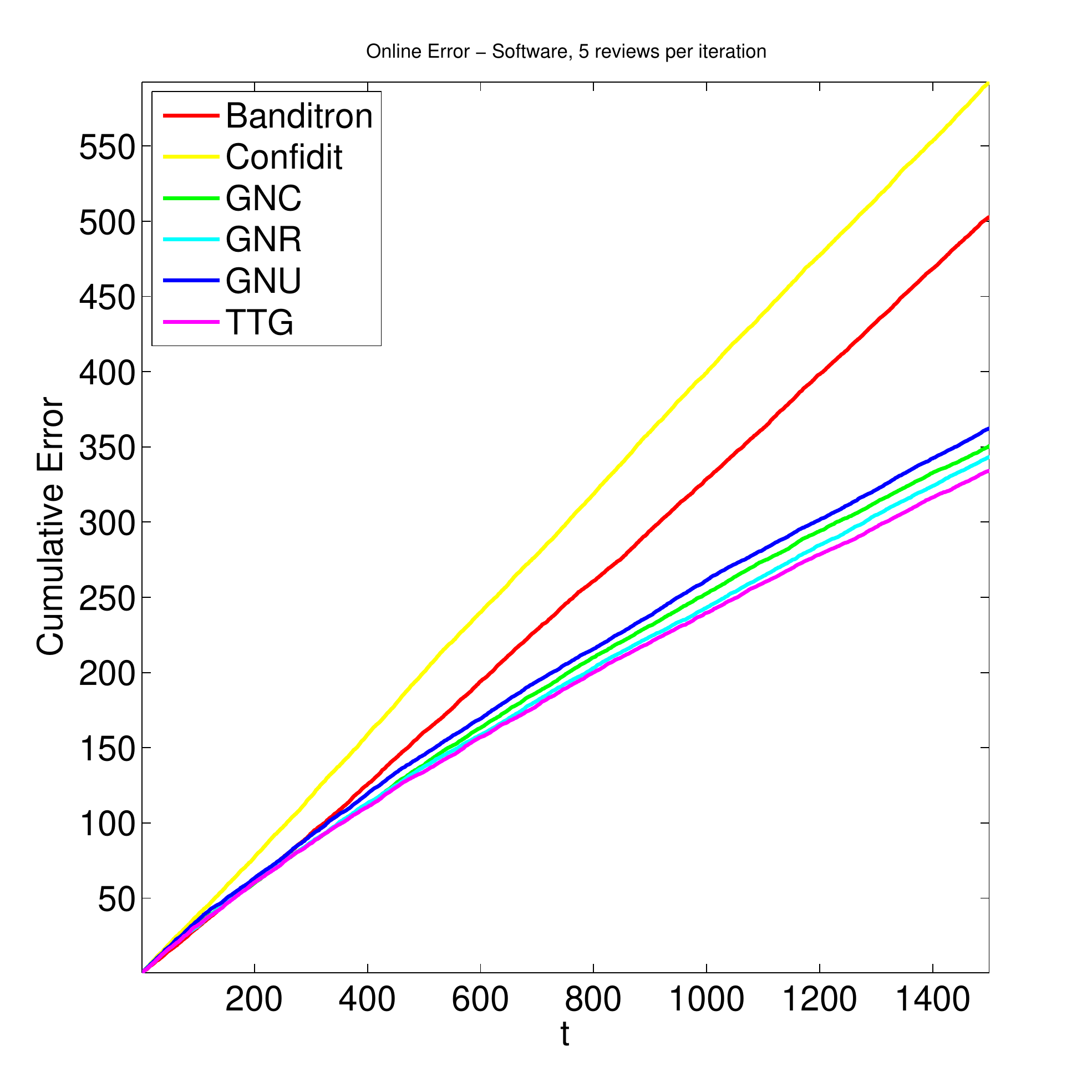}
  \includegraphics[width=0.32\textwidth]{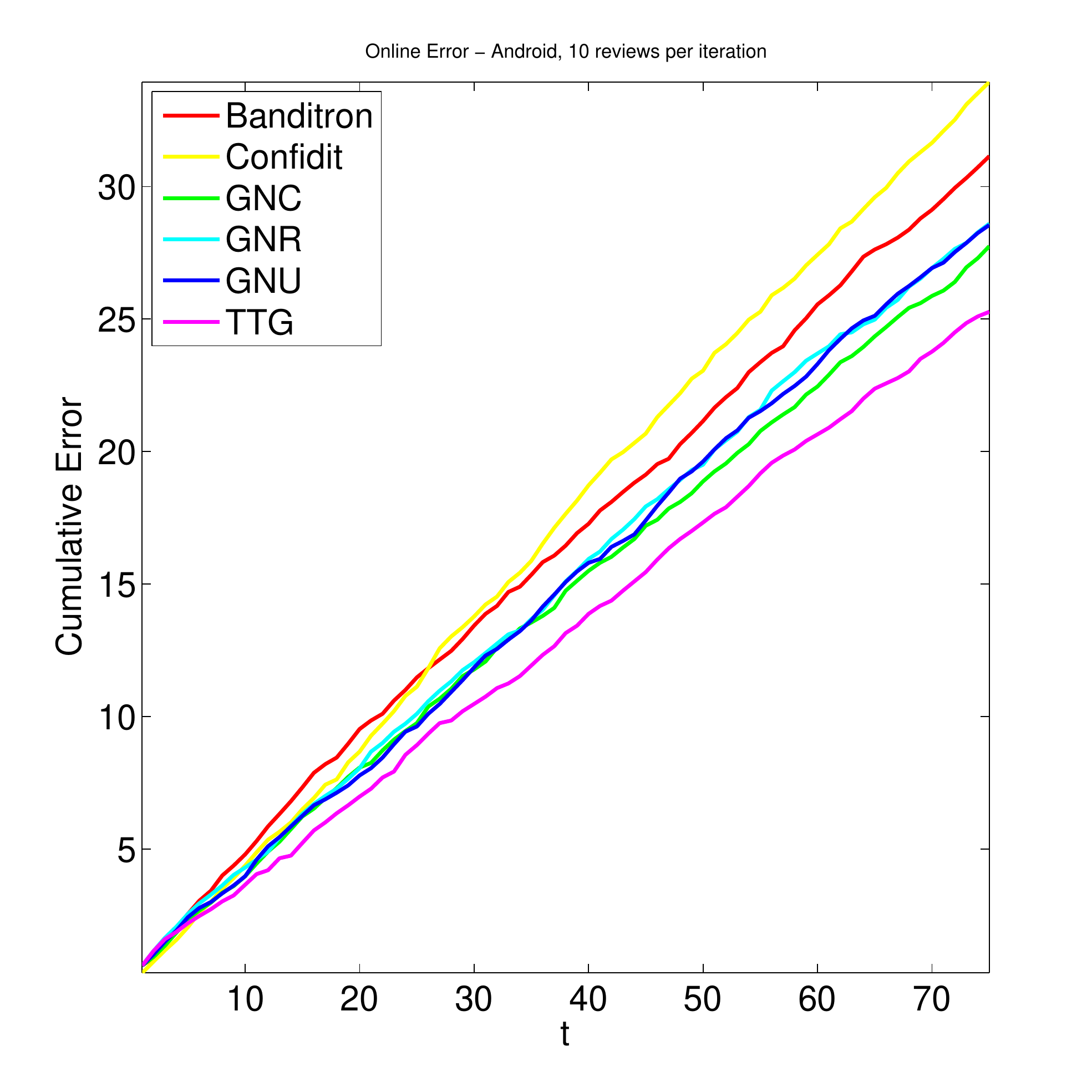}
  \includegraphics[width=0.32\textwidth]{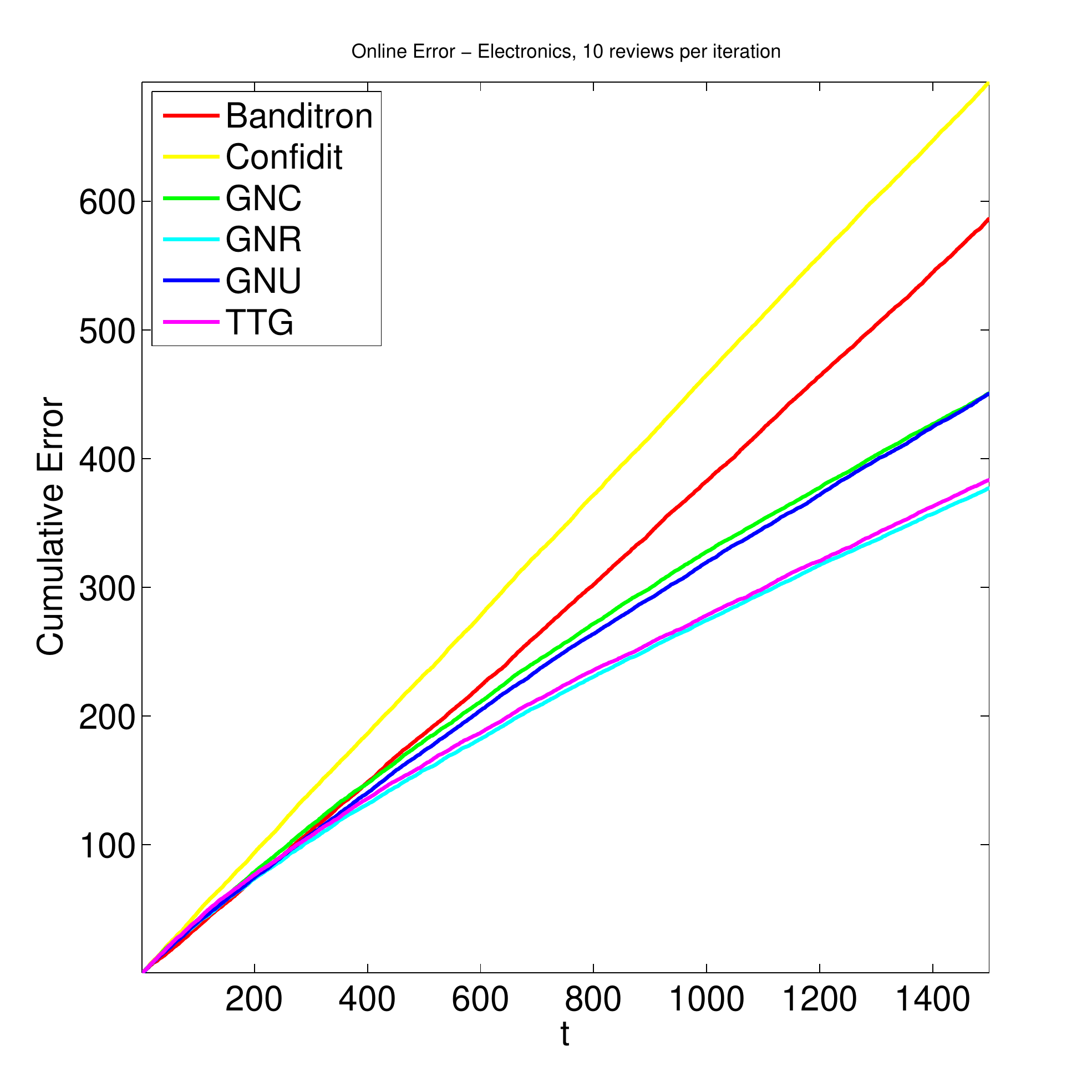}
  \includegraphics[width=0.32\textwidth]{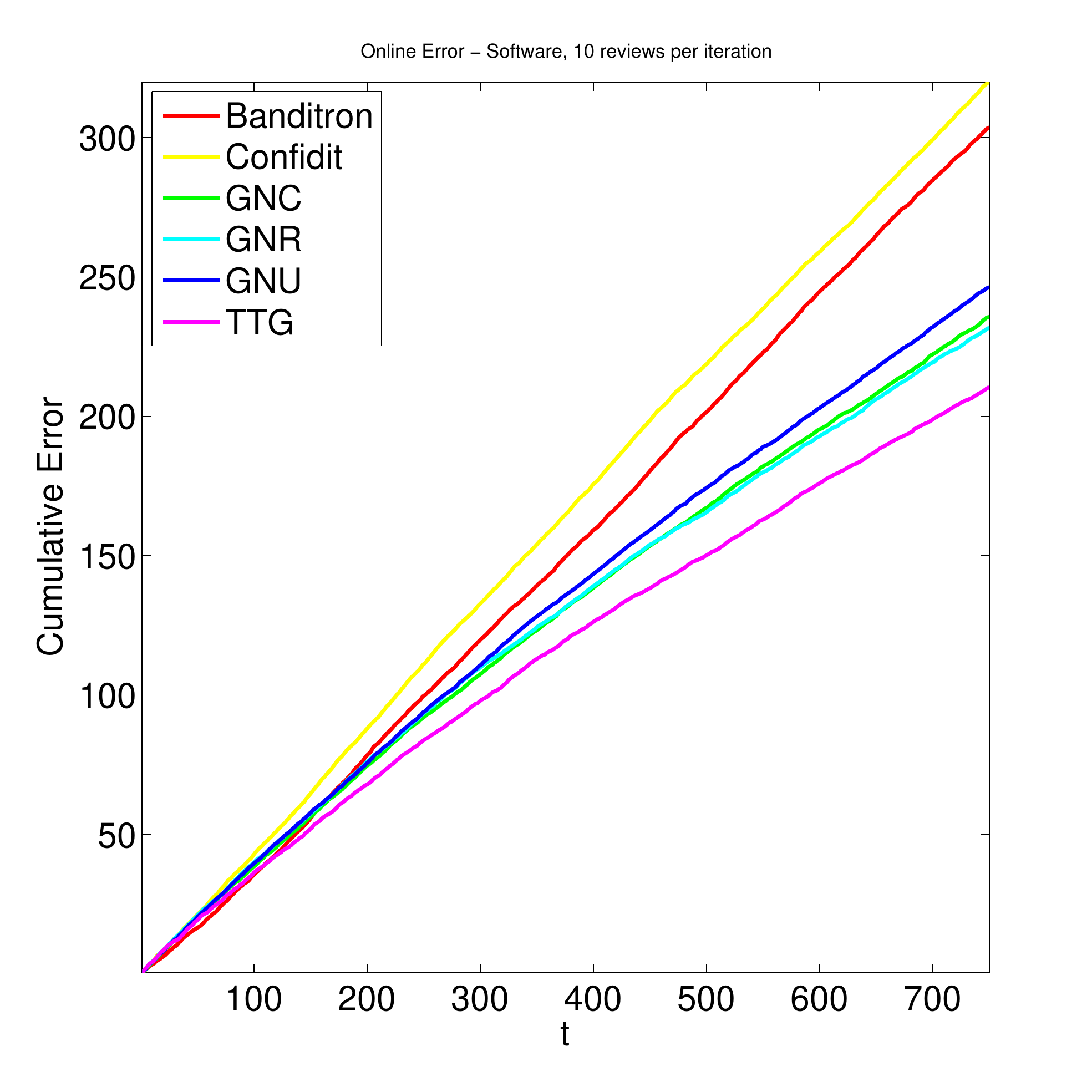}
  \includegraphics[width=0.32\textwidth]{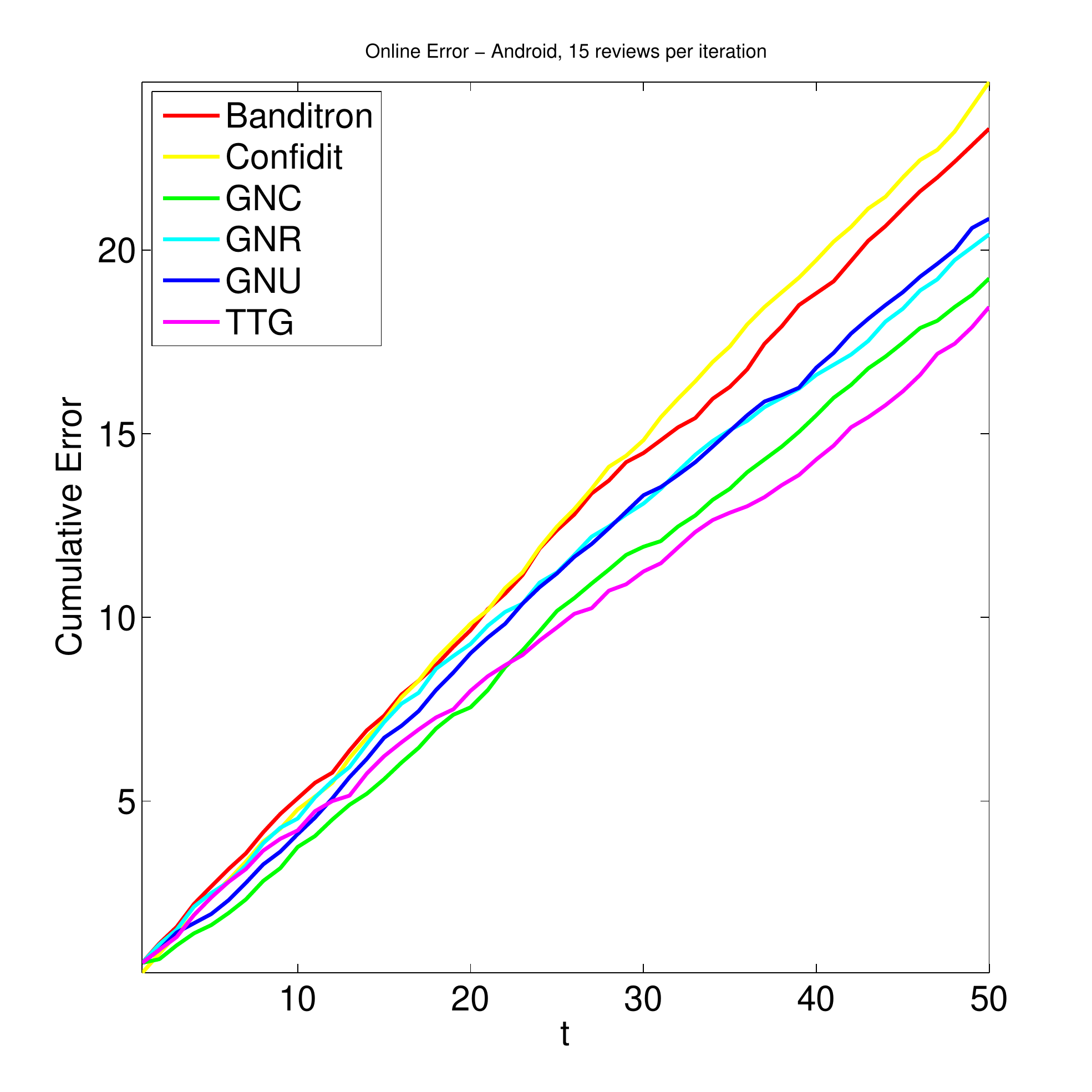}
  \includegraphics[width=0.32\textwidth]{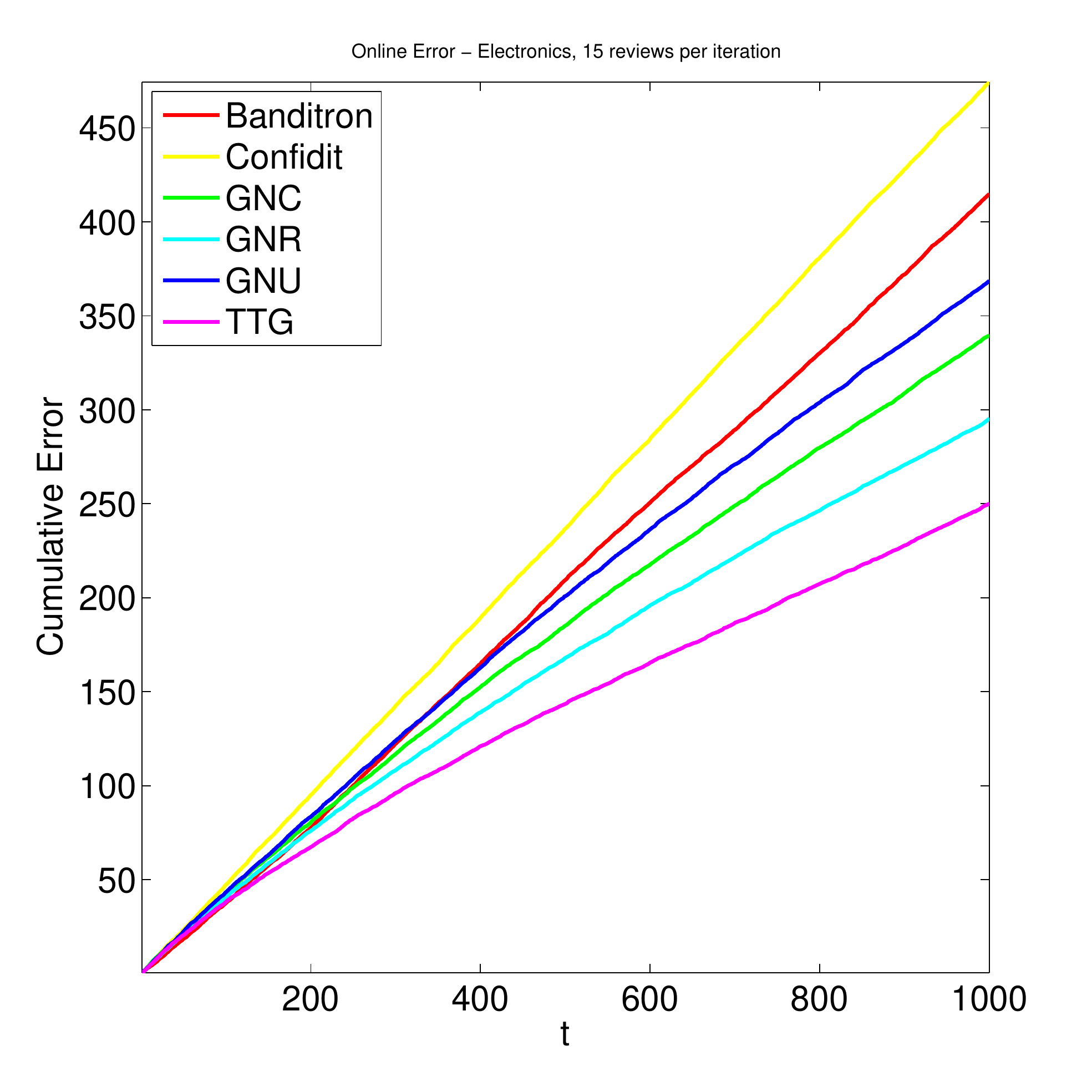}
  \includegraphics[width=0.32\textwidth]{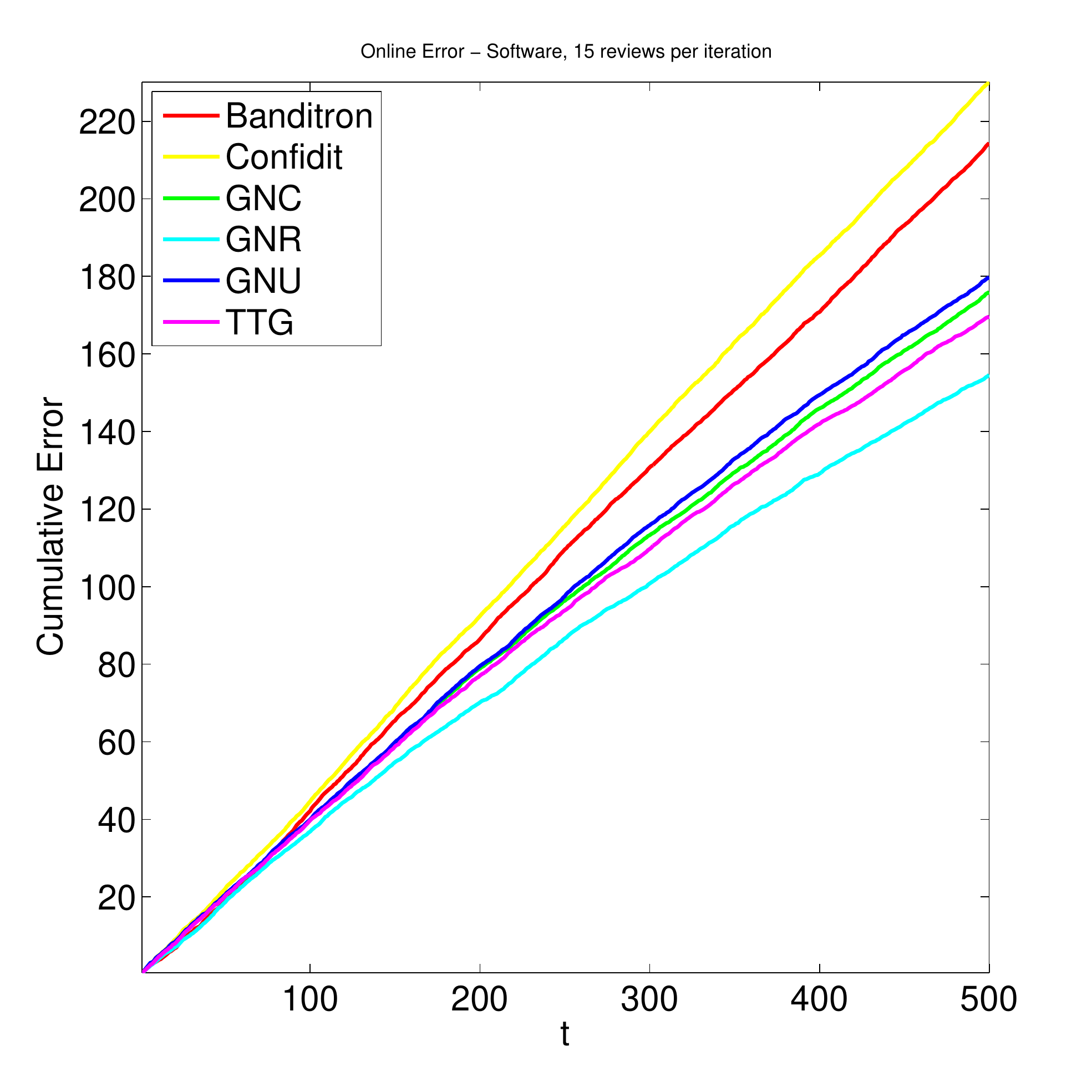}
  \includegraphics[width=0.32\textwidth]{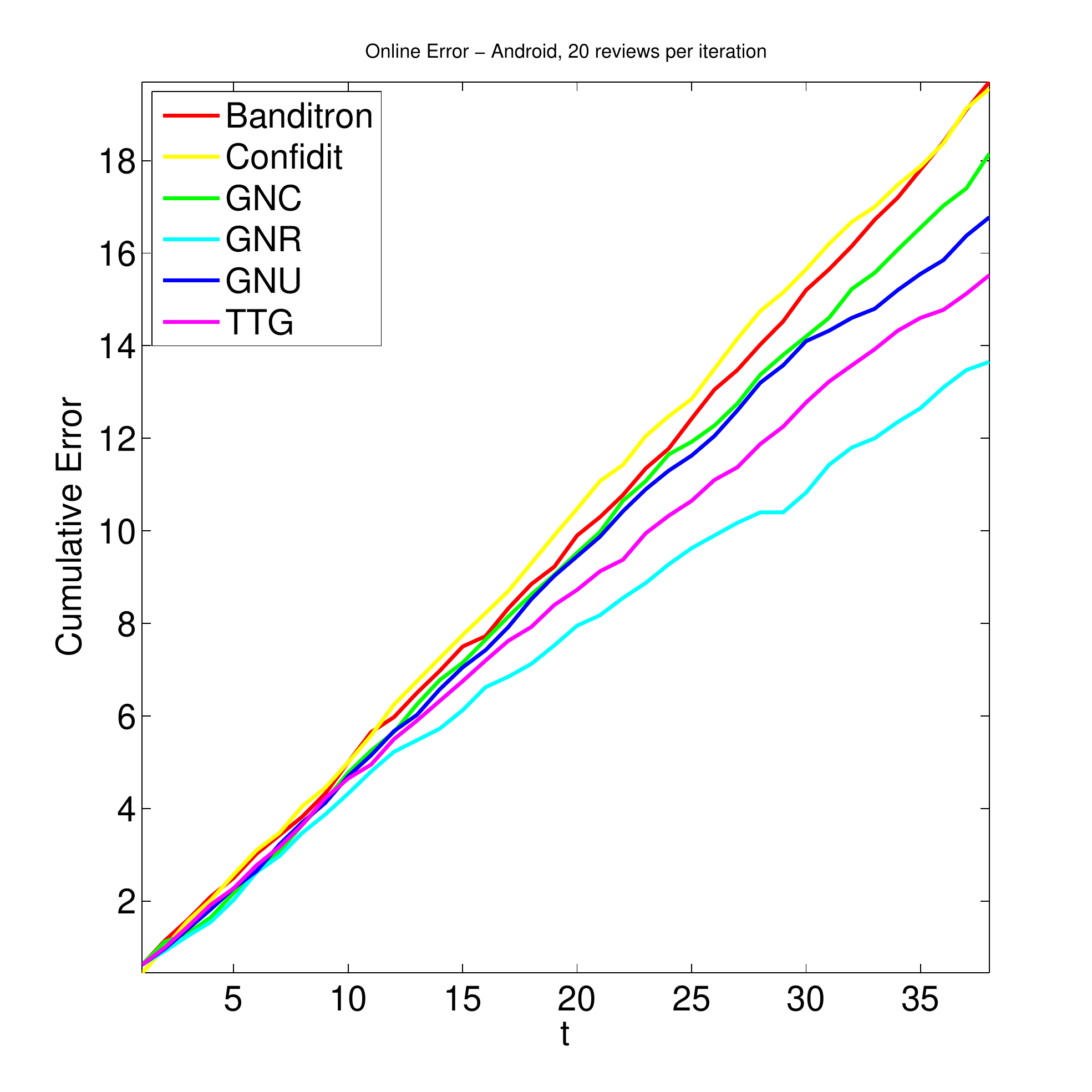}
  \includegraphics[width=0.32\textwidth]{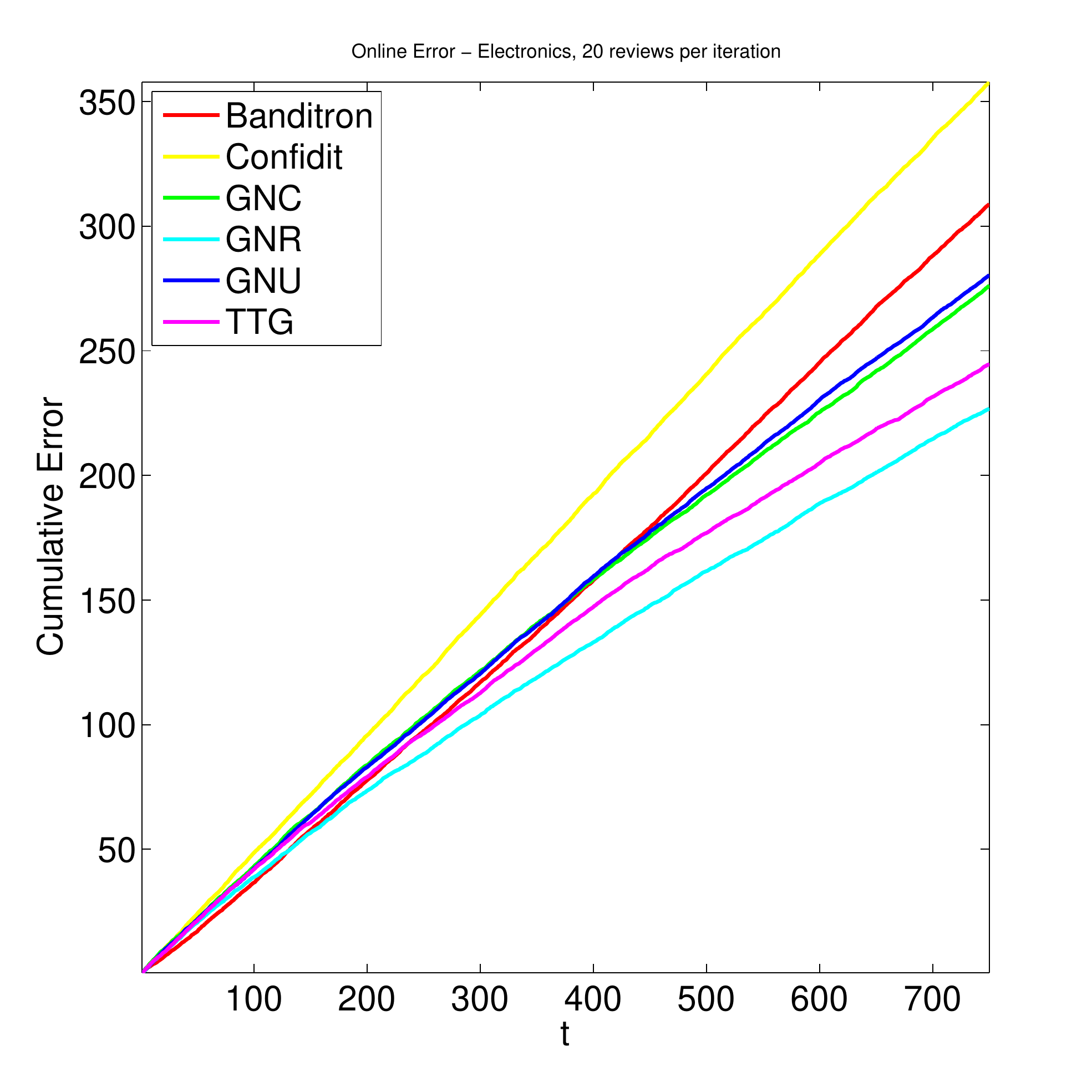}
  \includegraphics[width=0.32\textwidth]{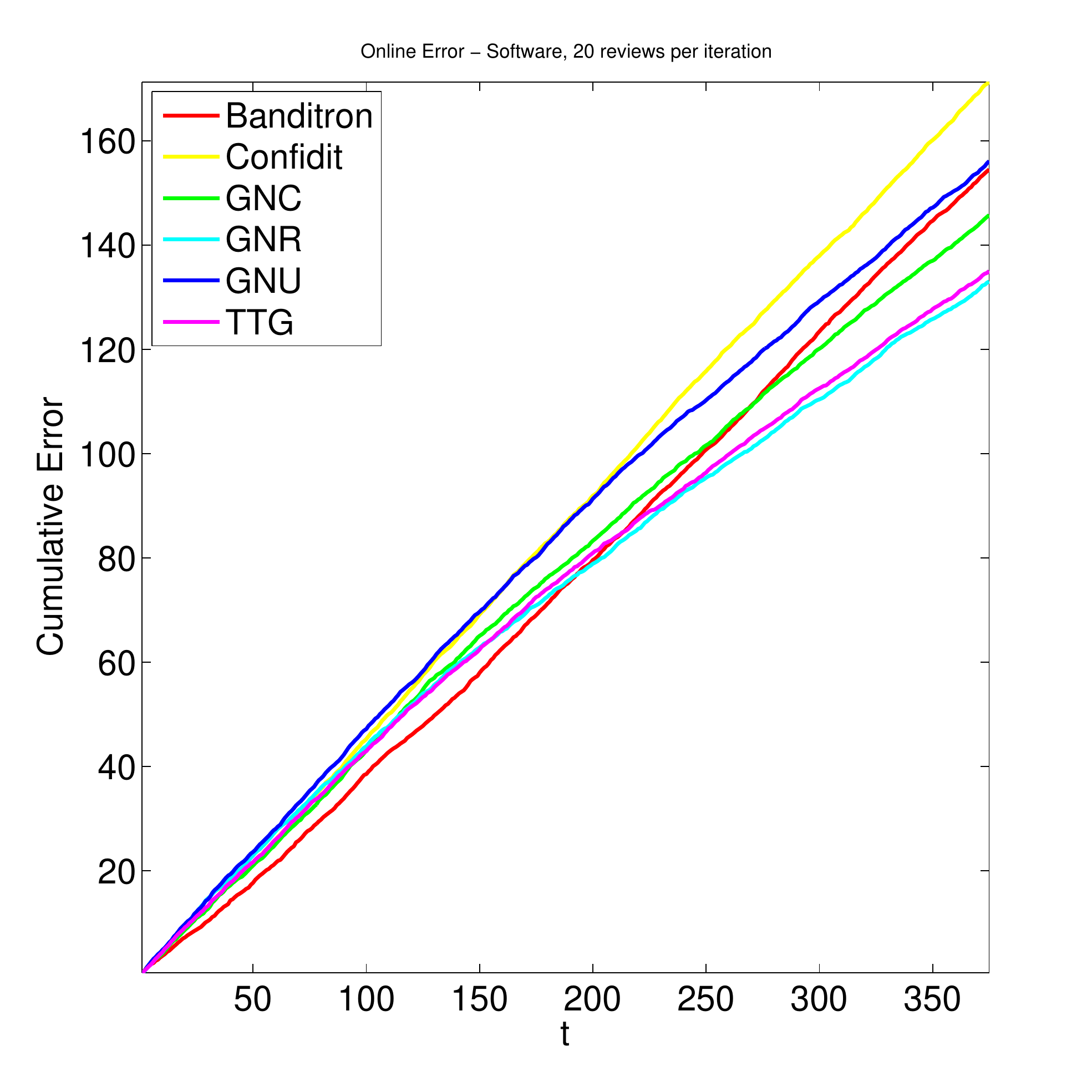}
 \caption{Cumulative online regret over Android, Electronics and Software
   domains.}
 \label{fig_amazon_train}
 \end{figure*}

\end{document}